\theoremstyle{plain}
\newtheorem{thm}{Theorem}
\newtheorem{lem}{Lemma}
\newtheorem{cor}{Corollary}
\newtheorem{asm}{Assumption}
\newtheorem{prob}{Problem}
\newtheorem{remark}{Remark}
\def\zero{\bm{0}}
\def\one{\bm{1}}
\def\a{\bm{a}}
\def\e{\bm{e}}
\def\f{\bm{f}}
\def\g{\bm{g}}
\def\n{\bm{n}}
\def\p{\bm{p}}
\def\q{\bm{q}}
\def\v{\bm{v}}
\def\w{\bm{w}}
\def\x{\bm{x}}
\def\z{\bm{z}}
\def\A{\bm{A}}
\def\B{\bm{B}}
\def\F{\bm{F}}
\def\G{\bm{G}}
\def\H{\bm{H}}
\def\I{\bm{I}}
\def\N{\bm{N}}
\def\S{\bm{S}}
\def\V{\bm{V}}
\def\W{\bm{W}}
\def\X{\bm{X}}
\def\Y{\bm{Y}}
\def\Pib{\bm{\Pi}}
\def\Sigmab{\bm{\Sigma}}
\def\FC{\mathcal{F}}
\def\GC{\mathcal{G}}
\def\AC{\mathcal{A}}
\def\BC{\mathcal{B}}
\def\Real{\mathbb{R}}
\def\equivSym{\Leftrightarrow}
\def\trans{\top}
\def\trace{\mathrm{tr}}
\def\diag{\mathrm{diag}}
\def\mmin{\mathrm{min}}
\def\mmax{\mathrm{max}}
\def\out{\mathrm{out}}
\def\OurModel{\mathsf{P}}
\def\BRRT{\mathsf{Q}}
\def\GL{\mathsf{R}}
\def\OptSol{\X_\mathrm{opt}}
\def\OptVal{\theta}
\def\SCORE{\mathrm{score}}
\def\diam{\mathrm{diam}}
\def\error{\mathrm{error}}
\newcommand{\by}[2]{\ensuremath{#1 \times #2}}
\title{Refinement of Hottopixx Method for Nonnegative Matrix Factorization Under Noisy Separability}
\author{Tomohiko~Mizutani
\thanks{Department of Mathematical and Systems Engineering,
Shizuoka University,
3-5-1 Johoku, Naka, Hamamatsu, 432-8561, Japan.
{\tt mizutani.t@shizuoka.ac.jp}}}
\date{\today}
\begin{document}

\maketitle

\begin{abstract}
Hottopixx, proposed by Bittorf et al.\ at NIPS 2012, 
is an algorithm for solving nonnegative matrix factorization (NMF) problems under the separability assumption.
Separable NMFs have important applications, such as topic extraction from documents and unmixing of hyperspectral images.
In such applications,  the robustness of the algorithm to noise is the key to the success.
Hottopixx has been shown to be robust to noise, 
and its robustness can be further enhanced through postprocessing.
However, there is a drawback.
Hottopixx and its postprocessing require us to estimate the noise level involved in the matrix we want to factorize before running,
since they use it as part of the input data.
The noise-level estimation is not an easy task.
In this paper, we overcome this drawback.  We present a refinement of Hottopixx and its postprocessing
that runs without prior knowledge of the noise level.
We show that the refinement has almost the same robustness to noise as the original algorithm.

\bigskip \noindent
{\bfseries Keywords:} nonnegative matrix factorization, separability, robustness to noise, linear programming
\end{abstract}

\section{Introduction} \label{Sec: Introduction}
Let $\Real_+^{d \times n}$ denote the set of all nonnegative matrices of size \by{d}{n}.
We are given $\V \in \Real_+^{d \times n}$ and the factorization rank $r$.
The nonnegative matrix factorization (NMF) problem asks us to find the factors 
$\W \in \Real_+^{d \times r}$ and $\H \in \Real_+^{r \times n}$ of $\V$ minimizing the gap between $\V$ and the product $\W\H$.
NMFs have many applications in diverse fields and thus have drawn the attention of researchers and practitioners.
The problem is that the computation is intractable;
it was shown to be NP-hard by Vavasis \cite{Vav09}.

Arora et al.\ \cite{Aro12a} further investigated the complexity of the NMF problem.
They proposed to use an assumption, called separability, to remedy the issue.
The notion of separability was originally introduced by 
Donoho and Stodden in \cite{Don03} as a way of discussing the uniqueness of NMFs.
Arora et al.\ showed that,  if we place the separability assumption on the input matrix $\V$, 
then  the NMF problem turns out to be tractable; we can find the factors $\W$ and $\H$ without much effort such that $\V=\W\H$.
Let us say that a matrix is separable if it satisfies the separability assumption. 
The application range of separable NMFs is restricted in comparison to NMFs,
but they have still important applications, such as topic extraction from documents \cite{Aro12b, Aro13}
and unmixing of hyperspectral images \cite{Gil14b, Ma14}.
Other applications can be found in \cite{Fu19, Gil20}.
So far, several algorithms have been developed for solving separable NMF problems.
Separable matrices arising from applications should be perturbed by noise.
Hence, it is desirable that an algorithm is robust against noise; 
even if noise is added to a separable matrix, 
the algorithm should be able to find the factors whose product well approximates the noisy separable matrix.

Bittorf et al.\ \cite{Bit12} proposed an algorithm, referred to as Hottopixx, for separable NMF problems.
Their development is based on the observation that a certain feature of a separable matrix can be captured using linear programming (LP),
and an optimal solution of the LP serves as a guide for solving the separable NMF problem.
They showed that Hottopixx is robust to noise.
Their result needs somewhat strong assumption.
Roughly speaking, they assume that the columns of the separable matrix do not overlap.
The assumption is not reasonable when dealing with applications such as topic extraction from documents and unmixing of hyperspectral images.
Gillis \cite{Gil13} pointed out this issue and suggested a resolution.
He developed postprocessing for Hottopixx and showed that 
with it Hottopixx is robust to noise without the assumption Bittorf et al.\ \cite{Bit12} put.

There is a drawback with Hottopixx and its postprocessing.
They require three input data: a noisy separable matrix, the factorization rank, and the noise level.
In the applications we mentioned above, 
we often encounter the situation in which  the factorization rank can be estimated in advance.
Meanwhile, it is unlikely that the noise level can be estimated in advance;
we thus need to estimate it and its estimation is not an easy task.
For that reason, most of the algorithms for solving separable NMF problems,
such as VCA \cite{Nas05}, SPA \cite{Gil14b}, SNPA \cite{Gil14c} and ER \cite{Miz14},
are designed to receive two input data: a noisy separable matrix and the factorization rank.
Several drawbacks of Hottopixx are listed by Gillis and Luce in \cite{Gil14a} 
and the drawback we mentioned above is one of them.

The main contribution of this paper is to overcome the drawback.
We present a refinement of Hottopixx and its postprocessing
that takes a noisy separable matrix and the factorization rank as input, 
but does not need prior knowledge of the noise level.
We show that the refinement has almost the same robustness to noise as the original algorithm.
The results are summarized 
in Theorems \ref{Thm: Robustness of refined algo} and \ref{Thm: Robustness of refined algo with postprocessing} 
of Section \ref{Sec: Main results}.
In addition, we demonstrate in experiments the effectiveness of our refinement.

This paper is organized as follows.
In Section \ref{Sec: Problem and preliminaries}, we formulate the separable NMF problem 
and explain the assumption and parameters used in our analysis.
Section \ref{Sec: Main results} presents the main results and compares them with the results of previous studies.
Sections \ref{Sec: Refinement of Hottopixx} and \ref{Sec: Refinement of Hottopixx with postprocessing} 
describe the proposed algorithms and examine their robustness to noise;
the refinement of Hottopixx is in Section \ref{Sec: Refinement of Hottopixx} and 
the refinement of postprocessing is in Section \ref{Sec: Refinement of Hottopixx with postprocessing}.
Section \ref{Sec: Experiments} describes experiments.

\subsection{Notation and Symbols}
We write $\zero$ for a vector of all zeros, $\one$ for a vector of all ones, $\e_i$ for the $i$th unit vector, 
and $\I$ for the identity matrix.
The symbol $\zero$ is also used for a matrix of all zeros; 
in particular, $\zero_{m \times n}$ for an $\by{m}{n}$ matrix of all zeros.

The notation $\a(i)$ denotes the $i$th element of $\a \in \Real^n$.
Let $\A \in \Real^{m \times n}$.
The rows, columns and elements are denoted as follows:
$\A(i,:)$ for the $i$th row,  $\A(:,j)$ or $\a_j$ for the $j$th column, and $\A(i,j)$ for the $(i,j)$th element. 
Let $I \subset \{1, \ldots m\}$ and $J \subset \{1, \ldots, n\}$.
The notation $\A(I, :)$ denotes the submatrix obtained by eliminating  rows $\A(i,:)$ for all indices $i$ in the complement of $I$,
and $\A(:, J)$ that by eliminating columns $\A(:, j)$ for all indices $j$ in the complement of $J$.

The notation $\| \cdot \|_p$ denotes the $L_p$ norm of a vector or a matrix,
$\| \cdot \|_F $ the Frobenius norm of a matrix,
$\trace(\cdot)$ the trace of a square matrix, and 
$\diag(\cdot)$ a vector composed of diagonal elements of a square matrix.
i.e., $\diag(\B) = [\B(1,1), \ldots, \B(n,n)]^\trans$ for $\B \in \Real^{n \times n}$.
For positive integers $r$ and $n$,
the symbol $R$ denotes the set of consecutive integers from $1$ to $r$, and $N$ that from $1$ to $n$.
For $S \subset N$, we denote by $S^c$ the complement of $S$.
For $a,b \in \Real$ with $a < b$, the notation $(a,b)$ denotes the open interval $\{x \in \Real: a < x < b\}$, 
and $[a, b]$ the closed interval $\{x \in \Real: a \le x \le b\}$.

\section{Problem and Preliminaries} \label{Sec: Problem and preliminaries}

Let $\V \in \Real_+^{d \times n}$ have an exact NMF $\V = \W\H$ for $\W \in \Real_+^{d \times r}$ and $\H \in \Real_+^{r \times n}$.
\emph{Separability} assumes that it can be further written as 
\begin{align}
\V = \W \H \ \mbox{for} \ \W \in \Real_+^{d \times r} \ \mbox{and} \ \H = [\I, \bar{\H}]\Pib \in \Real_+^{r \times n}
 \label{Exp: SepNMF}
\end{align}
where $\I$ is an \by{r}{r} identity matrix, $\bar{\H}$ is an \by{r}{(n-r)} nonnegative matrix, and $\Pib$ is an \by{n}{n} permutation matrix.
When a nonnegative matrix is written in the form shown in (\ref{Exp: SepNMF}), 
we say that it is \emph{$r$-separable} or  simply \emph{separable}.
Separability means that all columns of $\W$ appear in those of $\V$;
that is, there is a map $\phi : R \rightarrow N$ such that $\w_j = \v_{\phi(j)}$ for each $j = 1, \ldots, r$.
We call the matrix $[\v_{\phi(1)}, \ldots, \v_{\phi(r)}]$, 
which is equivalent to $\W$, the \emph{basis} of $\V$: 
in particular, $\v_{\phi(j)}$ is the \emph{basis column} and $\phi(j)$ the \emph{basis index}.
We call $r$ the \emph{factorization rank} of $\V$.
We formulate the separable NMF problem as follows:
\begin{prob}
 Given a separable matrix $\V$ and factorization rank $r$, find the basis of $\V$. 
\end{prob}
Separable matrices arising from applications would contain noise.
\emph{Noisy separability} assumes that $\N \in \Real^{d \times n}$ is added to a separable matrix $\V \in \Real_+^{d \times n}$ 
such that 
\begin{align*}
 \A = \V + \N.
\end{align*}
We call $\N$ the \emph{noise} added to the separable matrix $\V$.
If a matrix is in the form above, we say that it is \emph{noisy separable}.
When dealing with applications, 
it is desirable that, even if a separable matrix contains noise,
an algorithm for solving separable NMF problems can still find a near-basis.
Given a noisy separable matrix $\A = \V + \N$, we say that the algorithm is {\it robust to noise} if 
it can find the column index set $J$ such that $\A(:,J)$ is close to the basis of $\V$.

Our analysis put the following assumption on a matrix $\A$.
\begin{asm} \label{Asm: Noisy separable matrix A}
 $\A = \V + \N \in \Real^{d \times n}$, 
 where $\V \in \Real_+^{d \times n}$ is $r$-separable of the form 
 $\V = \W\H = \W [\I, \bar{\H}]\Pib$ shown in (\ref{Exp: SepNMF}) 
 and $\N \in \Real^{d \times n}$ is noise.
 Moreover,
\begin{enumerate}[{\normalfont (a)}]
 \item every column of $\V, \W$ and $\H$ has unit $L_1$ norm, and 
 \item the noise $\N$ satisfies $\| \N \|_1 \le \epsilon$ for some real number $\epsilon$ satisfying $0 \le \epsilon < 1$.
\end{enumerate}
\end{asm}
We call $\epsilon$ the \emph{noise level} involved in $\A$.
As described in \cite{Aro12a, Gil12}, we can assume without loss of generality that part (a) holds.
Our analysis uses parameters $\kappa, \omega$ and $\beta$, 
which were introduced by Gillis \cite{Gil13} for the analysis of Hottopixx.
Let $\A = \V + \N \in \Real^{d \times n}$ where $\V \in \Real_+^{d \times n}$ is $r$-separable of the form
$\V = \W\H = \W [\I, \bar{\H}]\Pib$ shown in (\ref{Exp: SepNMF}) and $\N \in \Real^{d \times n}$ is noise.
The parameters $\kappa$ and $\omega$ are defined in terms of $\W$ by 
\begin{align*}
  \kappa  &= \min_{1 \le j \le r} \min_{\z \ge \zero}  \| \w_j - \W(:, R \setminus \{ j \}) \z \|_1, \\
  \omega  &= \min_{1 \le j_1 \neq j_2 \le r} \| \w_{j_1} - \w_{j_2} \|_1.
\end{align*}
They satisfy the relation
\begin{align} \label{Exp: kappa is less than omega}
 \kappa \le \omega.
\end{align}
It is easy to verify that it holds.
Let $j_1, j_2 \in R$ with $j_1 \neq j_2$ satisfy $\omega = \|\w_{j_1} - \w_{j_2} \|_1$.
Then, there exists an integer $\ell \in R$ such that $\W(:, R \setminus \{ j_1 \}) \e_\ell = \w_{j_2}$.
Hence, 
\begin{align*}
 \kappa \le \| \w_{j_1} - \W(:, R \setminus \{j_1\}) \e_\ell \|_1 = \| \w_{j_1} - \w_{j_2} \|_1 = \omega.
\end{align*}
Let Assumption~\ref{Asm: Noisy separable matrix A}(a) hold. Then, we can bound $\kappa$ and $\omega$ as 
\begin{align}
 & 0 \le \kappa \le 1, \label{Exp: Bounds on kappa} \\
 & 0 \le \omega \le 2. \label{Exp: Bounds on omega}
\end{align}
The lower bounds come from the definitions of $\kappa$ and $\omega$.
For the upper bounds, we find that 
$\kappa \le \| \w_j \|_1 = 1$ for any $j \in R$, and 
$\omega \le \| \w_{j_1} - \w_{j_2} \|_1 \le \| \w_{j_1} \|_1 + \| \w_{j_2} \|_1 = 2$ for any different $j_1, j_2 \in R$.
The parameter $\beta$ is defined in terms of the submatrix $\bar{\H}$ of $\H$ by
\begin{align*}
 \beta = \max_{1 \le i \le r, \ 1 \le j \le n-r} \bar{\H}(i,j).
\end{align*}
Let  Assumption~\ref{Asm: Noisy separable matrix A}(a) hold. Then, $\beta$ satisfies $0 \le \beta \le 1$.
In particular, if $\beta = 1$, there are columns of $\bar{\H}$ such that one element is $1$ and the others are $0$.
This means that there are duplicate basis columns.

\section{Main Results} \label{Sec: Main results}
Here, we present the main results in the form of 
Theorems \ref{Thm: Robustness of refined algo} and \ref{Thm: Robustness of refined algo with postprocessing}.
We refine Hottopixx of Bittorf et al.\ \cite{Bit12}.
Our refinement uses the optimization model $\OurModel$, which is shown in Section \ref{Subsec: Algorithm of refined Hottopixx}.
Algorithm \ref{Algo: Refined algo} of the section describes the details of the refinement.
Our first result, which states the robustness of Algorithm \ref{Algo: Refined algo} to noise, is as follows:

\begin{thm}
 \label{Thm: Robustness of refined algo}
 Let $\A$ satisfy Assumption \ref{Asm: Noisy separable matrix A}. Assume $\kappa > 0$.
 Run the refinement of Hottopixx, i.e., Algorithm \ref{Algo: Refined algo}, on the input $(\A, r)$.
 If 
 \begin{align*}
  \epsilon \le \frac{\kappa (1 - \beta)}{9 (r+1)},
 \end{align*}
 then, after suitably rearranging the columns of $\W$, 
 the output $\W_\out$ satisfies $\|\W - \W_\out \|_1 \le \epsilon$.
\end{thm}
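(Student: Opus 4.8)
The plan is to follow the standard two-part strategy for analyzing Hottopixx-type methods: first show that solving the model $\OurModel$ on $\A$ yields an optimal solution $\OptSol$ whose diagonal concentrates on the true basis indices, and then show that the selection rule of Algorithm \ref{Algo: Refined algo} recovers exactly those indices, so that $\W_\out$ is composed of the noisy basis columns of $\A$. Throughout, write $J^\star = \{\phi(1), \ldots, \phi(r)\}$ for the set of basis indices, so that $\W = \A(:, J^\star) - \N(:, J^\star)$ after ordering $J^\star$ as $(\phi(1), \ldots, \phi(r))$.

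First I would construct a ground-truth reference point. Let $\X^\star \in \Real^{n \times n}$ be the matrix whose row $\phi(j)$ equals the $j$th row of $\H$ for each $j \in R$, with all remaining rows zero. Then $\V\X^\star = \W\H = \V$, and since every column of $\H$ has unit $L_1$ norm by Assumption \ref{Asm: Noisy separable matrix A}(a), one has $\|\X^\star\|_1 = 1$ and $\trace(\X^\star) = r$. Consequently $\A - \A\X^\star = \N(\I - \X^\star)$, so
\[
 \|\A - \A\X^\star\|_1 \le \|\N\|_1 \, \|\I - \X^\star\|_1 \le 2\epsilon.
\]
The point $\X^\star$ satisfies the structural and trace constraints of $\OurModel$, and its residual is at most $2\epsilon$. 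Because $\OurModel$ does not use $\epsilon$, the role of $\X^\star$ is to certify, via optimality of $\OptSol$, that $\|\A - \A\OptSol\|_1$ is itself $O(\epsilon)$ even though $\epsilon$ is unknown to the algorithm.

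Next I would convert this residual bound into a separation of $\diag(\OptSol)$. The idea is that each non-basis column of $\V$ is a convex combination, with coefficients bounded by $\beta < 1$, of the basis columns, whereas each basis column $\w_j$ lies at $L_1$-distance at least $\kappa$ from the cone generated by the other basis columns. Combining this with the structural constraints $\OptSol(i,j) \le \OptSol(i,i)$ and the $O(\epsilon)$ residual, I would show that $\OptSol(i,i)$ is close to $1$ for $i \in J^\star$ and close to $0$ for $i \notin J^\star$, the gap being governed by $\kappa(1-\beta)$ against an error of order $(r+1)\epsilon$. The hypothesis $\epsilon \le \kappa(1-\beta)/(9(r+1))$ is exactly the threshold that keeps this gap strictly positive, so the $r$ largest diagonal entries of $\OptSol$ are precisely those indexed by $J^\star$, and the selection step of Algorithm \ref{Algo: Refined algo} returns $J = J^\star$.

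The main obstacle is this diagonal-separation step. Unlike the original Hottopixx, where $\|\A - \A\X\|_1 \le 2\epsilon$ is imposed directly, here the residual is controlled only indirectly through optimality against $\X^\star$, so I must track carefully how the unknown optimal value of $\OurModel$ propagates into the per-index estimates on $\diag(\OptSol)$; the constants $9$ and $(r+1)$ arise from summing the $r$ individual column-reconstruction errors together with the slack in the residual comparison. Once \emph{exact recovery} $J = J^\star$ is established, the conclusion is immediate: after rearranging the columns of $\W$ to match $\phi$, we have $\W_\out = \A(:, J^\star) = \W + \N(:, J^\star)$, and since $\|\cdot\|_1$ is the induced (maximum-column-sum) $L_1$ norm,
\[
 \|\W - \W_\out\|_1 = \|\N(:, J^\star)\|_1 \le \|\N\|_1 \le \epsilon,
\]
which is the claimed bound.
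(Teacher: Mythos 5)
Your overall architecture matches the paper's: a ground-truth feasible point (your $\X^\star$ is exactly the paper's $\X_0$) certifies that the optimal residual of $\OurModel$ is at most $2\epsilon$ (Lemma~\ref{Lem: Relation of epsilon and optimal value}), this propagates into a lower bound on the basis diagonal entries, and exact recovery of $J^\star$ then gives $\|\W-\W_\out\|_1=\|\N(:,J^\star)\|_1\le\epsilon$. However, the heart of the theorem is precisely the ``diagonal-separation step'' that you defer, and as described it would not go through. The paper does \emph{not} show that non-basis diagonal entries are individually ``close to $0$''; that is not provable from near-optimality alone. Instead it proves only the one-sided bound $\p(i)\ge 1-\frac{8\epsilon}{\kappa(1-\beta)(1-\epsilon)}>\frac{r}{r+1}$ for basis indices $i$ (Lemma~\ref{Lem: Score of basis}, whose proof introduces $\eta=\H(j,:)\OptSol(:,i)$, bounds it below via the definition of $\kappa$ and above via $\beta$ and the column-norm bound of Lemma~\ref{Lem: Properties of optimal solution}, and must separately treat the degenerate case $1-\eta+\tilde\epsilon=0$), and then uses the trace constraint $\sum_{i\in N}\p(i)=r$ together with a pigeonhole argument to force $\p(i)\le\frac{r}{r+1}$ for all non-basis $i$. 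Your proposal mentions $\trace(\X^\star)=r$ only as a feasibility check, not as the mechanism controlling the non-basis entries, so the selection step is not actually justified.

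A second, smaller gap: you do not address the case $\beta=1$ (which the hypothesis $\epsilon\le\frac{\kappa(1-\beta)}{9(r+1)}$ forces to coincide with $\epsilon=0$), and your separation argument explicitly requires $\beta<1$. The paper handles this case by invoking step~1 of Algorithm~\ref{Algo: Refined algo} (removal of duplicate columns), which is essential: Remark~\ref{Remark: theorem does not hold} exhibits an exactly separable matrix with duplicated basis columns for which the $r$ largest diagonal entries of an optimal solution do \emph{not} identify a basis. Without invoking the preprocessing step, your argument fails in that case.
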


If the noise level $\epsilon$ is positive and the basis columns overlap, i.e., $\beta = 1$,
the theorem is invalid and does not say anything about the robustness of Algorithm \ref{Algo: Refined algo} to noise.
To cope with this issue, 
we develop postprocessing that ensures the algorithm's robustness to noise even in such a case.
Postprocessing for that purpose was proposed by Gillis \cite{Gil13}, and here, we refine it.
A detailed description of the refinement is given 
in Algorithm \ref{Algo: Refined algo with postprocessing} of Section \ref{Subsec: Algorithm of postprocessing}.
Our second result, which states the robustness of Algorithm \ref{Algo: Refined algo with postprocessing} to noise, is as follows:

\begin{thm}
 \label{Thm: Robustness of refined algo with postprocessing}
 Let $\A$ satisfy Assumption \ref{Asm: Noisy separable matrix A}.
 Run the refinement of Hottopixx with postprocessing, i.e., Algorithm \ref{Algo: Refined algo with postprocessing}, 
 on the input $(\A, r)$. If 
 \begin{align*}
  \epsilon < \frac{\kappa \omega}{578(r+1)},
 \end{align*}
 then, after suitably rearranging the columns of $\W$, the output $\W_\out$ satisfies 
 \begin{align*}
  \| \W - \W_\out \|_1 \le \frac{136(r+1)}{\kappa} \epsilon.
 \end{align*}
 In particular, if 
 \begin{align*}
  \epsilon <  \frac{\kappa^2}{289(r+1)^2},
 \end{align*}
 then, after suitably rearranging the columns of $\W$, the output $\W_\out$ satisfies 
 \begin{align*}
  \| \W - \W_\out  \|_1 \le 8\sqrt{\epsilon}.
 \end{align*}
\end{thm}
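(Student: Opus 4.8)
The plan is to split Theorem \ref{Thm: Robustness of refined algo with postprocessing} into two stages: first prove the linear bound $\|\W - \W_\out\|_1 \le \frac{136(r+1)}{\kappa}\epsilon$ under the hypothesis $\epsilon < \frac{\kappa\omega}{578(r+1)}$, and then obtain the $8\sqrt{\epsilon}$ bound as a purely algebraic specialization. The second stage is easy, so I would dispose of it first in order to pin down the constants. Since $\kappa \le \omega$ by (\ref{Exp: kappa is less than omega}) and $r \ge 1$, the hypothesis $\epsilon < \frac{\kappa^2}{289(r+1)^2}$ gives $\epsilon < \frac{\kappa^2}{578(r+1)} \le \frac{\kappa\omega}{578(r+1)}$, so the first bound is applicable. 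Writing $289 = 17^2$ and $136 = 8\cdot 17$, the same hypothesis yields $\sqrt{\epsilon} < \frac{\kappa}{17(r+1)}$, and therefore $\frac{136(r+1)}{\kappa}\epsilon = \frac{136(r+1)}{\kappa}\sqrt{\epsilon}\cdot\sqrt{\epsilon} < 8\sqrt{\epsilon}$. Hence the entire content of the theorem reduces to the first inequality.

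For that inequality I would build directly on the analysis of Algorithm \ref{Algo: Refined algo} carried out in Section \ref{Sec: Refinement of Hottopixx}. The refined model $\OurModel$ returns an (approximately) optimal solution whose diagonal supplies a score $\SCORE(j)$ for each column $j \in N$, and the role of Section \ref{Sec: Refinement of Hottopixx} is to show that, up to an $O(\epsilon)$ error, the score mass concentrates on the columns of $\A$ lying near the true basis, with the quality of this concentration governed by $\kappa$. The postprocessing of Algorithm \ref{Algo: Refined algo with postprocessing} then groups the columns into clusters of controlled diameter $\diam$ and selects one high-score representative per cluster. My first step is to quantify, from the LP analysis, how accurately the high-score columns locate the basis directions, obtaining a positioning error controlled by $\epsilon/\kappa$ (up to the $(r+1)$ factor coming from the LP dimension).

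The heart of the argument is the clustering step, and this is where I expect the real difficulty. I would use $\omega$, the minimum $L_1$ separation between distinct columns of $\W$, to show that when $\epsilon < \frac{\kappa\omega}{578(r+1)}$ the clusters are in bijection with the $r$ basis columns: the positioning error of order $\epsilon/\kappa$ must be forced below a fraction of $\omega$, so that columns attached to two different basis columns can never land in the same cluster, while columns attached to the same basis column (including the duplicates created when $\beta = 1$) are guaranteed to merge. Establishing this bijection is delicate precisely because postprocessing is what rescues the $\beta = 1$ case that defeats Theorem \ref{Thm: Robustness of refined algo}; one must verify that overlapping basis columns collapse into a single cluster rather than being double-counted or dropped, which is why the bound here is expressed through $\omega$ rather than through the $(1-\beta)$ factor of Theorem \ref{Thm: Robustness of refined algo}. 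Once the bijection is secured, a triangle-inequality estimate of each selected representative against its matched basis column, combined with the $O(\epsilon/\kappa)$ positioning bound and the $(r+1)$ factor, yields $\|\W - \W_\out\|_1 \le \frac{136(r+1)}{\kappa}\epsilon$. The large constants $578$ and $136$ are simply the by-product of chaining the LP error bound, the cluster-diameter choice, and these triangle inequalities; I would track them only loosely until the clustering bijection is in hand, since that geometric separation step, not the arithmetic, is the genuine obstacle.
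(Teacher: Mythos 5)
Your reduction of the second bound to the first is correct and is in fact a small streamlining of the paper: from $\epsilon < \frac{\kappa^2}{289(r+1)^2}$, $\kappa\le\omega$ and $r+1\ge 2$ you get $\epsilon<\frac{\kappa\omega}{578(r+1)}$, and then $\frac{136(r+1)}{\kappa}\epsilon<\frac{136(r+1)}{\kappa}\cdot\frac{\kappa}{17(r+1)}\sqrt{\epsilon}=8\sqrt{\epsilon}$; the paper instead runs the two cases in parallel by setting $\mu=\sqrt{\epsilon}+\xi$ in the second case, but your route is legitimate. Your overall architecture for the first bound (anchor-type neighborhoods of the basis columns carry score mass exceeding $\frac{r}{r+1}$ by the trace constraint, $\omega$-separation keeps neighborhoods of distinct basis columns disjoint, and a triangle inequality converts membership in a neighborhood into the $O((r+1)\epsilon/\kappa)$ column error) is also the paper's.

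The genuine gap is that your sketch never confronts the step that makes Algorithm \ref{Algo: Refined algo with postprocessing} work \emph{without knowing $\epsilon$}, which is the entire point of the refinement. The collection of ``good'' clusters the paper calls $\FC(\p)$ is defined by the two conditions $\diam(S)\le 3\mu$ and $\SCORE(S,\p)>\frac{r}{r+1}$, but $\mu$ depends on the unknown noise level, so $\FC(\p)$ cannot be computed. The algorithm instead minimizes $\diam$ over the computable superset $\GC(\p)$ (score condition only), and one must prove that this minimizer nevertheless lands in $\FC(\p)$ (Lemma \ref{Lem: F contains hatS}, which needs $\FC(\p)\neq\emptyset$, i.e., that the anchors themselves qualify). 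Your proposal treats ``the clustering step'' as if clusters of controlled diameter were simply available, so the bijection you identify as the heart of the matter is asserted for a procedure the algorithm cannot execute. Relatedly, you omit the sequential mechanism entirely: the algorithm extracts one cluster, zeroes the point list on it, and repeats, and the correctness of this loop requires showing that after the update the surviving high-score clusters are exactly those attached to the not-yet-found basis columns (Lemmas \ref{Lem: Property of the union of disjoint sets} and \ref{Lem: Representation of F(q)}); without that, nothing prevents two iterations from returning clusters attached to the same basis column, and the claimed bijection fails. These are not bookkeeping details to be tracked ``loosely''---they are the new content of the proof relative to Gillis's postprocessing, and the constants $578$, $3\mu$, and $8\mu$ are fixed precisely by them.
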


Theorem \ref{Thm: Robustness of refined algo with postprocessing} tells us that 
there is a range of noise intensity that Algorithm \ref{Algo: Refined algo with postprocessing} is robust to 
even if there are duplicate basis columns.
From the relation $\kappa \le \omega$ shown in (\ref{Exp: kappa is less than omega}),
we can see that $\frac{\kappa^2}{289(r+1)^2} \le \frac{\kappa \omega}{578(r+1)}$ holds, and
$\frac{\kappa^2}{289(r+1)^2}$ is $\frac{2}{r+1}$ times smaller than $\frac{\kappa \omega}{578(r+1)}$.
The theorem tells us that, 
if $\epsilon$ satisfies $\epsilon \le \frac{\kappa \omega}{578(r+1)}$, 
the error of the output $\W_\out$ relative to the basis $\W$ can be bounded by using $\epsilon, r, \kappa$;
in particular, if $\epsilon$ is small and satisfies $\epsilon \le \frac{\kappa^2}{289(r+1)^2}$,
the error bound depends on only $\epsilon$.
Note that it remains an open question how tight 
the bounds shown in Theorems \ref{Thm: Robustness of refined algo} and \ref{Thm: Robustness of refined algo with postprocessing} are.
This is a topic for further research.

Now, let us review the previous work on Hottopixx and compare our results with the previous ones.
Arora et al.\ \cite{Aro12a} proposed the first algorithm with provable guarantees for solving separable NMF problems.
Motivated by that work, Bittorf et al.\ \cite{Bit12} developed Hottopixx.
Let $\A = \V + \N$ 
where $\V \in \Real_+^{d \times n}$ is $r$-separable of the form $\V = \W\H$ shown in (\ref{Exp: SepNMF}) 
and $\N \in \Real^{d \times n}$ is noise satisfying $\| \N \|_1 \le \epsilon$ for some nonnegative real number $\epsilon$.
Hottopixx is based on the optimization model $\BRRT$ and require $(\A, r, \epsilon)$ as its input.
The details of the algorithm and  $\BRRT$ are given in Section \ref{Subsec: Algorithm of refined Hottopixx}.
Bittorf et al.\ showed that Hottopixx is robust to noise. 
However, it was unclear whether one can ensure its robustness in the case that there are duplicate basis columns.

Gillis \cite{Gil13} and Gillis and Luce \cite{Gil14a} pursued a line of research 
that examined the robustness of Hottopixx.
Tables \ref{Tab: algorithms without postprocessing} and \ref{Tab: algorithms with postprocessing}
summarize their results as well as ours.
The first column lists  the input data of the algorithms;
the second one lists the optimization model whose details are given in Section \ref{Subsec: Algorithm of refined Hottopixx};
the third one lists the assumptions imposed on the analysis; 
and the fourth and fifth ones list the robustness results obtained by the analysis,
i.e., the bound on the noise level and the error of the output relative to the basis.

Gillis \cite{Gil13} investigated the robustness of Hottopixx.
He started by analyzing the case where there are no duplicate basis columns.
The analysis suggested that the use of postprocessing makes it possible to enhance its robustness.
He then developed postprocessing and showed that Hottopixx with the postprocessing is robust to noise 
even when there are duplicate basis columns.
The results on Hottopixx (Theorem 2.3 of \cite{Gil13}) are summarized in the second row of Table \ref{Tab: algorithms without postprocessing},
and those on Hottopixx with the postprocessing (Theorem 3.5 of \cite{Gil13}) are in the second row of Table \ref{Tab: algorithms with postprocessing}.

Gillis and Luce \cite{Gil14a} developed a refinement of Hottopixx. 
Their refinement is based on the optimization model $\GL$, and it requires $(\A, \epsilon)$ as input.
The details of the algorithm and $\GL$ are given in Section \ref{Subsec: Algorithm of refined Hottopixx}.
They showed that the refinement is robust to noise.
The results (Theorem 2 of \cite{Gil14a}) are summarized in the third row of Table \ref{Tab: algorithms without postprocessing}.
Here, $\rho$ is a parameter that is set to a positive real number.
The advantage of the refinement over Hottopixx is that it does not require prior knowledge of the factorization rank $r$ 
of the matrix $\A$ we want to factorize, and the robustness result does not depend on $r$.
They also incorporated the postprocessing of Gillis \cite{Gil13} into the refinement, 
and showed that the same result as Theorem 3.5 of \cite{Gil13} 
holds for the refinement with the postprocessing.
The results (Theorem 7 of \cite{Gil14a}) are summarized in the third row of Table \ref{Tab: algorithms with postprocessing}.

\begin{table}[t]
 \centering
 \caption{Comparison of our result (Theorem \ref{Thm: Robustness of refined algo}) 
 with those of  Gillis (Theorem 2.3 of \cite{Gil13}) and  Gillis and Luce (Theorem 2 of \cite{Gil14a}) 
 for algorithms without postprocessing.
 The algorithm of Gillis and Luce  uses a parameter $\rho$ that is set to a positive real number.}
 \label{Tab: algorithms without postprocessing}
\begin{tabular}{p{25mm}|p{15mm}p{15mm}p{40mm}p{25mm}p{20mm}}
 \toprule
 & Input             & Model       & Assumption & Noise level      & Error \\
 \midrule
 Our result 
 & $\A, r$           & $\OurModel$ & Assumption \ref{Asm: Noisy separable matrix A}, $\kappa > 0$  
 & $\frac{\kappa (1 - \beta)}{9 (r+1)}$ & $\epsilon$ \\
 Gillis 
 & $\A, r, \epsilon$ &  $\BRRT$    & Assumption \ref{Asm: Noisy separable matrix A}, $\kappa > 0$   
 & $\frac{\kappa (1 - \beta)}{9 (r+1)}$ & $\epsilon$ \\
 Gillis and Luce 
 & $\A, \epsilon$    &   $\GL$     & Assumption \ref{Asm: Noisy separable matrix A}, $\kappa > 0$   
 & $\frac{\kappa (1 - \beta) \min\{1, \rho\}}{5 (\rho+2)}$ & $\epsilon$ \\
 \bottomrule
\end{tabular}

 \caption{Comparison of our result (Theorem \ref{Thm: Robustness of refined algo with postprocessing}) 
 with those of Gillis (Theorem 3.5 of \cite{Gil13}) and  Gillis and Luce (Theorem 7 of \cite{Gil14a}) for algorithms with postprocessing.}
 \label{Tab: algorithms with postprocessing}
\begin{tabular}{p{25mm}|p{15mm}p{15mm}p{40mm}p{25mm}p{20mm}}
 \toprule
 & Input     & Model      & Assumption & Noise level    & Error \\
 \midrule
 Our result & $\A, r$  & $\OurModel$ & Assumption \ref{Asm: Noisy separable matrix A}  
	    &  $\frac{\kappa \omega}{578(r+1)}$ & $\frac{136(r+1)}{\kappa} \epsilon$ \\

 Gillis & $\A, r, \epsilon$ & $\BRRT$ & Assumption \ref{Asm: Noisy separable matrix A} 
        & $\frac{\kappa \omega}{99(r+1)}$ & $\frac{49(r+1)}{\kappa} \epsilon + 2 \epsilon$ \\

 Gillis and Luce & $\A, r, \epsilon$ & $\GL$   & Assumption \ref{Asm: Noisy separable matrix A}  
                 & $\frac{\kappa \omega}{99(r+1)}$ & $\frac{49(r+1)}{\kappa} \epsilon + 2 \epsilon$ \\
 \bottomrule
\end{tabular}
\end{table}

Let us compare our results with those of Gillis \cite{Gil13} and Gillis and Luce \cite{Gil14a}.
We can see from Tables \ref{Tab: algorithms without postprocessing} and \ref{Tab: algorithms with postprocessing} that 
Algorithm \ref{Algo: Refined algo} is as robust as Hottopixx,
and Algorithm \ref{Algo: Refined algo with postprocessing} is almost as robust as  Hottopixx and the refinement of Gillis and Luce 
with the postprocessing of Gillis.
The assumptions of our analysis are the same as theirs.
There is a difference in the input data: 
$(\A,r)$ for our algorithms and $(\A, r, \epsilon)$ or $(\A, \epsilon)$ for the existing algorithms.
We often encounter a situation in which the factorization rank $r$ is available in advance 
in applications such as topic extraction from documents and unmixing of hyperspectral images.
Hence, it is reasonable to assume that a noisy separable matrix $\A$ and the factorization rank $r$ will be given as input.
As mentioned in Section \ref{Sec: Introduction}, most of the algorithms for solving separable NMF problems 
are designed to take $(\A,r)$ as input.
The advantage of our algorithms over the existing ones 
is they run on $(\A,r)$ that does not include prior knowledge of the noise level $\epsilon$ 
and yet have almost the same robustness to noise as the existing ones.

\section{Refinement of Hottopixx} \label{Sec: Refinement of Hottopixx}

\subsection{Algorithm} \label{Subsec: Algorithm of refined Hottopixx}
Our refinement of Hottopixx is described in Algorithm \ref{Algo: Refined algo}. 
\begin{algorithm}[h]
 \caption{Refinement of Hottopixx}
 \label{Algo: Refined algo}
 \smallskip
 Input: $\A \in \Real^{d \times n}$ and a positive integer $r$. \\
 Output: $\W_{\out} \in \Real^{d \times r}$.

 \begin{enumerate}[1.]
  \item If there are duplicate columns in $\A$, keep one of them and remove all the rest.

  \item Compute the optimal solution $\OptSol$ of the problem $\OurModel(\A,r)$.
	Set $\p = \diag(\OptSol)$.
  \item Let $\W_{\out} = \A(:,J)$ for the index set $J$ corresponding to the $r$ largest elements of $\p$, 
	and return $\W_{\out}$.
 \end{enumerate}
\end{algorithm}
For the input $\A$ and $r$,
step 2 constructs and solves the optimization problem with variable $\X \in \Real^{n \times n}$, 
\begin{alignat*}{4}
 & \OurModel(\A, r): & \quad  & \text{Minimize}   & \quad & \|\A - \A \X \|_1    & \quad & \\
 &                   &        & \text{subject to} &       & \trace(\X) = r,      &       & \\
 &                   &        &                   &       & \X(i,i) \le 1        &       & \text{for all} \ i \in N, \\
 &                   &        &                   &       & \X(i,j) \le \X(i,i)  &       & \text{for all} \ i,j \in N, \\
 &                   &        &                   &       & \X(i,j) \ge 0        &       & \text{for all} \ i,j \in N.
\end{alignat*}
Throughout this paper, we  use $\OptSol$ to denote the optimal solution and  
$\OptVal$ to denote the optimal value $\|\A - \A \OptSol \|_1$.
By introducing new variables $\Y \in \Real^{d \times n}$ and $z \in \Real$, 
the problem above can be reduced to an LP problem, 
since the minimization of $\|\A - \A \X \|_1$ is equivalent to the minimization of $z$ under the constraints:
$-\Y \le \A - \A\X \le \Y$ and $\sum_{i = 1}^{d} \Y(i,j) \le z$ for all $j \in N$.
We use $\OurModel'$ to denote the LP problem.
It should be noted that $\OurModel'$ has $n^2 + dn + 1$ variables and $2n^2 + 2dn + n + 1$ constraints.
Hence, the size of $\OurModel'$ may be rather large.
Step 1 performs the preprocessing on the input matrix.
Although Hottopixx does not contain this step, 
Algorithm \ref{Algo: Refined algo} must have it.
See Remark \ref{Remark: theorem does not hold} at the end of this section for the reason.

Here, let us recall Hottopixx of Bittorf et al.\ \cite{Bit12} and the refinement of Gillis and Luce \cite{Gil14a}.
Bittorf et al.\ looked at a certain feature of separable matrices
and developed Hottopixx on the basis of that observation.
Let $\A$ satisfy Assumption \ref{Asm: Noisy separable matrix A}.
Then, it can be written as $\A = \V + \N$,  where $\V \in \Real_+^{d \times n}$ is $r$-separable of the form
$\V = \W\H = \W [\I, \bar{\H}]\Pib$ shown in (\ref{Exp: SepNMF}) 
and $\N \in \Real^{d \times n}$ is noise.
Using an \by{n}{n} permutation matrix $\Pib$ and the \by{r}{(d-r)} nonnegative matrix $\bar{\H}$,
we construct the  matrix
\begin{align} \label{Exp: X0}
 \X_0 = \Pib^{-1}
 \left[
 \begin{array}{c|c}
  \I    & \bar{\H} \\
  \hline
  \zero_{\by{(n-r)}{r}} & \zero_{\by{(n-r)}{(n-r)}}
 \end{array}
 \right]
  \Pib \in \Real^{n \times n}
\end{align}
where $\I$ is an identity matrix of size $r$.
We make the following observations:
\begin{itemize}
 \item The basis of $\V$ can be identified by using $\X_0$, since 
       the diagonal entries of $\X_0$  
       are $0$ or $1$ and the positions with $1$ correspond to the basis indices of $\V$.
 \item $\X_0$ satisfies 
       \begin{align} \label{Exp: AX0 approximates A}
	\| \A - \A\X_0 \|_1 \le 2 \epsilon.
       \end{align}
\end{itemize}
The second observation comes from the fact that we have
\begin{align*}
 \V\X_0 = \W [\I, \bar{\H}] \Pib 
 \Pib^{-1}
 \left[
 \begin{array}{c|c}
  \I    & \bar{\H} \\
  \hline
   \zero & \zero
 \end{array}
 \right]
 \Pib
 = \W[\I, \bar{\H}] \Pib = \V,
\end{align*}
which gives
 \begin{align*}
  \|\A - \A\X_0 \|_1 = \|\V + \N  - (\V + \N) \X_0\|_1 & = \|\N -  \N \X_0\|_1  
  & \\
  & \le \|\N\|_1 + \|\N\|_1 \|\X_0\|_1  
  & \\
  & \le 2 \epsilon  
  & \text{(by Assumption \ref{Asm: Noisy separable matrix A})}.
 \end{align*}
To compute $\X_0$ approximately,
Bittorf et al.\ proposed to solve an optimization problem with variable $\X \in \Real^{n \times n}$, 
\begin{alignat*}{4}
 & \BRRT(\A, r, \epsilon):      & \quad & \mbox{Minimize}   & \quad & \f^\trans \diag(\X)                & \quad &  \\
 &                              &       & \mbox{subject to} &       & \| \A - \A\X \|_1 \le 2 \epsilon,  &       &  \\ 
 &                              &       &                   &       & \trace(\X) = r,                    &       &  \\
 &                              &       &                   &       & \X(i,i) \le 1                      &       & \mbox{for all} \ i \in N, \\
 &                              &       &                   &       & \X(i,j) \le \X(i,i)                &       & \mbox{for all} \ i,j \in N,  \\
 &                              &       &                   &       & \X(i,j) \ge 0                      &       & \mbox{for all} \ i,j \in N.
\end{alignat*}
Here, $\f$ is a parameter set by the user: it can be chosen to be any $n$-dimensional vector with distinct elements.
The problem $\BRRT$ can be reduced to an LP.
Hottopixx is the same as performing steps 2 and 3 of Algorithm \ref{Algo: Refined algo} with a replacement of $\OurModel(\A,r)$ in step 2
by $\BRRT(\A,r,\epsilon)$.
It thus requires $(\A, r, \epsilon)$ as input.

Gillis and Luce \cite{Gil14a} refined Hottopixx. 
They proposed to solve an optimization problem with variable $\X \in \Real^{n \times n}$, 
\begin{alignat*}{4}
 & \GL(\A, \epsilon):  & \quad & \mbox{Minimize}   & \quad & \g^\trans \diag(\X)                  & \quad &  \\
 &                     &       & \mbox{subject to} &       & \| \A - \A\X \|_1 \le \rho \epsilon,  &       &  \\ 
 &                     &       &                   &       & \X(i,i) \le 1                        &       & \mbox{for all} \ i \in N, \\
 &                     &       &                   &       & \X(i,j) \le \X(i,i)                  &       & \mbox{for all} \ i,j \in N,  \\
 &                     &       &                   &       & \X(i,j) \ge 0                        &       & \mbox{for all} \ i,j \in N.
\end{alignat*}
Here, $\g$ and $\rho$ are parameters set by the user: 
$\g$ can be chosen to be any $n$-dimensional vector with distinct positive elements 
and $\rho$ a positive value.
As in the case of $\BRRT$, the problem $\GL$ can be reduced to an LP.
Their algorithm computes the optimal solution of $\GL$ and constructs an index set corresponding to 
diagonal entries larger than $1 - \frac{\min\{1, \rho\}}{2}$.
Hence, it takes as input $(\A, \epsilon)$ and does not require $r$ as input.

\begin{remark}\label{Remark: theorem does not hold}
 If Algorithm \ref{Algo: Refined algo} does not contain step 1,
 it may fail to find a basis from separable matrices with duplicate basis columns.
 For instance, consider 
 \begin{align*}
  \V = 
  \left[
  \begin{array}{rrrrr}
   1 & 0 & 1 & 1 & 0 \\
   0 & 1 & 0 & 0 & 1
  \end{array}
  \right].
 \end{align*}
 This is $2$-separable with $\beta = 1$ since it can be written as $\V = \W\H$ by letting $\W = \I$ and $\H = \V$.
 Suppose that the algorithm receives $(\A, r)$ by letting $\A = \V$ and $r = 2$ as input.
 Consider the two matrices, 
 \begin{align*}
  \X_1 = 
  \left[
  \begin{array}{rrrrr}
   1 & 0 & 1 & 1 & 0 \\
   0 & 1 & 0 & 0 & 1 \\
   0 & 0 & 0 & 0 & 0 \\
   0 & 0 & 0 & 0 & 0
  \end{array}
  \right]
  \quad \mbox{and} \quad
  \X_2 = 
  \left[
  \begin{array}{rrrrr}
   1/3 & 0   & 1/3 & 1/3 & 0 \\
   0   & 1/2 & 0   & 0   & 1/2 \\
   1/3 & 0   & 1/3 & 1/3 & 0 \\
   1/3 & 0   & 1/3 & 1/3 & 0 \\
   0   & 1/2 & 0   & 0   & 1/2 
  \end{array}
  \right].
 \end{align*}
 Both $\X_1$ and $\X_2$ are optimal solutions of problem $\OurModel(\A,r)$,
 since they satisfy all the constraints and  $\| \A - \A\X_1 \|_1 = \| \A - \A\X_2 \|_1 = 0$.
 If Algorithm \ref{Algo: Refined algo} skips step 1 
 and finds $\X_2$ in step 2, then it constructs  $J = \{2,5\}$ in step 3.
 We have $\W_{\out} \neq \W $, since $\W_{\out} = \A(:,J)= \V(:,J)$ and $\W = \I$.
\end{remark}

\subsection{Analysis} \label{Subsec: Analysis on refined Hottopixx}

The optimal value $\theta$ of problem $\OurModel$ is related to the noise level $\epsilon$ 
involved in separable matrices.
Actually, from the observation Bittorf et al.\ made in \cite{Bit12}, 
we can easily see that $\theta \le 2\epsilon$ holds.

\begin{lem}\label{Lem: Relation of epsilon and optimal value}
 Let $\A$ satisfy Assumption \ref{Asm: Noisy separable matrix A}.
 Then, the optimal value $\OptVal$ of problem $\OurModel(\A, r)$ satisfies $\OptVal \le 2 \epsilon$.
\end{lem}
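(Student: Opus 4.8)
The plan is to exhibit a single feasible point of $\OurModel(\A,r)$ whose objective value is already at most $2\epsilon$; since $\OptVal$ is the minimum objective value, this forces $\OptVal \le 2\epsilon$. The obvious candidate is the matrix $\X_0$ defined in (\ref{Exp: X0}), because the excerpt has already proved the bound $\|\A - \A\X_0\|_1 \le 2\epsilon$ in (\ref{Exp: AX0 approximates A}). Thus, once feasibility of $\X_0$ is established, optimality of $\OptSol$ yields $\OptVal = \|\A - \A\OptSol\|_1 \le \|\A - \A\X_0\|_1 \le 2\epsilon$, and the lemma follows. The entire argument therefore reduces to a feasibility check of $\X_0$ against the four families of constraints of $\OurModel(\A,r)$.

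To run this check, I would first note that $\X_0 = \Pib^{-1}\M\Pib$, where $\M$ denotes the inner block matrix in (\ref{Exp: X0}). Since $\Pib$ is a permutation matrix, this conjugation merely relabels the rows and columns of $\M$ by one common permutation $\pi$ of $N$, so that $\X_0(i,j) = \M(\pi(i),\pi(j))$ for all $i,j$. Two consequences handle three of the four constraints at once. First, the diagonal of $\X_0$ is a rearrangement of the diagonal of $\M$, so $\trace(\X_0) = \trace(\M) = \trace(\I) = r$ (the bottom-right zero block contributes nothing), giving the trace constraint; moreover each diagonal entry of $\M$ equals $1$ in its top $r$ rows and $0$ in its bottom $n-r$ rows, whence $0 \le \X_0(i,i) \le 1$. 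Second, nonnegativity $\X_0(i,j)\ge 0$ is immediate, since $\I$, $\bar{\H}$, and the zero blocks are all nonnegative and relabeling preserves entry values.

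The only remaining constraint is $\X_0(i,j) \le \X_0(i,i)$, which by the relabeling is equivalent to the statement that in every row of $\M$ the diagonal entry dominates all others. For the top $r$ rows the diagonal entry is $1$, while the off-diagonal entries come either from the identity block (hence $0$) or from $\bar{\H}$; by Assumption \ref{Asm: Noisy separable matrix A}(a) every entry of $\bar{\H}$ lies in $[0,1]$ (this is exactly the bound $\beta \le 1$ recorded in the excerpt), so each is at most the diagonal value $1$. For the bottom $n-r$ rows every entry is $0$, so the inequality holds trivially. I expect no real obstacle here: the one substantive point is recognizing that the combinatorial constraint $\X(i,j)\le\X(i,i)$ is precisely the row-domination structure of $\M$ and that this structure survives permutation similarity, with $\beta \le 1$ supplying exactly what is needed to control the $\bar{\H}$ block; everything else is bookkeeping.
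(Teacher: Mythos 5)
Your proposal is correct and follows exactly the paper's own argument: exhibit $\X_0$ from (\ref{Exp: X0}) as a feasible solution of $\OurModel(\A,r)$ and invoke the bound (\ref{Exp: AX0 approximates A}). The only difference is that you spell out the feasibility check (which the paper leaves as ``we can check''), and your verification of the four constraint families via permutation similarity of the block matrix is accurate.
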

\begin{proof}
 Since $\A$ satisfies Assumption \ref{Asm: Noisy separable matrix A},
 it is given by $\A = \V + \N$  where $\V \in \Real_+^{d \times n}$ is $r$-separable of the form
 $\V = \W\H = \W [\I, \bar{\H}]\Pib$ shown in (\ref{Exp: SepNMF}) 
 and $\N \in \Real^{d \times n}$ is noise.
 Using the permutation matrix $\Pib$ and the nonnegative matrix $\bar{\H}$,  
 we construct the  matrix $\X_0$ that is shown in (\ref{Exp: X0}), i.e.,
\begin{align*}
 \X_0 = \Pib^{-1}
 \left[
 \begin{array}{c|c}
  \I    & \bar{\H} \\
  \hline
  \zero & \zero
 \end{array}
 \right]
  \Pib \in \Real^{n \times n}.
\end{align*}
 Since Assumption~\ref{Asm: Noisy separable matrix A}(a) holds,
 we can check that $\X_0$ is a feasible solution of  $\OurModel(\A, r)$. Hence,
 the objective function value at $\X_0$ satisfies
 $\OptVal \le \|\A - \A\X_0 \|_1$.
 In addition, as shown in (\ref{Exp: AX0 approximates A}), we have $\|\A - \A\X_0 \|_1 \le 2 \epsilon$.
 Consequently,  $\theta \le 2 \epsilon$ holds.
\end{proof}

Let $\A$ satisfy Assumption \ref{Asm: Noisy separable matrix A}.
Let $I$ be a set of basis indices of $\V$.
Gillis showed in Lemma 2.1 of \cite{Gil13} that a feasible solution $\X$ of problem $\BRRT$ has the following properties:
the $L_1$ norm of each column of $\X$ is less than about $1$, and $\V\X$ serves as a good approximation to $\V$.
Using the results, Gillis showed in Lemma 2.2 of \cite{Gil13} that 
the diagonal elements of $\X$ indexed by $I$ take higher values than the others.
Hence, we can construct $I$ by checking the values of the diagonal elements of $\X$.
Lemma \ref{Lem: Relation of epsilon and optimal value} implies that 
the optimal solution of problem $\OurModel$ is feasible for problem $\BRRT$.
Hence, the same results as in Lemmas 2.1 and 2.2 of \cite{Gil13} hold for the optimal solution of problem $\OurModel$.
Here, we formally describe these results as Lemmas \ref{Lem: Properties of optimal solution} and \ref{Lem: Score of basis}.

\begin{lem}\label{Lem: Properties of optimal solution}
 Let $\A$ satisfy Assumption \ref{Asm: Noisy separable matrix A}.
 Then, the optimal solution $\OptSol \in \Real^{n \times n}$ of problem $\OurModel(\A, r)$ satisfies
 \begin{align*}
  \| \OptSol(:,i) \|_1 \le 1 + \frac{4 \epsilon}{1 - \epsilon} 
  \quad \mbox{and} \quad 
  \| \v_i - \V\OptSol(:,i) \|_1 \le \frac{4 \epsilon}{1 - \epsilon}
 \end{align*}
 for $ i \in N$.
\end{lem}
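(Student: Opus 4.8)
The plan is to bound each column $\OptSol(:,i)$ by transferring the approximation guarantee from $\A$ to $\V$ and then exploiting the column normalization of $\V$ granted by Assumption~\ref{Asm: Noisy separable matrix A}(a). Write $\x = \OptSol(:,i)$ for brevity. Since the matrix $L_1$ norm is the maximum over columns of the column $L_1$ norms, Lemma~\ref{Lem: Relation of epsilon and optimal value} gives, for every $i \in N$,
\[
 \|\a_i - \A\x\|_1 \le \|\A - \A\OptSol\|_1 = \OptVal \le 2\epsilon .
\]

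First I would convert this into a statement about $\V$. Using $\V = \A - \N$ and the triangle inequality,
\[
 \|\v_i - \V\x\|_1 \le \|\a_i - \A\x\|_1 + \|\n_i\|_1 + \|\N\x\|_1 \le 2\epsilon + \epsilon + \epsilon\|\x\|_1 ,
\]
where I invoke $\|\n_i\|_1 \le \|\N\|_1 \le \epsilon$, submultiplicativity $\|\N\x\|_1 \le \|\N\|_1\|\x\|_1$, and Assumption~\ref{Asm: Noisy separable matrix A}(b). This couples the second target quantity to the first, namely $\|\v_i - \V\x\|_1 \le 3\epsilon + \epsilon\|\x\|_1$.

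Next I would produce a reverse-direction estimate bounding $\|\x\|_1$ in terms of $\|\v_i - \V\x\|_1$, which is where Assumption~\ref{Asm: Noisy separable matrix A}(a) enters decisively. Because every column of $\V$ is nonnegative with unit $L_1$ norm, $\one^\trans\V = \one^\trans$ and $\one^\trans\v_i = 1$; and because the constraint $\X(i,j)\ge 0$ of $\OurModel$ forces $\x \ge \zero$, we have $\|\x\|_1 = \one^\trans\x$. Hence
\[
 1 - \|\x\|_1 = \one^\trans\v_i - \one^\trans\V\x = \one^\trans(\v_i - \V\x),
\]
so that $|1 - \|\x\|_1| \le \|\v_i - \V\x\|_1$ and in particular $\|\x\|_1 \le 1 + \|\v_i - \V\x\|_1$.

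The remaining step is to solve the two coupled inequalities. Substituting the bound on $\|\v_i - \V\x\|_1$ into the one on $\|\x\|_1$ yields $\|\x\|_1 \le 1 + 3\epsilon + \epsilon\|\x\|_1$, and since $\epsilon < 1$ I may divide by $1 - \epsilon$ to obtain $\|\x\|_1 \le \frac{1+3\epsilon}{1-\epsilon} = 1 + \frac{4\epsilon}{1-\epsilon}$, the first claim. Feeding this back into $\|\v_i - \V\x\|_1 \le 3\epsilon + \epsilon\|\x\|_1$ and simplifying gives $\|\v_i - \V\x\|_1 \le \frac{4\epsilon}{1-\epsilon}$, the second claim. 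I expect the only delicate point to be the bookkeeping that decouples these two mutually dependent estimates; the identity $\one^\trans\V = \one^\trans$ is what breaks the circularity, so the crux is recognizing that the normalization in Assumption~\ref{Asm: Noisy separable matrix A}(a), and not merely the noise bound, must be used.
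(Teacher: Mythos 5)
Your proof is correct and follows essentially the same route as the paper's: both rest on Lemma~\ref{Lem: Relation of epsilon and optimal value} to get $\|\a_i - \A\OptSol(:,i)\|_1 \le 2\epsilon$, the identity $\one^\trans\V\OptSol(:,i) = \|\OptSol(:,i)\|_1$ coming from Assumption~\ref{Asm: Noisy separable matrix A}(a) and $\OptSol \ge \zero$, and the noise bound $\|\N\|_1 \le \epsilon$ to reach $(1-\epsilon)\|\OptSol(:,i)\|_1 \le 1 + 3\epsilon$ and then the second estimate. The only difference is presentational: you set up the two bounds as a coupled system and decouple them, whereas the paper derives the norm bound directly via reverse triangle inequalities before turning to the approximation bound.
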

The proof is almost the same as the one  of Lemma 2.1 in \cite{Gil13}.
We have included it in Appendix A to make the discussion self-contained.

\begin{lem} \label{Lem: Score of basis}
 Let $\A$ satisfy Assumption \ref{Asm: Noisy separable matrix A}.
 Assume $\kappa > 0$ and $\beta < 1$.
 Let $I$ be a set of basis indices of $\V$. 
 Let $\p = \diag(\OptSol)$ for the optimal solution $\OptSol$ of problem $\OurModel(\A, r)$.
 Then, the elements of $\p$ indexed by $I$ satisfy 
 \begin{align*}
  \p(i) \ge 1 - \frac{8 \epsilon}{\kappa (1 - \beta) (1 - \epsilon)}
 \end{align*}
 for every $i \in I$.
\end{lem}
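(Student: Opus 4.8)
The plan is to lower-bound the diagonal entries $\p(i)=\OptSol(i,i)$ for basis indices $i\in I$ by exploiting the two estimates already supplied by Lemma~\ref{Lem: Properties of optimal solution}, namely the column-norm bound $\|\OptSol(:,i)\|_1\le 1+\tfrac{4\epsilon}{1-\epsilon}$ and the approximation bound $\|\v_i-\V\OptSol(:,i)\|_1\le\tfrac{4\epsilon}{1-\epsilon}$. The point is that a basis column $\v_i$ of the separable matrix is ``hard to reconstruct'' from the other columns unless the corresponding diagonal coefficient $\OptSol(i,i)$ is close to $1$; this is precisely what the parameter $\kappa$ measures. So the strategy is to decompose the reconstruction $\V\OptSol(:,i)$ into the contribution of the $i$th column, $\OptSol(i,i)\,\v_i$, and the contribution of the remaining columns, and then argue that if $\OptSol(i,i)$ were too small, the residual would be forced to exceed the approximation bound.

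First I would fix a basis index $i\in I$ and write $\V\OptSol(:,i)=\sum_{k\in N}\OptSol(k,i)\v_k$. Collecting the diagonal term, I would rearrange this as $\v_i-\V\OptSol(:,i)=(1-\OptSol(i,i))\v_i-\sum_{k\neq i}\OptSol(k,i)\v_k$. The crucial observation is that every $\v_k$ with $k$ a basis index equals some $\w_j$, and the nonbasis columns are nonnegative combinations of the $\w_j$ with coefficients bounded by $\beta$; so the whole sum $\sum_{k\neq i}\OptSol(k,i)\v_k$ can be rewritten as a nonnegative combination of the basis columns $\W(:,R\setminus\{j\})$ other than $\w_j=\v_i$, up to a term involving $\beta$ that multiplies $\v_i$ itself. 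By collecting all coefficients that multiply $\v_i=\w_j$ and separating the rest, I would bring the expression into the form $\v_i-\V\OptSol(:,i)=\alpha\,\w_j-\W(:,R\setminus\{j\})\z$ for a suitable nonnegative $\z$ and a scalar $\alpha$ that is at least $1-\OptSol(i,i)$ minus a $\beta$-weighted correction. Dividing through by $\alpha$ and invoking the definition of $\kappa$ as the minimum over $j$ of $\min_{\z\ge\zero}\|\w_j-\W(:,R\setminus\{j\})\z\|_1$ gives $\|\v_i-\V\OptSol(:,i)\|_1\ge\alpha\kappa$.

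The main obstacle, and the step requiring the most care, is the accounting in the previous paragraph: I must track how the off-diagonal mass $\sum_{k\neq i}\OptSol(k,i)$ is distributed between genuine ``other basis'' directions and the $\beta$-fraction that leaks back onto $\v_i$, and I must control the total off-diagonal mass using $\|\OptSol(:,i)\|_1\le 1+\tfrac{4\epsilon}{1-\epsilon}$ together with $\OptSol(i,i)\le 1$. The factor $(1-\beta)$ in the denominator of the claimed bound should emerge exactly here: the effective coefficient on $\w_j$ is something like $(1-\OptSol(i,i))(1-\beta)$ after the leakage correction, so that $\kappa(1-\beta)$ rather than $\kappa$ appears as the lower bound constant. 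This is why the hypothesis $\beta<1$ is needed for the argument to be nonvacuous.

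Finally I would combine the two inequalities. From $\alpha\kappa\le\|\v_i-\V\OptSol(:,i)\|_1\le\tfrac{4\epsilon}{1-\epsilon}$ and $\alpha\gtrsim(1-\OptSol(i,i))(1-\beta)$, solving for $\OptSol(i,i)=\p(i)$ yields $\p(i)\ge 1-\tfrac{4\epsilon}{\kappa(1-\beta)(1-\epsilon)}\cdot c$ for the constant $c$ produced by the bookkeeping; matching the stated constant $8$ means the leakage and column-mass corrections together contribute the extra factor of $2$ over the naive $4\epsilon$. I expect the proof to close by verifying that all the corrections are absorbed into this single factor, giving exactly $\p(i)\ge 1-\tfrac{8\epsilon}{\kappa(1-\beta)(1-\epsilon)}$ for every $i\in I$.
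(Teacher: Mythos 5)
Your plan is essentially the paper's own proof. Writing $\V\OptSol(:,i)=\W\H\OptSol(:,i)$ and collecting the coefficient on $\w_j=\v_i$ is exactly the paper's decomposition: your $\alpha$ is $1-\eta$ with $\eta=\H(j,:)\OptSol(:,i)$, your $\z$ is $\H(R\setminus\{j\},:)\OptSol(:,i)$, and your ``leakage'' bound is the paper's estimate $\H(j,N\setminus\{i\})\OptSol(N\setminus\{i\},i)\le\beta(1+\tilde\epsilon-\p(i))$ with $\tilde\epsilon=\frac{4\epsilon}{1-\epsilon}$. Your bookkeeping for the constant also checks out: $(1-\beta)(1-\p(i))\le\alpha+\beta\tilde\epsilon\le\tilde\epsilon/\kappa+\tilde\epsilon\le 2\tilde\epsilon/\kappa$ yields precisely $\p(i)\ge 1-\frac{8\epsilon}{\kappa(1-\beta)(1-\epsilon)}$.

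The one step you gloss over is the sign of $\alpha$ before you ``divide through by $\alpha$.'' The definition of $\kappa$ only applies to $\|\w_j-\W(:,R\setminus\{j\})\z'\|_1$ with $\z'\ge\zero$, so the inequality $\|\alpha\w_j-\W(:,R\setminus\{j\})\z\|_1\ge\alpha\kappa$ requires $\alpha>0$ (for $\alpha<0$ the rescaled vector $\z/\alpha$ is no longer nonnegative, and for $\alpha=0$ you cannot divide at all). This is exactly the point the paper is careful about: its Lemma~\ref{Lem: Key lemma for evaluating scores} works with the shifted quantity $1-\eta+\tilde\epsilon$, proves it is nonnegative via $\eta\le\|\OptSol(:,i)\|_1\le 1+\tilde\epsilon$, and then splits the proof of Lemma~\ref{Lem: Score of basis} into the cases $1-\eta+\tilde\epsilon>0$ and $=0$; Remark~\ref{Remark: Observation} flags this as the addition over Gillis's original argument. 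The gap in your write-up is benign rather than fatal, because when $\alpha\le 0$ the target inequality $\alpha\kappa\le\tilde\epsilon$ holds trivially ($\tilde\epsilon\ge0$) and your final chain still closes; but as written the division step is not justified, and you should either add that trivial case distinction or adopt the paper's shift-by-$\tilde\epsilon$ device.
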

We have included the proof in Appendix B. 
Our proof follows the one of Lemma 2.2 in \cite{Gil13}, although 
additional considerations are made; see Remark \ref{Remark: Observation}.
The key idea of the proof is as follows.
Since $\A$ satisfies Assumption \ref{Asm: Noisy separable matrix A},
it can be written as $\A = \V + \N$
where $\V$ is $r$-separable of the form $\V = \W\H = \W[\I, \bar{\H}]\Pib$ shown in (\ref{Exp: SepNMF}) 
and there is a map $\phi : R \rightarrow N$ such that $\w_j = \v_{\phi(j)}$ for each $j \in R$.
For $j \in R$ and $i = \phi(j) \in N$, let 
\begin{align*}
 \eta = \H(j,:)\OptSol(:,i). 
\end{align*}
We can see that $\eta$ is rewritten by using $\OptSol(i,i)$, which is equivalent to $\p(i)$,
due to $\H(j,i) = 1$, and evaluate the lower and upper bounds on $\eta$.
The result of the lemma follows from the bounds.

Now, we can prove Theorem \ref{Thm: Robustness of refined algo}. 
It follows from Lemma \ref{Lem: Score of basis}.
\begin{proof}[(Proof of Theorem \ref{Thm: Robustness of refined algo})]
 Let us consider the case of $\beta = 1$. 
 Here, we only have to show that,
 if $\A$ is separable with an overlap of basis columns,
 then the algorithm finds a set of  basis indices. 
 Separability means that duplicate basis columns appear in the columns of $\A$.
 Hence, after conducting step 1, the resulting matrix is separable with no overlapping basis columns.
 This reduces to the case of $\beta < 1$.
 
 Let us  move on to the case of $\beta < 1$.
 Step 2 solves problem $\OurModel(\A, r)$ and sets $\p = \diag(\OptSol)$ for the optimal solution $\OptSol$.
 Let $I$ be a set of basis indices of $\V$.
 Lemma \ref{Lem: Score of basis} tells us that 
 \begin{align*}
  \p(i) \ge 1 - \underbrace{\frac{8 \epsilon}{\kappa (1 - \beta) (1 - \epsilon)}}_{\mathrm{(A)}}
 \end{align*}
 holds for every $i \in I$.
 Since 
 \begin{align*}
  \epsilon \le \frac{\kappa (1 - \beta)}{9 (r+1)} \le \frac{1}{18},
 \end{align*}
 we have $1 - \epsilon \ge 17/18 > 8/9$.
 In light of this, the term $\mathrm{(A)}$ is bounded as follows:
 \begin{align*}
  \mathrm{(A)} < \frac{9 \epsilon}{\kappa (1 - \beta)} \le \frac{1}{r+1}.
 \end{align*}
 We thus obtain
 \begin{align} \label{Exp: High scores}
  \p(i) > \frac{r}{r+1} \quad \mbox{for} \ i\in I.
 \end{align}
 The first constraint of problem $\OurModel(\A, r)$ requires $\OptSol$ to satisfy
 $\trace(\OptSol) = r \equivSym \sum_{i \in N} \p(i) = r$.
 Hence, 
\begin{align*}
 r = \sum_{ i \in N} \p(i) = \sum_{ i \in I} \p(i) + \sum_{ i \in N \setminus I} \p(i).  
\end{align*}
 Combining it with inequality (\ref{Exp: High scores}) gives 
 \begin{align} \label{Exp: Low scores}
  \p(i) \le \frac{r}{r+1} \quad \mbox{for} \ i \in N \setminus I.
 \end{align}
 Since $I$ has $r$ elements, 
 inequalities (\ref{Exp: High scores}) and (\ref{Exp: Low scores}) ensure that
 the index set corresponding to the $r$ largest elements of $\p$ coincides with $I$.
 Hence, the index set $J$ constructed in step 3 coincides with $I$, which is the set of basis indices of $\V$.
 Consequently, after suitably rearranging the columns of $\W$, 
 the output $\W_\out = \A(:,J)$ satisfies $\| \W - \W_\out \|_1 \le \epsilon$.
\end{proof}

\section{Refinement of Hottopixx with Postprocessing} \label{Sec: Refinement of Hottopixx with postprocessing}

\subsection{Algorithm} \label{Subsec: Algorithm of postprocessing}
We explore the case where there are duplicate basis columns in the input matrix of Algorithm \ref{Algo: Refined algo}.
As shown in Section~\ref{Subsec: Analysis on refined Hottopixx},
the algorithm's guarantee of robustness to noise is founded upon Lemma \ref{Lem: Score of basis}.
However, the lemma does not hold any more, because $\beta = 1$ in this case.
To address this issue, we develop and incorporate postprocessing in the algorithm.

Let us outline our postprocessing first and give the details at the end of this section.
In what follows, we will assume that we are given $\A$ satisfying Assumption \ref{Asm: Noisy separable matrix A}.
We use the term \emph{cluster} to refer to the set of column indices of $\A$.
Although Lemma \ref{Lem: Score of basis} does not hold in the case where there are duplicate basis columns,
the optimal solution of problem $\OurModel$ still provides us with clues to finding clusters from which we can obtain near-basis columns.
For a cluster $S \subset N$ and $\p \in \Real_+^n$, 
define the \emph{score} of cluster $S$ by
\begin{align*}
 \SCORE(S, \p) = \sum_{u \in S} \p(u),
\end{align*}
and we call $\p$ a \emph{point list}.

Let $\mu > 0$ be a parameter and define
\begin{align} \label{Exp: anchor}
 T_j = \{u \in N : \|\a_u - \w_j \|_1 \le 2 \mu\}
\end{align}
for each $j \in R$. Here, $\a_u$ is the $u$th column of $\A$ and $\w_j$ is the $j$th column of $\W$.
We call $T_1, \ldots, T_r$ \emph{anchors} with  parameter $\mu$.
Let $\p = \diag(\OptSol)$ for the optimal solution $\OptSol$ of problem $\OurModel$, 
and choose the parameter $\mu$ of $T_j$, depending on the noise level $\epsilon$ involved in $\A$.
We show in Corollary \ref{Cor: Score of anchors if amount of noise is small} 
that the anchors $T_1, \ldots, T_r$ have high scores, i.e.,
\begin{align*}
 \SCORE(T_j, \p) > \frac{r}{r+1}
\end{align*}
for every $j \in R$.
Lemma 3.3 of \cite{Gil13} by Gillis implies that the same result holds 
for a feasible solution of problem $\BRRT$.

If we find all the anchors, then near-basis columns can be obtained by choosing one element from each anchor.
However, even if we use the point list obtained from the optimal solution of $\OurModel$, 
it is not an easy task to find anchors exactly.
We thus construct a collection $\FC$ of clusters that contains all the anchors, and observe the structure of $\FC$.
Our postprocessing algorithm is designed on the basis of this observation.
To describe $\FC$, we introduce $\Omega$, which is a collection of clusters, that will serve as the foundation of $\FC$.
Sort columns $\a_1, \ldots, \a_n$ of $\A$ by their $L_1$ distance to $\a_i$ in ascending order so that 
\begin{align*}
 \| \a_i - \a_{u_1} \|_1 \le \| \a_i - \a_{u_2} \|_1 \le \cdots \le \| \a_i - \a_{u_{n-1}} \|_1
\end{align*}
where $\{i, u_1, \ldots, u_{n-1}\} = N$. Then, construct 
\begin{align*}
 \Omega_i = \{\{i\}, \{i, u_1\}, \{i, u_1, u_2\}, \ldots, \{i, u_1, u_2, \ldots, u_{n-1}\}\}
\end{align*}
and let 
\begin{align*}
 \Omega = \bigcup_{i \in N} \Omega_i.
\end{align*}
For a cluster $S \in \Omega_i$, define the diameter of $S$ in $\Omega_i$ by
\begin{align*}
  \diam(S) = \max_{u \in S} \|\a_i - \a_u \|_1.
\end{align*}
For a point list $\p \in \Real_+^n$ and parameter $\mu$ used for constructing the anchors $T_1, \ldots, T_r$,
let 
\begin{align} \label{Exp: Fi}
 \FC_i(\p) = \left\{ S \in \Omega_i : \diam(S) \le 3 \mu, \ \SCORE(S, \p) > \frac{r}{r+1} \right\}
\end{align}
and 
\begin{align*}
 \FC(\p) = \bigcup_{i \in N} \FC_i(\p).
\end{align*}
In particular, if $\p$ is set as $\p = \diag(\OptSol)$ for the optimal solution $\OptSol$ of problem $\OurModel$,
we use the abbreviation $\FC_i$ for $\FC_i(\p)$ and the abbreviation $\FC$ for $\FC(\p)$. That is,
for $\p = \diag(\OptSol)$, 
\begin{align*} 
 \FC_i = \FC_i(\p) \quad \mbox{and} \quad \FC = \FC(\p).
\end{align*}

As mentioned above,
we have to choose $\mu$ depending on the noise level $\epsilon$ to ensure that the anchors can have high scores.
Hence, it is impossible to construct $\FC(\p)$.
But, it is possible to compute some of the clusters in $\FC(\p)$.
Consider a collection $\GC_i(\p)$ of clusters obtained by removing the condition $\diam(S) \le 3 \mu$ in $\FC_i(\p)$:
\begin{align} \label{Exp: Gi}
 \GC_i(\p) = \left\{ S \in \Omega_i :  \SCORE(S, \p) > \frac{r}{r+1} \right\}.
\end{align}
Let 
\begin{align*}
 \GC(\p) = \bigcup_{i \in N} \GC_i(\p).
\end{align*}
Unlike $\FC(\p)$, we can construct $\GC(\p)$.
Let $\hat{S} = \arg \min_{S \in \GC(\p)} \diam(S)$.
Since $\FC(\p) \subset \GC(\p)$, we have 
\begin{align*}
 \diam(\hat{S}) = \min_{S \in \GC(\p)} \diam(S) \le \min_{S \in \FC(\p)} \diam(S) \le 3 \mu.
\end{align*}
Hence, $\hat{S}$ belongs to $\FC(\p)$.
We can get it through $\GC(\p)$.

\begin{figure}[t]
 \centering
 \includegraphics[width=0.95\linewidth]{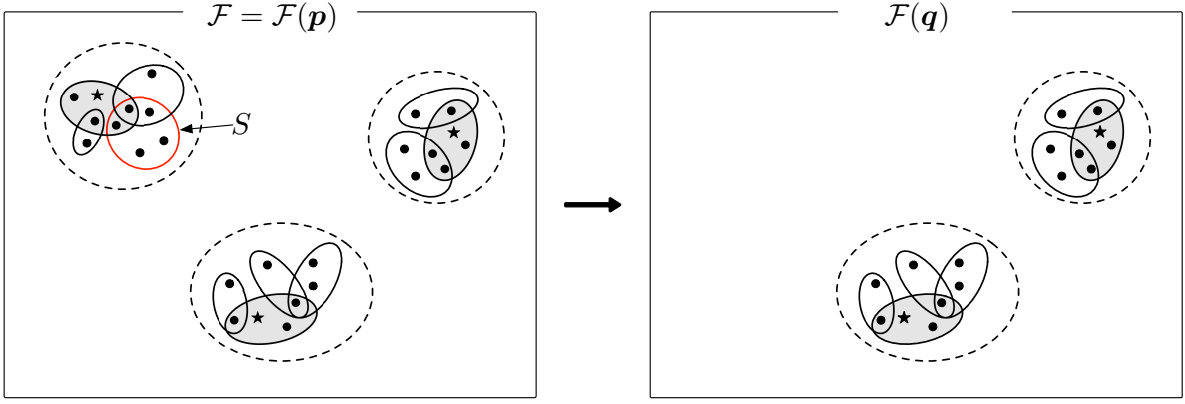}
 \caption{Illustration of $\FC = \FC(\p)$ and $\FC(\q)$ where
 $\p$ is a point list obtained from the optimal solution of problem $\OurModel$, 
 and $\q$ is a point list obtained by updating $\p$ such that
 $\q(u) = 0$ if $u$ belongs to the red-colored cluster $S$; otherwise, $\q(u) = \p(u)$:
 clusters (set of points surrounded by an oval), 
 anchors (set of points surrounded by an oval filled with gray color),
 the components $\bar{\FC}_i$ of $\FC$ (collection of clusters surrounded by a dotted oval), and basis columns (star).}
 \label{Fig: Structure of F}
\end{figure}

Let us look at $\FC$, which is an abbreviation of $\FC(\p)$ 
with the point list $\p$ obtained from the optimal solution of problem $\OurModel$.
We show in Lemma \ref{Lem: Cluster in F intersects with some of anchors} 
that any cluster in $\FC$ always has a common element with some anchor.
This means that clusters in $\FC$ are localized around each anchor, and anchors are the cores of $\FC$.
Hence, using the components $\bar{\FC}_1, \ldots, \bar{\FC}_r$ of $\FC$, given as 
\begin{align} \label{Exp: barF_j}
 \bar{\FC}_j = \{ S \in \FC : \max_{u \in S} \|\a_u - \w_j \|_1 \le 8\mu \},
\end{align}
we can write $\FC$ as 
\begin{align*}
 \FC = \bar{\FC}_1 \cup \cdots \cup \bar{\FC}_r.
\end{align*}
If anchors are far from each other; in other words, $\omega$ is large, 
then the components $\bar{\FC}_1, \ldots, \bar{\FC}_r$ are disjoint from each other.
The left of Figure \ref{Fig: Structure of F} illustrates $\FC$.

According to the observations made so far, 
it turns out that we can find a cluster belonging to one of $\bar{\FC}_1, \ldots, \bar{\FC}_r$.
A cluster of $\FC$ is obtained by using $\GC(\p)$, and it belongs to one of $\bar{\FC}_1, \ldots, \bar{\FC}_r$ 
because $\FC$ can be written as $\FC = \bar{\FC}_1 \cup \cdots \cup \bar{\FC}_r$.
Let us denote the obtained cluster by $S_1$, and assume that $S_1$ belongs to $\bar{\FC}_1$
in order to simplify the subsequent description.
By updating the point list $\p$, 
we can find a cluster belonging to one of the remaining components $\bar{\FC}_2, \ldots, \bar{\FC}_r$.
Let $\q$ be a point list made by updating $\p$ as 
\begin{align*}
 \q(u) = 
 \left\{
 \begin{array}{ll}
  0      & \text{if} \ u \in S_1, \\
  \p(u)  & \text{otherwise}.
 \end{array}
\right.
\end{align*}
We show in Lemma \ref{Lem: Representation of F(q)} that 
$\FC(\q)$ can be written as 
\begin{align*}
  \FC(\q) = \bar{\FC}_2 \cup \cdots \cup \bar{\FC}_r.
\end{align*}
The right of Figure \ref{Fig: Structure of F} illustrates $\FC(\q)$.
A cluster, denoted by $S_2$, of $\FC(\q)$ is obtained by using $\GC(\q)$, and it belongs to one of $\bar{\FC}_2, \ldots, \bar{\FC}_r$.
By repeating the procedure, we can find $r$ clusters $S_1, \ldots, S_r$ 
such that $S_j \in \bar{\FC}_j$ for each $j \in R$ by rearranging the indices of $\bar{\FC}_1, \ldots, \bar{\FC}_r$.
The obtained clusters provide near-basis columns. 
We choose one element from each cluster and construct the set $J$. 
Rearranging the columns of $\W$, we find that it satisfies
\begin{align*}
 \|\W - \A(:,J) \|_1 \le 8 \mu.
\end{align*}
This leads to Theorem \ref{Thm: Robustness of refined algo with postprocessing}.

\begin{algorithm}[t]
 \caption{Refinement of Hottopixx with postprocessing}
 \label{Algo: Refined algo with postprocessing}
 \smallskip
 Input: $\A \in \Real^{d \times n}$ and a positive integer $r$. \\
 Output: $\W_{\out} \in \Real^{d \times r}$.
 \begin{enumerate}[1.]
  \item Compute the optimal solution $\OptSol \in \Real^{n \times n}$ of problem $\OurModel(\A,r)$.

  \item Set $\p_1 = \diag(\OptSol), J = \emptyset$ and $\ell = 1$. Perform the following procedure.
	\begin{enumerate}[{2-}1.]
	 \item Find $S_\ell$ such that 
	       \begin{align*}
		S_\ell = \arg \min_{S \in  \GC(\p_\ell)} \diam(S).
	       \end{align*}
	 \item Choose one element from $S_\ell$ and add it to $J$. Increase $\ell$ by $1$.
	 \item If $\ell = r$, then return $\W_{\out} = \A(:,J)$ and terminate;
	       otherwise, construct $\p_\ell \in \Real_+^n$ as
	       \begin{align*}
		\p_\ell(u) = 
		\left\{
		 \begin{array}{ll}
		  0            & \text{if} \ u \in S_1 \cup \cdots \cup S_{\ell-1}, \\
		  \p_1(u)  & \text{otherwise,}
		 \end{array}
		\right.
	       \end{align*}
	       and go to step 2-1.
	\end{enumerate}
 \end{enumerate}
\end{algorithm}

Algorithm \ref{Algo: Refined algo with postprocessing} is a formal description of our algorithm.
It takes as input $(\A, r)$.
Step 2 is the postprocessing.
The cost of step 2 is dominated by step 2-1.
The cost of step 2-1 is in turn dominated by the computation of the $L_1$ distance 
between any two columns of $\A \in \Real^{d \times n}$,
which takes $O(n^2 d)$ flops. 
We below summarize the definition and role of $T_j, \FC(\p), \GC(\p)$ and $\bar{\FC}_j$,
which are used for analyzing Algorithm \ref{Algo: Refined algo with postprocessing} in Section \ref{Sec: Analysis}.

\begin{itemize}

 \item $T_j$ is a cluster, called anchor, which is defined as in (\ref{Exp: anchor}).
       
 \item $\FC(\p)$ is a collection of clusters constructed by using a point list $\p$.
       This is formed as  $\FC(\p) = \cup_{i \in N} \FC_i(\p)$ where 
       $\FC_i(\p)$ is defined as in (\ref{Exp: Fi}).
       If $\p$ is a point list obtained from the optimal solution of problem $\OurModel$, 
       we abbreviate $\FC(\p)$ and $\FC_i(\p)$ as $\FC$ and $\FC_i$, respectively.

 \item $\GC(\p)$ is a collection of clusters constructed by using a point list $\p$.
       This is formed as  $\GC(\p) = \cup_{i \in N} \GC_i(\p)$ where 
       $\GC_i(\p)$ is defined as in (\ref{Exp: Gi}), 
       which is obtained by discarding some condition imposed on $\FC_i(\p)$.
       
 \item $\bar{\FC}_j$ is the component of $\FC$, defined as in (\ref{Exp: barF_j}).
       We show in Lemma \ref{Lem: Representation of F(p)} that $\FC$ is written as $\FC = \cup_{j \in R} \bar{\FC}_j$.

\end{itemize}

\subsection{Analysis} \label{Sec: Analysis}

\subsubsection{Scores of Anchors}
We show that anchors have high scores by using the point list obtained by solving  problem $\OurModel$.

\begin{lem} \label{Lem: Score of anchors}
 Let $\A$ satisfy Assumption \ref{Asm: Noisy separable matrix A}.
 Assume $\kappa > 0$. Let $\mu$ satisfy $\mu \neq 0$ and $\epsilon \le \mu$.
 Set $\p = \diag(\OptSol)$ for the optimal solution $\OptSol$ of  problem $\OurModel(\A, r)$.
 Then, anchors $T_1, \ldots, T_r$ with  parameter $\mu$ satisfy
 \begin{align*}
  \SCORE(T_j, \p) \ge 1 - \frac{16 \epsilon}{\kappa \mu (1 - \epsilon)}.
 \end{align*}
 for every $j \in R$.
\end{lem}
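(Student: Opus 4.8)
The plan is to fix $j \in R$ and the basis index $i = \phi(j)$ (so $\v_i = \w_j$), and to reduce the claim to a lower bound on the mass that $\OptSol$ places near $\w_j$. Using the constraint $\X(i,j) \le \X(i,i)$ of $\OurModel$, which gives $\p(u) = \OptSol(u,u) \ge \OptSol(u,i)$, I would first observe
\begin{align*}
 \SCORE(T_j, \p) = \sum_{u \in T_j} \p(u) \ge \sum_{u \in T_j} \OptSol(u,i),
\end{align*}
so it suffices to control the single column $\OptSol(:,i)$. I would attach two scalars to it: its total mass $L = \| \OptSol(:,i) \|_1 = \sum_{u \in N} \OptSol(u,i)$ and the quantity $\eta = \H(j,:)\OptSol(:,i)$, which is the $j$th coordinate of $\g = \H\OptSol(:,i)$, i.e.\ the coefficient of $\w_j$ in $\V\OptSol(:,i) = \W\bigl(\H\OptSol(:,i)\bigr)$. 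This mirrors the proof of Lemma \ref{Lem: Score of basis}, but summing over the whole anchor in place of a single diagonal entry is exactly what lets the $(1-\beta)$ factor disappear.

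Two ingredients would then be established. First, a lower bound $\eta \ge 1 - \frac{4\epsilon}{\kappa(1-\epsilon)}$: since $\V\OptSol(:,i) = \W\g$ with $\g \ge \zero$, Lemma \ref{Lem: Properties of optimal solution} gives $\| \w_j - \W\g \|_1 = \|\v_i - \V\OptSol(:,i)\|_1 \le \frac{4\epsilon}{1-\epsilon}$; assuming $\eta = \g(j) < 1$ (otherwise the bound is trivial) I would factor $\w_j - \W\g = (1-\eta)\bigl(\w_j - \W(:, R\setminus\{j\})\z\bigr)$ with $\z = \g_{-j}/(1-\eta) \ge \zero$ and invoke the definition of $\kappa$ (this is where $\kappa > 0$ is used) to get $(1-\eta)\kappa \le \frac{4\epsilon}{1-\epsilon}$. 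Second, for every $u \notin T_j$ I would show $1 - \H(j,u) > \mu/2$: because the columns of $\H$ are nonnegative with unit $L_1$ norm, $\v_u - \w_j = \sum_{k \neq j}\H(k,u)(\w_k - \w_j)$, whence $\|\v_u - \w_j\|_1 \le 2(1 - \H(j,u))$ and $\|\a_u - \w_j\|_1 \le \epsilon + 2(1-\H(j,u))$; combining with $\|\a_u - \w_j\|_1 > 2\mu$ and $\epsilon \le \mu$ yields the claim.

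With these in hand I would assemble the estimate. Splitting $\sum_{u\in T_j}\OptSol(u,i) = L - \sum_{u \notin T_j}\OptSol(u,i)$, using $1-\H(j,u) > \mu/2$ on the tail, and noting $\sum_{u\in N}(1-\H(j,u))\OptSol(u,i) = L - \eta$, I obtain
\begin{align*}
 \sum_{u \notin T_j}\OptSol(u,i) \le \frac{2}{\mu}\sum_{u \notin T_j}(1-\H(j,u))\OptSol(u,i) \le \frac{2}{\mu}(L - \eta),
\end{align*}
and hence $\SCORE(T_j,\p) \ge L\bigl(1 - \frac{2}{\mu}\bigr) + \frac{2}{\mu}\eta$.

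The hard part is that lower-bounding $L$ and $\eta$ naively leaves a spurious extra term $\frac{4\epsilon}{\kappa(1-\epsilon)}$ that is absent from the target. The resolution is a sign observation: for the nontrivial range $\mu < 2$ the coefficient $1 - \frac{2}{\mu}$ is negative, so one must insert the \emph{upper} bound $L \le 1 + \frac{4\epsilon}{1-\epsilon}$ of Lemma \ref{Lem: Properties of optimal solution} (turning that term into a helpful contribution) together with the lower bound on $\eta$; a short simplification using $\kappa \le 1$ then collapses everything to exactly $1 - \frac{16\epsilon}{\kappa\mu(1-\epsilon)}$. The complementary case $\mu \ge 2$ is immediate: since $\|\a_u - \w_j\|_1 \le \|\a_u\|_1 + \|\w_j\|_1 \le 2 + \epsilon \le 2\mu$ for all $u$, the anchor is $T_j = N$, so $\SCORE(T_j,\p) = \trace(\OptSol) = r \ge 1$, which already exceeds the claimed bound.
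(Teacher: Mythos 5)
Your proposal is correct and follows essentially the same route as the paper: the same quantity $\eta = \H(j,:)\OptSol(:,i)$ lower-bounded via $\kappa$ and Lemma \ref{Lem: Properties of optimal solution}, the same bound $\H(j,u) < 1 - \mu/2$ off the anchor (the paper's Lemma \ref{Lem: Upper bound on H(j,T_j^c)}), the same use of the constraint $\OptSol(u,i) \le \OptSol(u,u)$, and a final inequality that is an algebraic rearrangement of the paper's. The only differences are presentational: you dispose of the degenerate case $\eta \ge 1$ by triviality where the paper shifts by $\tilde{\epsilon} = \frac{4\epsilon}{1-\epsilon}$ to keep $1-\eta+\tilde{\epsilon} \ge 0$, and you make the harmless $\mu \ge 2$ edge case explicit.
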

We can prove this in a similar way as Lemma \ref{Lem: Score of basis}; the proof is in Appendix B.
From Lemma \ref{Lem: Score of anchors},
we immediately obtain Corollary \ref{Cor: Score of anchors if amount of noise is small}.

\begin{cor}\label{Cor: Score of anchors if amount of noise is small}
 Let $\A$ satisfy Assumption \ref{Asm: Noisy separable matrix A}.
 Set $\p = \diag(\OptSol)$ for the optimal solution $\OptSol$ of  problem $\OurModel(\A, r)$.
 Consider two cases as follows:
\begin{itemize}
 \item Let $\epsilon$ satisfy $\epsilon < \frac{\kappa \omega}{578(r+1)}$. 
       The value of $\mu$ is set as $\mu = \frac{17(r+1)\epsilon}{\kappa} + \xi$ 
       by choosing an arbitrary real number $\xi$ from the open interval $(0, \frac{\kappa}{35})$.

 \item Let $\epsilon$ satisfy $\epsilon < \frac{\kappa^2}{289(r+1)^2}$. 
       The value of $\mu$ is set as $\mu = \sqrt{\epsilon} + \xi$ 
       by choosing an arbitrary real number $\xi$ from the open interval $(0, \frac{\kappa}{35})$.
\end{itemize}
 The following hold in both cases.
 \begin{enumerate}[{\normalfont (a)}]
  \item $0 \le \epsilon < 1$.
  \item $0 < \mu < \frac{\omega}{17}$.
  \item $\epsilon \le \mu$.
  \item	$\SCORE(T_j, \p) > \frac{r}{r+1}$ for every $j \in R$.
 \end{enumerate}
\end{cor}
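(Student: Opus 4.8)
The plan is to verify the four claims (a)--(d) as a chain of elementary inequalities, handling the two cases in parallel and drawing only on the bounds $0 \le \kappa \le 1$ and $0 \le \omega \le 2$ from (\ref{Exp: Bounds on kappa})--(\ref{Exp: Bounds on omega}), the relation $\kappa \le \omega$ from (\ref{Exp: kappa is less than omega}), and $r+1 \ge 2$. A preliminary observation I would record first is that $\kappa > 0$ is in fact forced in both cases: neither hypothesis $\epsilon < \frac{\kappa \omega}{578(r+1)}$ nor $\epsilon < \frac{\kappa^2}{289(r+1)^2}$ can hold for $\epsilon \ge 0$ unless $\kappa > 0$, and then $\kappa \le \omega$ yields $\omega > 0$ as well. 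This licenses division by $\kappa$, $\omega$, and $\mu$ throughout, and it supplies the one hypothesis of Lemma \ref{Lem: Score of anchors} not stated in the corollary.

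For (a), I would simply bound the two thresholds: since $\kappa \le 1$, $\omega \le 2$, and $r+1 \ge 2$, the Case~1 threshold is at most $\frac{2}{578 \cdot 2} = \frac{1}{578}$ and the Case~2 threshold at most $\frac{1}{289 \cdot 4} = \frac{1}{1156}$, which together with $\epsilon \ge 0$ from Assumption \ref{Asm: Noisy separable matrix A}(b) gives $0 \le \epsilon < 1$. For (b), the lower bound $\mu > 0$ is immediate from $\xi > 0$ and the nonnegativity of the remaining summand. For the upper bound I would split $\mu$ into its two pieces. In Case~1 the bound on $\epsilon$ gives $\frac{17(r+1)\epsilon}{\kappa} < \frac{17\omega}{578} = \frac{\omega}{34}$, and in Case~2 taking square roots gives $\sqrt{\epsilon} < \frac{\kappa}{17(r+1)} \le \frac{\kappa}{34} \le \frac{\omega}{34}$; meanwhile $\xi < \frac{\kappa}{35} \le \frac{\omega}{35}$ by $\kappa \le \omega$. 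Adding the pieces yields $\mu < \frac{\omega}{34} + \frac{\omega}{35} < \frac{\omega}{17}$ in both cases. This is the step where the exact values of the thresholds ($578 = 17 \cdot 34$, $289 = 17^2$) and the choice $\xi < \frac{\kappa}{35}$ are precisely calibrated, so it is the one I would write out most carefully.

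For (c), in Case~1 I would note $\frac{17(r+1)}{\kappa} \ge 34 \ge 1$, whence $\mu \ge \frac{17(r+1)\epsilon}{\kappa} \ge \epsilon$; in Case~2, the bound $\epsilon < 1$ from (a) gives $\epsilon \le \sqrt{\epsilon} \le \mu$. Finally, for (d) I would invoke Lemma \ref{Lem: Score of anchors}, whose hypotheses $\kappa > 0$, $\mu \neq 0$, and $\epsilon \le \mu$ are now all in hand, to obtain $\SCORE(T_j, \p) \ge 1 - \frac{16 \epsilon}{\kappa \mu (1-\epsilon)}$. The target $\SCORE(T_j, \p) > \frac{r}{r+1}$ then reduces to $16(r+1)\epsilon < \kappa \mu (1-\epsilon)$. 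Using $\kappa \mu > 17(r+1)\epsilon$ in Case~1 (respectively $\kappa \mu > \kappa \sqrt{\epsilon}$ together with $\sqrt{\epsilon} < \frac{\kappa}{17(r+1)}$ in Case~2), the relevant ratio $\frac{16\epsilon}{\kappa \mu (1-\epsilon)}$ is bounded above by $\frac{16}{17(r+1)(1-\epsilon)}$, which lies below $\frac{1}{r+1}$ exactly when $\epsilon < \frac{1}{17}$; this holds since (a) already gives $\epsilon < \frac{1}{578}$. The $\epsilon = 0$ subcase is trivial, as the score bound is then $1$.

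I would expect no deep obstacle: the argument is a bookkeeping of constants. The only genuine care is in (b) and (d), where the slack between the factor $17$ and the factor $16$ and the correction factor $(1-\epsilon)$ must be absorbed by the strength of the noise-level thresholds, so keeping the constants aligned is the crux of the verification.
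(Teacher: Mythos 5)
Your proposal is correct and follows essentially the same route as the paper's Appendix C: establish $\kappa>0$ from the noise-level hypotheses, derive $\epsilon<1/578$ (resp.\ $\epsilon<1/1156$), bound $\mu$ via $\frac{1}{34}+\frac{1}{35}<\frac{1}{17}$, and then feed Lemma \ref{Lem: Score of anchors} the bound $\frac{16}{17(1-\epsilon)}<1$ for $\epsilon<\frac{1}{17}$. The only differences are cosmetic (the paper introduces $\lambda$ and writes $\frac{16}{1-\epsilon}<17$ where you rearrange the same inequality), so no further comment is needed.
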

We can easily check that the corollary holds. The proof is given in Appendix C.
Part (a) just tells us that the bounds imposed on $\epsilon$ in the two cases
do not violate Assumption \ref{Asm: Noisy separable matrix A}(b).
The role of $\xi$ is to prevent the value of $\mu$ from being zero; hence,
we are allowed to choose an arbitrary real number from the open interval $(0, \frac{\kappa}{35})$.

\subsubsection{Structure of $\FC$}
We prove the observations about $\FC$ that we made in Section \ref{Subsec: Algorithm of postprocessing}.

\begin{lem} \label{Lem: F contains hatS}
 Let $\FC(\q) \neq \emptyset$ for some $\q \in \Real_+^n$. 
 Then, $\GC(\q) \neq \emptyset$.
 Moreover, $\FC(\q)$ contains $\hat{S} = \arg \min_{S \in \GC(\q)} \diam(S)$.
\end{lem}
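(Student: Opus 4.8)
Looking at Lemma \ref{Lem: F contains hatS}, I need to prove three things: that $\GC(\q)$ is nonempty, that $\hat{S}$ is well-defined as its minimizer, and that this $\hat{S}$ actually lands in $\FC(\q)$.

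\begin{proof}[(Proof proposal)]
The plan is to exploit the inclusion $\FC(\q) \subset \GC(\q)$, which is immediate from the definitions in (\ref{Exp: Fi}) and (\ref{Exp: Gi}): every cluster $S \in \FC_i(\q)$ satisfies $\SCORE(S, \q) > \frac{r}{r+1}$, and that single condition is exactly what membership in $\GC_i(\q)$ requires, so $\FC_i(\q) \subset \GC_i(\q)$ for each $i \in N$, and taking unions over $i$ gives $\FC(\q) \subset \GC(\q)$. First I would invoke the hypothesis $\FC(\q) \neq \emptyset$ together with this inclusion to conclude $\GC(\q) \neq \emptyset$. Since $\Omega$ is finite (each $\Omega_i$ has $n$ clusters and there are $n$ choices of $i$), $\GC(\q)$ is a finite nonempty collection, so the minimum of $\diam(\cdot)$ over $\GC(\q)$ is attained and $\hat{S} = \arg\min_{S \in \GC(\q)} \diam(S)$ is well-defined.

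The heart of the argument is showing $\hat{S} \in \FC(\q)$. The idea is to replay the short chain of inequalities already used in Section \ref{Subsec: Algorithm of postprocessing}: because $\FC(\q) \subset \GC(\q)$, minimizing the diameter over the larger collection $\GC(\q)$ can only produce a value no larger than minimizing over $\FC(\q)$, so
\begin{align*}
 \diam(\hat{S}) = \min_{S \in \GC(\q)} \diam(S) \le \min_{S \in \FC(\q)} \diam(S) \le 3\mu,
\end{align*}
where the final bound uses that every cluster in $\FC(\q)$ satisfies $\diam(S) \le 3\mu$ by definition (\ref{Exp: Fi}), and $\FC(\q)$ is nonempty so this minimum exists. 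Now $\hat{S} \in \GC(\q)$ means $\hat{S} \in \GC_i(\q)$ for some $i$, hence $\hat{S} \in \Omega_i$ and $\SCORE(\hat{S}, \q) > \frac{r}{r+1}$; combined with $\diam(\hat{S}) \le 3\mu$ this verifies both defining conditions of $\FC_i(\q)$, so $\hat{S} \in \FC_i(\q) \subset \FC(\q)$.

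I expect the only subtlety is bookkeeping around the diameter: $\diam(S)$ is defined relative to the anchor index $i$ of the chain $\Omega_i$ containing $S$, so I must make sure $\hat{S}$ is regarded through the same $\Omega_i$ that witnesses $\hat{S} \in \GC_i(\q)$, and that the $\diam$ value is unambiguous for it. Since each $S$ in $\GC(\q)$ arrives via a specific $\GC_i(\q)$, the diameter is well-defined per membership, and the argument above respects that. No obstacle arises from the choice of $\mu$ here, because the lemma's statement makes no hypothesis on $\mu$ or $\epsilon$ — it is a purely combinatorial consequence of the set inclusion and the shared score threshold.
\end{proof}
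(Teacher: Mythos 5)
Your proposal is correct and follows essentially the same route as the paper's own proof: establish $\FC(\q) \subset \GC(\q)$ from the definitions, deduce nonemptiness of $\GC(\q)$, and then use the chain $\diam(\hat{S}) = \min_{S \in \GC(\q)} \diam(S) \le \min_{S \in \FC(\q)} \diam(S) \le 3\mu$ together with the score condition inherited from $\GC_{i_*}(\q)$ to place $\hat{S}$ in $\FC_{i_*}(\q)$. The added remarks on finiteness of $\Omega$ and on tracking the index $i_*$ are harmless elaborations of points the paper leaves implicit.
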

\begin{proof}
 Since $\FC_i(\q) \subset \GC_i(\q) \subset \GC(\q)$ and $\FC(\q) = \bigcup_{i \in N} \FC_i(\q)$,
 any element of $\FC(\q)$ belongs to $\GC(\q)$. 
 Hence, $\FC(\q) \subset \GC(\q)$ holds.
 Consequently, $\FC(\q) \neq \emptyset$ implies $\GC(\q) \neq \emptyset$.

 Since $\hat{S}$ belongs to $\GC(\q) = \bigcup_{i \in N} \GC_i(\q)$, 
 we have $\hat{S} \in \Omega_{i_*}$ for some $i_* \in N$ and $\SCORE(\hat{S}, \q) > \frac{r}{r+1}$.
 From the relation $\FC(\q) \subset \GC(\q)$, we have 
 \begin{align*}
  \diam(\hat{S}) = \min_{S \in \GC(\q)} \diam(S) \le \min_{S \in \FC(\q)} \diam(S) \le 3 \mu.
 \end{align*}
 Consequently, $\hat{S} \in \FC_{i_*}(\q)$, which implies $\hat{S} \in \FC(\q)$.
\end{proof}

\begin{lem} \label{Lem: F contains all anchors}
 Frame the hypotheses of Corollary \ref{Cor: Score of anchors if amount of noise is small}.
 The following hold:
 \begin{enumerate}[{\normalfont (a)}]
  \item Anchor $T_j$ is not empty.
  \item All anchors $T_1, \ldots, T_r$ belong to $\FC$.
  \item Anchor $T_j$ belongs to the component $\bar{\FC}_j$ of $\FC$.
 \end{enumerate}
\end{lem}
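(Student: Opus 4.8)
The plan is to prove the three parts in order, using $i=\phi(j)$ (the basis index with $\w_j=\v_{\phi(j)}$) as the natural center throughout, and to lean on Corollary~\ref{Cor: Score of anchors if amount of noise is small} for the score bound together with the relations $0<\mu<\frac{\omega}{17}$ and $\epsilon\le\mu$. I would first record that the choices of $\mu$ in that corollary in fact give the stronger bound $\epsilon<\frac{\mu}{17}$ (in the first case $\mu>17(r+1)\epsilon/\kappa\ge 17\epsilon$ since $\kappa\le 1\le r+1$, and in the second case $\epsilon/\mu\le\sqrt{\epsilon}<\frac{1}{17}$), which I expect to be essential for part (b). Part (a) is then immediate: since $\a_{\phi(j)}=\v_{\phi(j)}+\n_{\phi(j)}=\w_j+\n_{\phi(j)}$, we get $\|\a_{\phi(j)}-\w_j\|_1=\|\n_{\phi(j)}\|_1\le\|\N\|_1\le\epsilon\le 2\mu$ by Assumption~\ref{Asm: Noisy separable matrix A}, so $\phi(j)\in T_j$ and $T_j\neq\emptyset$.

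For part (b) I would show $T_j\in\FC_{\phi(j)}\subseteq\FC$ by verifying the three conditions in the definition (\ref{Exp: Fi}) with center $\phi(j)$. The score condition $\SCORE(T_j,\p)>\frac{r}{r+1}$ is exactly Corollary~\ref{Cor: Score of anchors if amount of noise is small}(d). The diameter condition follows from the triangle inequality: for every $u\in T_j$ we have $\|\a_{\phi(j)}-\a_u\|_1\le\|\a_{\phi(j)}-\w_j\|_1+\|\w_j-\a_u\|_1\le\epsilon+2\mu\le 3\mu$, so $\diam(T_j)\le 3\mu$ when measured from $\phi(j)$. The remaining, and decisive, requirement is that $T_j$ is genuinely one of the prefix sets comprising $\Omega_{\phi(j)}$.

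The hard part will be this membership $T_j\in\Omega_{\phi(j)}$. Since $\Omega_{\phi(j)}$ consists precisely of the initial segments of the columns sorted by $L_1$ distance to $\a_{\phi(j)}$, I must show $T_j$ is downward closed in this order, i.e.\ $\max_{u\in T_j}\|\a_{\phi(j)}-\a_u\|_1\le\min_{v\notin T_j}\|\a_{\phi(j)}-\a_v\|_1$, so that no non-anchor column is interleaved among the anchor columns. The in-columns satisfy $\|\a_{\phi(j)}-\a_u\|_1\le 2\mu+\epsilon$ and the out-columns satisfy $\|\a_{\phi(j)}-\a_v\|_1>2\mu-\epsilon$, leaving an ambiguous band of width $2\epsilon$ around radius $2\mu$ that must be ruled out. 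To close it I would pass to the noiseless columns $\v_v$ and split by type: a clean basis column $\w_{j'}$ with $j'\neq j$ lies at distance $\ge\omega>17\mu$ from $\w_j$ by $\mu<\frac{\omega}{17}$, hence sits well outside the band; the delicate case is a mixture column whose clean value sits at distance near $2\mu$ from $\w_j$, where I would lower bound its distance to $\w_j$ by $\kappa$ times its off-$\w_j$ mass and combine this with the strong separation $\epsilon<\frac{\mu}{17}$. Controlling these borderline mixture columns is exactly where the argument is most delicate, and I expect it to be the crux of the lemma: in the noiseless case $\a_{\phi(j)}=\w_j$ and $T_j$ is literally a ball about a data point, so membership is trivial, and all the difficulty lies in propagating that picture through the $O(\epsilon)$ perturbation of the center.

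For part (c), once $T_j\in\FC$ is in hand, membership in the component $\bar{\FC}_j$ of (\ref{Exp: barF_j}) is immediate: every $u\in T_j$ satisfies $\|\a_u-\w_j\|_1\le 2\mu\le 8\mu$, so $\max_{u\in T_j}\|\a_u-\w_j\|_1\le 8\mu$, which is precisely the defining condition of $\bar{\FC}_j$. Hence $T_j\in\bar{\FC}_j$, completing the proof.
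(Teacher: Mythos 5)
Your parts (a) and (c) coincide with the paper's proof, and for part (b) you verify the same three conditions the paper verifies (membership in $\Omega_{\phi(j)}$, $\diam(T_j)\le 3\mu$, and $\SCORE(T_j,\p)>\frac{r}{r+1}$), with the same estimates for the latter two. The divergence is at the membership condition $T_j\in\Omega_{\phi(j)}$: the paper disposes of it in one line, asserting that it follows from $\phi(j)\in T_j$, whereas you correctly observe that $\Omega_{\phi(j)}$ consists only of initial segments of the distance ordering around $\a_{\phi(j)}$, so that $T_j$ would have to be downward closed in that ordering. You have put your finger on the one step the paper does not actually justify.

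However, your proposal leaves exactly that step as a sketch, and the sketch will not close. The ``ambiguous band'' of width $2\epsilon$ around radius $2\mu$ cannot be emptied using the separability structure: take $u$ with $\|\a_u-\w_j\|_1=2\mu$ (so $u\in T_j$) and $v$ with $\|\a_v-\w_j\|_1=2\mu+\delta$ for small $\delta>0$ (so $v\notin T_j$). Nothing in Assumption \ref{Asm: Noisy separable matrix A} or in the definitions of $\kappa,\omega,\beta$ prevents mixture columns from sitting at such distances from $\w_j$, and if $\n_{\phi(j)}$ is oriented suitably one can have $\|\a_v-\a_{\phi(j)}\|_1$ as small as $2\mu+\delta-\epsilon$ while $\|\a_u-\a_{\phi(j)}\|_1$ is as large as $2\mu+\epsilon$; for $\delta<2\epsilon$ the non-member $v$ then precedes the member $u$ in the ordering and $T_j$ is not a prefix. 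Your strengthened bound $\epsilon<\mu/17$ does not help, since $\delta$ may be arbitrarily small. What can be salvaged --- and what the downstream argument actually needs, namely that each $\bar{\FC}_j$ is nonempty --- is that the genuine prefix $S=\{u\in N:\|\a_u-\a_{\phi(j)}\|_1\le 2\mu+\epsilon\}$ contains $T_j$, hence satisfies $\SCORE(S,\p)\ge\SCORE(T_j,\p)>\frac{r}{r+1}$ by nonnegativity of $\p$, has $\diam(S)\le 2\mu+\epsilon\le 3\mu$, and obeys $\max_{u\in S}\|\a_u-\w_j\|_1\le 2\mu+2\epsilon\le 8\mu$, so $S\in\bar{\FC}_j$. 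That establishes a weakened form of the lemma, not the statement as written, so your plan for the crux step needs to be replaced rather than completed.
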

\begin{proof} 
 Separability means that there is a map $\phi : R \rightarrow N$ such that $\w_j = \v_{\phi(j)}$  for each $j \in R$. 
 We use the map $\phi$ in the proof of parts (a) and (b).

 (a) From Corollary \ref{Cor: Score of anchors if amount of noise is small}(c),
 we have 
 \begin{align*}
  \| \a_{\phi(j)} - \w_j \|_1 = \|\v_{\phi(j)} + \n_{\phi(j)} - \w_j \|_1 = \|\n_{\phi(j)}\|_1  \le \epsilon  \le \mu.
 \end{align*}
 Hence, $T_j$ contains $\phi(j)$, which means that $T_j$ is not empty.

 (b) We show that $T_j$ belongs to $\FC_{\phi(j)}$ for each $j \in R$.
 Since $\phi(j) \in T_j$, as shown in part (a), we have $T_j \in \Omega_{\phi(j)}$.
 By Corollary \ref{Cor: Score of anchors if amount of noise is small}(d), 
 the score of $T_j$ by $\p$ satisfies $\SCORE(T_j, \p) > \frac{r}{r+1}$.
 The diameter of $T_j$ in $\Omega_{\phi(j)}$ satisfies $\diam(T_j) \le 3 \mu$, 
 since any $u \in T_j$ satisfies 
 \begin{align*}
  \|\a_u - \a_{\phi(j)} \|_1 = \|\a_u - \v_{\phi(j)}  - \n_{\phi(j)}\|_1 = \|\a_u - \w_j  - \n_{\phi(j)}\|_1
  \le \|\a_u - \w_j \|_1 + \|n_{\phi(j)}\|_1 
  &\le 2\mu + \epsilon  \\
  &\le 3 \mu.
 \end{align*}
 The last inequality uses Corollary \ref{Cor: Score of anchors if amount of noise is small}(c).
 Hence, $T_j \in \FC_{\phi(j)}$ for each $j \in R$.
 In addition, the definition of $\FC$ implies $\FC_{\phi(j)} \subset \FC$.
 Consequently, $T_1, \ldots, T_r$ belong to $\FC$.

 (c) We have already shown $T_j \in \FC$ for each $j \in R$ in part (b).
 The definition of $T_j$ implies that, for any $u \in T_j$, we have $\|\a_u - \w_j \|_1 \le 2 \mu \le 8\mu$.
 Hence, $T_j$ belongs to $\bar{\FC}_j$.
\end{proof}
Parts (a) and (c) tell us that the components $\bar{\FC}_1, \ldots, \bar{\FC}_r$ of $\FC$ are not empty.
We will use this observation in the proof of Theorem \ref{Thm: Robustness of refined algo with postprocessing}.

\begin{lem}\label{Lem: Cluster in F intersects with some of anchors}
 Frame the hypotheses of Corollary \ref{Cor: Score of anchors if amount of noise is small}.
 For any $S \in \FC$, there is some $j \in R$ such that $S \cap T_j \neq \emptyset$.
\end{lem}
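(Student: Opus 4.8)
The plan is to argue by contradiction: I would assume that some $S \in \FC$ meets none of the anchors $T_1, \ldots, T_r$ and show that this forces $\SCORE(S, \p)$ to be too small, contradicting the fact that membership in $\FC$ requires a high score. The whole argument rests on a pigeonhole-style count driven by the trace constraint of $\OurModel$, together with the disjointness of the anchors and the fact that each anchor carries a large score.

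First I would record three ingredients. (i) Since $\p = \diag(\OptSol)$ and $\OptSol$ is feasible for $\OurModel(\A,r)$, the constraints $\trace(\OptSol) = r$ and $\X(i,j) \ge 0$ yield $\sum_{u \in N} \p(u) = r$ with $\p \ge \zero$. (ii) By Corollary \ref{Cor: Score of anchors if amount of noise is small}(d), $\SCORE(T_j, \p) > \frac{r}{r+1}$ for every $j \in R$. (iii) The anchors are pairwise disjoint: if $u \in T_{j_1} \cap T_{j_2}$ with $j_1 \neq j_2$, then the triangle inequality gives $\|\w_{j_1} - \w_{j_2}\|_1 \le \|\a_u - \w_{j_1}\|_1 + \|\a_u - \w_{j_2}\|_1 \le 4\mu < \omega$, where the last step uses $\mu < \frac{\omega}{17}$ from Corollary \ref{Cor: Score of anchors if amount of noise is small}(b); this contradicts $\omega = \min_{j_1 \neq j_2} \|\w_{j_1} - \w_{j_2}\|_1$.

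Combining these, the score carried by the union of the anchors would satisfy
\begin{align*}
 \sum_{u \in T_1 \cup \cdots \cup T_r} \p(u) = \sum_{j \in R} \SCORE(T_j, \p) > r \cdot \frac{r}{r+1} = \frac{r^2}{r+1},
\end{align*}
so that, since $\p \ge \zero$ and the total score is $r$, the score lying outside the anchors obeys
\begin{align*}
 \sum_{u \in N \setminus (T_1 \cup \cdots \cup T_r)} \p(u) = r - \sum_{u \in T_1 \cup \cdots \cup T_r} \p(u) < r - \frac{r^2}{r+1} = \frac{r}{r+1}.
\end{align*}
Now if $S \cap T_j = \emptyset$ for all $j \in R$, then $S \subset N \setminus (T_1 \cup \cdots \cup T_r)$, whence $\SCORE(S, \p) = \sum_{u \in S} \p(u) \le \sum_{u \in N \setminus (T_1 \cup \cdots \cup T_r)} \p(u) < \frac{r}{r+1}$. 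But $S \in \FC = \FC(\p)$ means $S \in \FC_i(\p)$ for some $i \in N$, and definition (\ref{Exp: Fi}) forces $\SCORE(S, \p) > \frac{r}{r+1}$, a contradiction. Hence $S$ must intersect some anchor.

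I do not anticipate any genuine obstacle here; the only delicate point is the pairwise disjointness of the anchors, which is precisely why the hypothesis $\mu < \frac{\omega}{17}$ of Corollary \ref{Cor: Score of anchors if amount of noise is small}(b) is needed. Everything else reduces to the trace-driven count displayed above.
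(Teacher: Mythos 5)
Your proposal is correct and follows essentially the same route as the paper's proof: pairwise disjointness of the anchors via the triangle inequality and $\mu < \omega/17$, the trace constraint giving total score $r$, and a contradiction with the score threshold $\frac{r}{r+1}$ required for membership in $\FC$. The only cosmetic difference is that you bound the score mass outside the union of anchors first and then observe $S$ lies in that complement, whereas the paper bounds $\SCORE(S,\p)+\sum_{j}\SCORE(T_j,\p)\le r$ directly; these are the same count.
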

\begin{proof}
 We start by showing that any two different anchors do not have a common element.
 Let $x, y \in R$ and $x \neq y$.  No $u \in T_x$ belongs to $T_y$, since 
 \begin{align*}
  \| \a_u - \w_y \|_1   
  & =  \| (\w_x - \w_y) + (\a_u - \w_x)  \|_1     & \\
  & \ge \| \w_x - \w_y \|_1 - \|\a_u - \w_x \|_1  & \\
  & \ge \omega - 2\mu   & \text{(by the definition of $\omega$)} \\ 
  & > 15 \mu            & \text{(by Corollary \ref{Cor: Score of anchors if amount of noise is small}(b))}.
 \end{align*}
 Hence, $T_x \cap T_y = \emptyset$ holds for any different $x$ and $y$ in $R$.
 We will prove the lemma by contradiction. 
 Assume that there is some $S \in \FC$ such that $S \cap T_j = \emptyset$ for any $j \in R$.
 Since $T_x \cap T_y = \emptyset$ for  $x, y \in R$ with $x \neq y$,
 any two different clusters among $S, T_1, \ldots, T_r$ do not have a common element.
 Hence, we have
 \begin{align*}
  \SCORE(S, \p) + \sum_{j \in R} \SCORE(T_j, \p) = \SCORE(S \cup T_1 \cup \cdots \cup T_r, \p) \le \SCORE(N, \p) = r.
 \end{align*}
 The last equality follows from the fact that $\SCORE(N, \p) = \trace(\OptSol) = r$ holds
 since the first constraint of problem $\OurModel$ requires $\OptSol$ to satisfy $\trace(\OptSol) = r$.
 By Corollary \ref{Cor: Score of anchors if amount of noise is small}(d), 
 the score of $T_j$ by $\p$ satisfies $\SCORE(T_j, \p)  > \frac{r}{r+1}$ for each $j \in R$.
 Therefore, we get $\SCORE(S, \p) \le \frac{r}{r+1}$ and reach a contradiction to $S \in \FC$, 
 which means $\SCORE(S, \p)  > \frac{r}{r+1}$.
 The assumption is false. That is, for any $S \in \FC$, there is some $j \in R$ such that $S \cap T_j \neq \emptyset$.
\end{proof}

\begin{lem} \label{Lem: Representation of F(p)}
 Frame the hypotheses of Corollary \ref{Cor: Score of anchors if amount of noise is small}.
 The following hold:
 \begin{enumerate}[{\normalfont (a)}]
  \item $\FC$, i.e., the abbreviation of $\FC(\p)$, is represented as 
	\begin{align*}
	 \FC = \bigcup_{j \in R} \bar{\FC}_j
	\end{align*}
	by using the components $\bar{\FC}_j$ of $\FC$.
  \item Let $x, y\in R$ and $x \neq y$.
	We have $S_x \cap S_y = \emptyset$ for any $(S_x, S_y) \in \bar{\FC}_x \times \bar{\FC}_y$.
 \end{enumerate}
\end{lem}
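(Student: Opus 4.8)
The plan is to derive both statements from the structural lemma just established (Lemma~\ref{Lem: Cluster in F intersects with some of anchors}) together with repeated use of the triangle inequality for the $L_1$ norm, exploiting that every cluster in $\FC$ has small diameter and that distinct basis columns are separated by at least $\omega$, which the hypotheses force to exceed $17\mu$.

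For part (a), the inclusion $\bigcup_{j \in R} \bar{\FC}_j \subseteq \FC$ is immediate, since each $\bar{\FC}_j$ is by definition a subcollection of $\FC$. The content is the reverse inclusion. I would take an arbitrary $S \in \FC$ and show it lies in some $\bar{\FC}_j$. By definition $S \in \FC_i$ for some $i \in N$, so $i \in S$ (by the construction of $\Omega_i$) and $\diam(S) \le 3\mu$, meaning $\|\a_i - \a_u\|_1 \le 3\mu$ for every $u \in S$. By Lemma~\ref{Lem: Cluster in F intersects with some of anchors} there is a $j \in R$ and a point $w \in S \cap T_j$, so that $\|\a_w - \w_j\|_1 \le 2\mu$ by the definition of the anchor $T_j$. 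Then for any $u \in S$ a three-term triangle inequality
\[
 \|\a_u - \w_j\|_1 \le \|\a_u - \a_i\|_1 + \|\a_i - \a_w\|_1 + \|\a_w - \w_j\|_1 \le 3\mu + 3\mu + 2\mu = 8\mu
\]
gives $\max_{u \in S}\|\a_u - \w_j\|_1 \le 8\mu$, i.e.\ $S \in \bar{\FC}_j$. This is exactly where the threshold $8\mu$ in the definition of $\bar{\FC}_j$ is used, and the three summands $3\mu + 3\mu + 2\mu$ explain its value.

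For part (b), I would argue by contradiction. Suppose some $u$ lies in $S_x \cap S_y$ with $S_x \in \bar{\FC}_x$ and $S_y \in \bar{\FC}_y$ for $x \neq y$. Membership in $\bar{\FC}_x$ and $\bar{\FC}_y$ forces $\|\a_u - \w_x\|_1 \le 8\mu$ and $\|\a_u - \w_y\|_1 \le 8\mu$. The triangle inequality then yields $\|\w_x - \w_y\|_1 \le 16\mu$, while the definition of $\omega$ gives $\|\w_x - \w_y\|_1 \ge \omega$. Hence $\omega \le 16\mu$, contradicting $\mu < \frac{\omega}{17}$ from Corollary~\ref{Cor: Score of anchors if amount of noise is small}(b). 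Therefore no such common element exists, and $S_x \cap S_y = \emptyset$.

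Neither part is computationally heavy; the main subtlety is tracking the triangle-inequality bookkeeping in part (a) so that the constants land exactly at $8\mu$, and being careful that $i \in S$ always holds, which is what lets me route the bound from $u$ through the center $i$ and then through the anchor point $w$. Once that routing is set up correctly, both the inclusion and the disjointness follow mechanically.
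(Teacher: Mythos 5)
Your proof is correct and follows essentially the same route as the paper: part (a) uses the anchor-intersection lemma plus the three-term triangle inequality $3\mu+3\mu+2\mu=8\mu$ routed through the center $i$ and the anchor point, exactly as in the paper's argument. Part (b) is the paper's reverse-triangle-inequality computation merely rephrased as a contradiction ($\omega \le 16\mu$ versus $\omega > 17\mu$ instead of $\|\a_u - \w_y\|_1 \ge \omega - 8\mu > 9\mu$), which is an immaterial difference.
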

 \begin{proof}
  (a) First, we prove the inclusion ``$\supset$''. 
  Let $S \in \cup_{j \in R} \bar{\FC}_j$. Then, there is a $j_* \in R$ such that $S \in \bar{\FC}_{j_*}$.
  The definition of $\bar{\FC}_{j_*}$ implies $S \in \FC$.
  Hence, the inclusion ``$\supset$'' holds.
  
  Next, we prove the inclusion ``$\subset$''.
  Let $S \in \FC$.  Recall that $\FC$ is defined by $\FC = \cup_{i \in N} \FC_i$.
  Hence, there is an $i_* \in N$ such that $S \in \FC_{i_*}$.
  Lemma \ref{Lem: Cluster in F intersects with some of anchors} ensures that 
  there is a $j_* \in R$ such that $S \cap T_{j_*}  \neq \emptyset$.
  Let $v \in S \cap T_{j_*}$. Then, for any $u \in S$,   
  \begin{align*}
   \|\a_u - \w_{j_*} \|_1 
   = \|(\a_u - \a_v)+ (\a_v - \w_{j_*}) \|_1 
   &\le \| \a_u - \a_v \|_1 + \| \a_v - \w_{j_*} \|_1              & \\
   &\le \| \a_u - \a_v \|_1 + 2\mu                                & \text{(by $v \in T_{j_*}$)} \\
   &= \| (\a_u - \a_{i_*}) + (\a_{i_*} - \a_v) \|_1 + 2\mu        & \\
   &\le \| \a_u - \a_{i_*} \|_1 + \| \a_{i_*} - \a_v \|_1 + 2\mu  & \\
   &\le 8\mu.                                                     & \text{(by $u,v \in S$ and $S \in \FC_{i_*}$)}
  \end{align*}
  Accordingly, we have $S \in \bar{\FC}_{j_*}$ for $j_* \in R$,
  which implies $S \in \cup_{j \in R} \bar{\FC}_j$. 
  Hence, the inclusion ``$\subset$'' holds.
  Consequently, $\FC = \bigcup_{j \in R} \bar{\FC}_j$ as claimed.

  (b) Let $x, y \in R$ and $x \neq y$.
  Let $S_x \in \bar{\FC}_x$ and $S_y \in \bar{\FC}_y$. We have, for any $u \in S_x$, 
  \begin{align*}
   \| \a_u - \w_y \|_1 
   & = \| (\w_x - \w_y) + (\a_u - \w_x) \|_1  \\
   & \ge \| \w_x - \w_y \|_1 - \| \a_u - \w_x \|_1  \\
   & \ge \omega -  8\mu   & \text{(by the definition of $\omega$ and $S_x \in \bar{\FC}_x$)} \\
   & > 9\mu               & \text{(by Corollary \ref{Cor: Score of anchors if amount of noise is small}(b))}.
  \end{align*}
  Hence, $u \notin S_y$. This means $S_x \cap S_y = \emptyset$.
 \end{proof}

Here, we prove Lemma \ref{Lem: Property of the union of disjoint sets} for establishing Lemma \ref{Lem: Representation of F(q)}.
In Lemmas \ref{Lem: Property of the union of disjoint sets} and \ref{Lem: Representation of F(q)}, we use the following notation: 
$\ell_1, \ldots, \ell_r$ denote the $r$ integers in $R$;
$k$ is any positive integer satisfying $k < r$; 
and $K$ is the set of consecutive integers from $1$ to $k$.

\begin{lem} \label{Lem: Property of the union of disjoint sets}
 Frame the hypotheses of Corollary \ref{Cor: Score of anchors if amount of noise is small}.
 We have the relation 
 \begin{align*}
  \bigcup_{j \in K} \bar{\FC}_{\ell_j} = \FC \setminus \bigcup_{j \in R \setminus K} \bar{\FC}_{\ell_j}.
 \end{align*}
\end{lem}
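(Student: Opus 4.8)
The plan is to reduce the claimed set identity to a single structural fact: the components $\bar{\FC}_1, \ldots, \bar{\FC}_r$ are pairwise disjoint \emph{as collections of clusters}. Once this is in place, the identity becomes the elementary observation that removing one part of a disjoint union leaves the complementary part, together with the fact that $\ell_1, \ldots, \ell_r$ is merely a relabeling of $R$.

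First I would record that every cluster is nonempty: any $S \in \Omega_i$ contains the index $i$ by the construction of $\Omega_i$, so $S \neq \emptyset$. Next I would establish the disjointness of the components. Suppose, for contradiction, that a single cluster $S$ belonged to both $\bar{\FC}_x$ and $\bar{\FC}_y$ for some $x, y \in R$ with $x \neq y$. Then $(S, S) \in \bar{\FC}_x \times \bar{\FC}_y$, and Lemma \ref{Lem: Representation of F(p)}(b) would force $S = S \cap S = \emptyset$, contradicting nonemptiness. Hence $\bar{\FC}_x \cap \bar{\FC}_y = \emptyset$ whenever $x \neq y$, i.e.\ the collections $\bar{\FC}_1, \ldots, \bar{\FC}_r$ share no common cluster.

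With disjointness secured, I would invoke Lemma \ref{Lem: Representation of F(p)}(a), which gives $\FC = \bigcup_{j \in R} \bar{\FC}_j$. Since $\ell_1, \ldots, \ell_r$ enumerate all of $R$, this reads $\FC = \bigcup_{j \in R} \bar{\FC}_{\ell_j} = A \cup B$, where $A = \bigcup_{j \in K} \bar{\FC}_{\ell_j}$ and $B = \bigcup_{j \in R \setminus K} \bar{\FC}_{\ell_j}$. Because $\ell$ is a bijection, the index sets $\{\ell_j : j \in K\}$ and $\{\ell_j : j \in R \setminus K\}$ partition $R$, and the pairwise disjointness of the components then yields $A \cap B = \emptyset$. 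The routine identity $\FC \setminus B = (A \cup B) \setminus B = A$ is exactly the assertion, so the proof concludes.

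The main obstacle is the disjointness step, and its only subtlety is conceptual rather than computational: Lemma \ref{Lem: Representation of F(p)}(b) is phrased about pairs of clusters being disjoint \emph{sets}, whereas what the argument needs is that the \emph{collections} $\bar{\FC}_x$ and $\bar{\FC}_y$ contain no common cluster. The nonemptiness of clusters is precisely the bridge between these two notions; everything after that is bookkeeping on a disjoint union (and the degenerate case $\FC = \emptyset$ is handled trivially, as both sides are then empty).
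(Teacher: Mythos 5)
Your proposal is correct and follows essentially the same route as the paper: both arguments rest on Lemma \ref{Lem: Representation of F(p)}(a) (the decomposition $\FC = \bigcup_{j \in R} \bar{\FC}_j$) and on deducing from Lemma \ref{Lem: Representation of F(p)}(b) that the collections $\bar{\FC}_x$ and $\bar{\FC}_y$ share no common cluster, after which only set-theoretic bookkeeping remains. If anything, you are slightly more careful than the paper, which asserts $\bar{\FC}_x \cap \bar{\FC}_y = \emptyset$ directly from part (b) without spelling out the nonemptiness-of-clusters bridge that you make explicit, and your identity $(A \cup B)\setminus B = A$ replaces the paper's two-inclusion argument by contradiction.
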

\begin{proof}
 Lemma \ref{Lem: Representation of F(p)}(b) implies $\bar{\FC}_x \cap \bar{\FC}_y = \emptyset$ 
 for $x, y \in R$ with $x \neq y$. We  use this relation in the proof.
 To simplify the description, 
 we denote $\AC = \bigcup_{j \in K} \bar{\FC}_{\ell_j}$ and $\BC = \bigcup_{j \in R \setminus K} \bar{\FC}_{\ell_j}$.

 First, we prove the inclusion ``$\subset$''. 
 Let $S \in \AC$. Then, $S \in \bar{\FC}_{\ell_{j_*}}$ for some $j_* \in K$.
 This implies $S \in \FC$ by the definition of $\bar{\FC}_{\ell_{j_*}}$.
 In addition, as shown above, we have $\bar{\FC}_{\ell_{j_*}} \cap \bar{\FC}_{\ell_j} = \emptyset$ for every $j \in R \setminus K$.
 Hence, $S \notin \bigcup_{j \in R \setminus K} \bar{\FC}_{\ell_j}$. Consequently, the inclusion ``$\subset$'' holds.

 Next, we prove  the inclusion ``$\supset$''.
 It holds if $\FC \setminus \BC = \emptyset$.
 In what follows, we thus assume $\FC \setminus \BC \neq \emptyset$. We use contradiction.
 Since the assumption means that there exists $S \in \FC \setminus \BC$, we choose such $S$.
 Let us assume contradiction; $S \notin \AC$. 
 Then, $S \in \FC \cap \AC^c \cap \BC^c$.
 Meanwhile,
 \begin{align*}
  \FC \cap \AC^c \cap \BC^c = \FC \cap ( \AC \cup \BC )^c 
  = \FC \cap \left(  \bigcup_{j \in R} \bar{\FC}_{\ell_j} \right)^c =  \FC \cap \FC^c  = \emptyset
 \end{align*}
 holds by De Morgan's laws and Lemma \ref{Lem: Representation of F(p)}(a).
 This contradicts the fact that $S$ exists.
 Hence, the inclusion ``$\supset$'' holds.
 Consequently,  $\bigcup_{j \in K} \bar{\FC}_{\ell_j} = \FC \setminus \bigcup_{j \in R \setminus K} \bar{\FC}_{\ell_j}$ as claimed.
\end{proof}

\begin{lem} \label{Lem: Representation of F(q)}
 Frame the hypotheses of Corollary \ref{Cor: Score of anchors if amount of noise is small}.
 Let $S_1, \ldots, S_k$ be clusters such that $S_j \in \bar{\FC}_{\ell_j}$ for each $j \in K$. 
 Suppose that we are given $S_1, \ldots, S_k$ and the point list $\p$.
 Construct a point list $\q \in \Real_+^n$:
 \begin{align*}
  \q(u) = 
  \left\{
  \begin{array}{ll}
   0      & \text{if} \ u \in S_1 \cup \cdots \cup S_k, \\
   \p(u)  & \text{otherwise}.
  \end{array}
  \right.
 \end{align*}
 Then, the following hold:
 \begin{enumerate}[{\normalfont (a)}]
  \item Let $S \in \FC$. Then,
	\begin{align*}
	 S \in \bigcup_{j \in R \setminus K} \bar{\FC}_{\ell_j}
	 \equivSym
	 \SCORE(S, \q) > \frac{r}{r+1}.
	\end{align*}
  
  \item $\FC(\q)$ is represented as 
	\begin{align*}
	 \FC(\q) =  \bigcup_{j \in R \setminus K} \bar{\FC}_{\ell_j}.
	\end{align*}
 \end{enumerate}
\end{lem}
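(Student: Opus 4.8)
The plan is to prove the two parts in sequence, using part (a) as the engine for part (b). The central mechanism is the point list update: the scores of clusters in $\FC$ change in a controlled way when we zero out the coordinates belonging to $S_1 \cup \cdots \cup S_k$, and the key fact is that these zeroed-out coordinates are precisely concentrated in the components $\bar{\FC}_{\ell_1}, \ldots, \bar{\FC}_{\ell_k}$ and nowhere else, thanks to the disjointness established in Lemma \ref{Lem: Representation of F(p)}(b).

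For part (a), I would take an arbitrary $S \in \FC$ and use Lemma \ref{Lem: Representation of F(p)}(a) to locate it in some component: $S \in \bar{\FC}_{\ell_{j_*}}$ for a unique $j_* \in R$ (uniqueness from part (b) of that lemma). I would split into two cases. If $j_* \in R \setminus K$, then I must show $\SCORE(S,\q) > \frac{r}{r+1}$; the point is that $S$ is disjoint from each $S_j$ ($j \in K$) because $S_j \in \bar{\FC}_{\ell_j}$ lives in a different component and Lemma \ref{Lem: Representation of F(p)}(b) forces $S \cap S_j = \emptyset$. Hence zeroing out the coordinates in $S_1 \cup \cdots \cup S_k$ does not touch any coordinate of $S$, so $\SCORE(S,\q) = \SCORE(S,\p) > \frac{r}{r+1}$ (the last inequality because $S \in \FC$). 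Conversely, if $j_* \in K$, I must show $\SCORE(S,\q) \le \frac{r}{r+1}$. This is the harder direction: I need that the update kills enough of $S$'s mass. Since $S$ and $S_{j_*}$ both lie in $\bar{\FC}_{\ell_{j_*}}$, they are localized near the same basis column $\w_{\ell_{j_*}}$, so both intersect the single anchor $T_{\ell_{j_*}}$ (Lemma \ref{Lem: Cluster in F intersects with some of anchors} together with the anchor-disjointness shown inside that lemma forces it to be the \emph{same} anchor). The argument I expect to need is that the high-score mass of $S$ is essentially carried by its intersection with the anchor, which $S_{j_*}$ also captures, so after zeroing $S_{j_*}$ the remaining score of $S$ drops below the threshold. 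Quantitatively this should follow from Corollary \ref{Cor: Score of anchors if amount of noise is small}(d) and a counting/trace argument analogous to the one in Lemma \ref{Lem: Cluster in F intersects with some of anchors}, bounding $\SCORE(S \setminus S_{j_*}, \p)$ by comparing against $\SCORE(N,\p) = r$ minus the anchor contributions.

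For part (b), I would use part (a) to rewrite membership in $\FC(\q)$. First note $\FC(\q) \subset \FC$: a cluster whose score exceeds the threshold under $\q$ also exceeds it under $\p$, since $\q \le \p$ coordinatewise implies $\SCORE(S,\q) \le \SCORE(S,\p)$, and the diameter condition $\diam(S) \le 3\mu$ is independent of the point list. Given $S \in \FC(\q)$, we have $S \in \FC$ with $\SCORE(S,\q) > \frac{r}{r+1}$, so part (a) places $S$ in $\bigcup_{j \in R \setminus K} \bar{\FC}_{\ell_j}$; this gives the inclusion ``$\subset$''. For ``$\supset$'', take $S \in \bigcup_{j \in R \setminus K} \bar{\FC}_{\ell_j} \subset \FC$; part (a) yields $\SCORE(S,\q) > \frac{r}{r+1}$, and since $S \in \FC$ already carries $\diam(S) \le 3\mu$ and membership in some $\Omega_{i}$, it satisfies both defining conditions of $\FC(\q)$, hence $S \in \FC(\q)$.

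The main obstacle I anticipate is the ``$j_* \in K$ implies $\SCORE(S,\q) \le \frac{r}{r+1}$'' direction in part (a). The difficulty is that $S$ and $S_{j_*}$ are two different clusters in the same component, so I cannot simply say one contains the other; I must argue that their \emph{union} with the other anchors still fits inside the total budget $r$, which forces $S$'s residual mass outside $S_{j_*}$ to be small. I would structure this exactly as in Lemma \ref{Lem: Cluster in F intersects with some of anchors}: assemble the pairwise-disjoint family $S \setminus S_{j_*}$ (viewed via its intersection structure with the anchors) together with all the $T_{\ell_j}$, sum their scores, and invoke $\SCORE(N,\p) = r$ to extract the needed upper bound. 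Care is required because $S$ and $S_{j_*}$ may genuinely overlap the same anchor $T_{\ell_{j_*}}$, so the disjointness bookkeeping must be set up around $S \setminus S_{j_*}$ rather than $S$ itself.
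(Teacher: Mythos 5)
Your overall architecture is right, and two of the three pieces match the paper: the forward direction of (a) (disjointness of $S$ from $S_1,\ldots,S_k$ via Lemma \ref{Lem: Representation of F(p)}(b), hence $\SCORE(S,\q)=\SCORE(S,\p)$) and all of part (b) (using $\q\le\p$ to get $\FC(\q)\subset\FC$ and then invoking (a)) are exactly the paper's argument. The gap is in the contrapositive direction of (a), which you correctly single out as the crux but do not actually close. You propose to bound $\SCORE(\bar S,\p)$ for $\bar S=S\setminus S_{j_*}$ by summing scores over the disjoint family consisting of $\bar S$ and the \emph{anchors} $T_{\ell_1},\ldots,T_{\ell_r}$ and comparing with $\SCORE(N,\p)=r$. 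That family is not pairwise disjoint: $\bar S$ can intersect $T_{\ell_{j_*}}$, because $S$ and $S_{j_*}$ each merely \emph{intersect} that anchor (neither contains it, nor does $S_{j_*}$ contain $S\cap T_{\ell_{j_*}}$); e.g.\ $T_{\ell_{j_*}}=\{1,2,3\}$, $S_{j_*}=\{1,4\}$, $S=\{2,5\}$ leaves $2\in\bar S\cap T_{\ell_{j_*}}$. You flag this overlap yourself, but "setting up the bookkeeping around $S\setminus S_{j_*}$" does not remove it, and excising the overlap only yields $\SCORE(\bar S\setminus T_{\ell_{j_*}},\p)\le\frac{r}{r+1}$, leaving $\SCORE(\bar S\cap T_{\ell_{j_*}},\p)$ uncontrolled (it can be as large as about $\frac{2r}{r+1}$). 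Your heuristic that "the high-score mass of $S$ is carried by its intersection with the anchor, which $S_{j_*}$ also captures" is likewise unfounded, since $S_{j_*}$ need not capture that intersection.

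The paper's fix is to swap the anchors for a system of representatives of the components: pick clusters $S_{k+1},\ldots,S_r$ with $S_j\in\bar{\FC}_{\ell_j}$ for $j\in R\setminus K$ (possible since the components are nonempty by Lemma \ref{Lem: F contains all anchors}), and use the family $\bar S, S_1,\ldots,S_r$. Crucially the representative of component $\ell_{j_*}$ is $S_{j_*}$ itself, so $\bar S=S\setminus S_{j_*}$ is disjoint from it \emph{by construction}, while disjointness from every other $S_j$ follows from Lemma \ref{Lem: Representation of F(p)}(b); each $S_j$ has score exceeding $\frac{r}{r+1}$ because $S_j\in\FC$, and the budget $\SCORE(N,\p)=r$ then forces $\SCORE(\bar S,\p)\le\frac{r}{r+1}=\SCORE(S,\q)$'s bound. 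So the missing idea is precisely to replace the anchors by these component representatives; with that substitution your outline becomes the paper's proof.
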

\begin{proof}
 (a) First, we prove the direction ``$\Rightarrow$''.
 Let $S \in \bigcup_{j \in R \setminus K} \bar{\FC}_{\ell_j}$. 
 Then, $S$ belongs to $\bar{\FC}_{\ell_{j_*}}$ for some $j_* \in R \setminus K$.
 Meanwhile, $S_j$ belongs to $\bar{\FC}_{\ell_j}$ for $j \in K$.
 Lemma \ref{Lem: Representation of F(p)}(b) then tells us that $S \cap S_j = \emptyset$ for every $j \in K$.
 Hence, from the construction of $\q$, we have $\q(u) = \p(u)$ for every $u \in S$. 
 In addition, 
 since $S \in \bar{\FC}_{\ell_{j_*}}$ implies $S \in \FC$ by the definition of $\bar{\FC}_{\ell_{j_*}}$,
 it takes $\SCORE(S, \p) > \frac{r}{r+1}$.
 Consequently, we obtain $\SCORE(S, \q) = \SCORE(S, \p) > \frac{r}{r+1}$.

 Next, we prove the direction ``$\Leftarrow$'' by showing that the contrapositive is true.
 In light of Lemma \ref{Lem: Property of the union of disjoint sets},  the contrapositive statement is 
 \begin{align} \label{Exp: Contrapositive statement}
  S \in \bigcup_{j \in K} \bar{\FC}_{\ell_j} \Rightarrow \SCORE(S, \q) \le \frac{r}{r+1}.
 \end{align}
 Let $S \in \bigcup_{j \in K} \bar{\FC}_{\ell_j}$.
 Then, $S$ belongs to $\bar{\FC}_{\ell_{j_*}}$ for some $j_* \in K$.
 From the construction of $\q$, we have $\q(u) = 0$ for every $u \in S_{j_*}$. 
 Hence, 
 \begin{align} \label{Exp: score(S,q)}
  \SCORE(S, \q) = \sum_{u \in S} \q(u) = \sum_{u \in \bar{S}} \q(u).
 \end{align}
 for $\bar{S} = S \setminus S_{j_*}$.
 Let $S_{k+1}, \ldots, S_r$ be clusters such that $S_j \in \bar{\FC}_{\ell_j}$ for each $j \in R \setminus K$.
 Since $S_j \in \bar{\FC}_{\ell_j}$ for $j \in R$ and $\bar{S} = S \setminus S_{j_*}$ where $S, S_{j_*} \in \bar{\FC}_{\ell_{j_*}}$,
 Lemma \ref{Lem: Representation of F(p)}(b) tells us that the following statements hold:
 \begin{align}
  & \bar{S} \cap S_j = \emptyset \quad \text{for every} \ j \in R. \label{Exp: barS cap Sj} \\
  & S_x \cap S_y = \emptyset \quad \text{for every different} \ x, y \in R. \label{Exp: Sx cap Sy}
 \end{align}
 Statement (\ref{Exp: barS cap Sj}) implies that no element of $\bar{S}$ belongs to $S_1 \cup \dots \cup S_k$.
 Hence,
 \begin{align} \label{Exp: score(barS, p)} 
  \sum_{u \in \bar{S}} \q(u) = \sum_{u \in \bar{S}} \p(u) = \SCORE(\bar{S}, \p).
 \end{align}
 It follows from equalities (\ref{Exp: score(S,q)}) and (\ref{Exp: score(barS, p)}) that the relation $\SCORE(S, \q) = \SCORE(\bar{S}, \p)$ holds.
 From statements (\ref{Exp: barS cap Sj}) and (\ref{Exp: Sx cap Sy}), we have 
 \begin{align*}
  \SCORE(\bar{S}, \p) + \sum_{j \in R} \SCORE(S_j, \p) = \SCORE(\bar{S} \cup S_1 \cup \cdots \cup S_r, \p) \le \SCORE(N, \p) = r.
 \end{align*}
 Here, $\SCORE(S_j, \p) > \frac{r}{r+1}$ since $S_j \in \bar{\FC}_{\ell_j}$ implies $S_j \in \FC$.
 Accordingly, the inequality above yields $\SCORE(\bar{S}, \p) \le \frac{r}{r+1}$.
 Combining it with the relation $\SCORE(S, \q) = \SCORE(\bar{S}, \p)$, we obtain $\SCORE(S, \q) \le \frac{r}{r+1}$.
 Consequently, statement (\ref{Exp: Contrapositive statement}) holds.

 (b) First, we prove the inclusion ``$\supset$''. 
 Let $S \in \bigcup_{j \in R \setminus K} \bar{\FC}_{\ell_j}$.
 Then, $S$ belongs to $\bar{\FC}_{\ell_{j_*}}$ for some $j_* \in R \setminus K$.
 It thus follows from part (a) that $\SCORE(S, \q) > \frac{r}{r+1}$.
 In addition, $S \in \bar{\FC}_{\ell_{j_*}}$ implies $S \in \FC$ by the definition of $\bar{\FC}_{\ell_{j_*}}$.
 From $\FC = \cup_{i \in N} \FC_i$, we have $S \in \FC_{i_*}$ for some $i_* \in N$.
 Thus, $S \in \Omega_{i_*}$ and $\diam(S) \le 3\mu$ by the definition of $\FC_{i_*}$.
 Consequently, we obtain $S \in \FC_{i_*}(\q)$, which implies $S \in \FC(\q)$ since $\FC(\q) = \cup_{i \in N}\FC_i(\q)$.

 Next, we prove the inclusion ``$\subset$''. 
 Let $S \in \FC(\q)$.  Then, $S$ belongs to $\FC_{i_*}(\q)$ for some $i_* \in N$, 
 since $\FC(\q) = \cup_{i \in N} \FC_i(\q)$.
 It follows from the definition of $\FC_{i_*}(\q)$ that 
 $S \in \Omega_{i_*}, \diam(S) \le 3 \mu$, and $\SCORE(S, \q) > \frac{r}{r+1}$.
 Since $S$ satisfies $\SCORE(S, \q) > \frac{r}{r+1}$, 
 part (a) ensures that  the inclusion ``$\subset$'' holds if $S \in \FC$.
 Thus, the remainder of the proof is to show $S \in \FC$.
 The construction of a point list $\q$ tells us that $\p(i) \ge \q(i)$ for every $i \in N$. 
 Hence, 
 \begin{align*}
  \SCORE(S, \p)  \ge \SCORE(S, \q) > \frac{r}{r+1}.
 \end{align*}
 holds.
 Consequently, we obtain $S \in \FC_{i_*}(\p)$, which implies $S \in \FC$
 since $\FC = \FC(\p) = \cup_{i \in N} \FC_i(\p)$.

\end{proof}

\subsubsection{Robustness to Noise}
We are now ready to prove Theorem \ref{Thm: Robustness of refined algo with postprocessing}.

\begin{proof}[(Proof of Theorem \ref{Thm: Robustness of refined algo with postprocessing})]
 Let $S_1, \ldots, S_r$ be clusters generated by Algorithm \ref{Algo: Refined algo with postprocessing}.
 We claim that there is a permutation $\pi : R \rightarrow R$ such that $S_\ell \in \bar{\FC}_{\pi(\ell)}$ 
 for each $\ell \in R$. We use induction on $\ell$.
 Set a parameter $\mu$ as $\mu = \lambda + \xi$ by choosing 
 an arbitrary real number $\xi$ from the open interval $(0, \frac{\kappa}{35})$.
 The value of $\lambda$ is set according to the noise level described in the theorem:
 \begin{itemize}
  \item $\lambda = \frac{17(r+1)\epsilon}{\kappa}$ in the former case where  $\epsilon < \frac{\kappa \omega}{578(r+1)}$.
  \item $\lambda = \sqrt{\epsilon}$ in the latter case where $\epsilon < \frac{\kappa^2}{289(r+1)^2}$.
 \end{itemize}

 Base case:
 Step 1 of the algorithm computes the optimal solution $\OptSol$ of problem $\OurModel(\A,r)$ and 
 step 2 sets $\p_1 = \diag(\OptSol)$.
 Thus, Lemmas \ref{Lem: F contains all anchors} and  \ref{Lem: Representation of F(p)} hold.
 Lemma \ref{Lem: Representation of F(p)}(a) tells us that $\FC(\p_1)$ is represented as $\FC(\p_1) = \bigcup_{j \in R} \bar{\FC}_j$.
 It follows from Lemmas \ref{Lem: F contains all anchors}(a) and \ref{Lem: F contains all anchors}(c) that 
 the components $\bar{\FC}_1 \ldots, \bar{\FC}_r$ are not empty.
 This means that $\FC(\p_1)$ is not empty.
 We can thus use Lemma \ref{Lem: F contains hatS}, which tells us that 
 $\FC(\p_1)$ contains $S_1 = \arg \min_{S \in  \GC(\p_1)} \diam(S)$. 
 Accordingly, there is a $j \in R$  such that $S_1 \in \bar{\FC}_j$.

 Induction step:
 Let $\ell_1, \ldots, \ell_r$ denote the $r$ integers in $R$.
 Let $k$ be any positive integer satisfying $k < r$, and $K$ be the set of consecutive integers from $1$ to $k$.
 Suppose that $S_j \in \bar{\FC}_{\ell_j}$ holds for each $j \in K$.
 Lemma \ref{Lem: Representation of F(q)} holds; part (b) of the lemma tells us that 
 $\FC(\p_{k+1})$ is represented as $\FC(\p_{k+1}) = \bigcup_{j \in R \setminus K} \bar{\FC}_{\ell_j}$.
 As mentioned above,  $\bar{\FC}_{k+1} \ldots, \bar{\FC}_r$ are not empty.
 Hence, $\FC(\p_{k+1})$ is not empty.
 We can thus use Lemma \ref{Lem: F contains hatS}, which tells us that 
 $\FC(\p_{k+1})$ contains $S_{k+1} = \arg \min_{S \in  \GC(\p_{k+1})} \diam(S)$. 
 Accordingly, there is a $j \in R \setminus K$  such that $S_{k+1} \in \bar{\FC}_{\ell_j}$.
 Consequently, there is a permutation $\pi : R \rightarrow R$ such that 
 $S_\ell \in \bar{\FC}_{\pi(\ell)}$ for each $\ell \in R$.

 In light of the definition of $\bar{\FC}_j$, this result implies that the output $\W_\out = \A(:, J)$ of the algorithm 
 satisfies  
 \begin{align*}
  \| \W - \W_\out \|_1 \le 8\mu = 8(\lambda + \xi)  
 \end{align*}
 by rearranging the columns of $\W$.
 Since the inequality holds for any small positive number $\xi$,
 it turns out that 
 \begin{align*}
  \| \W - \W_\out \|_1 \le  8\lambda  
 \end{align*}
 holds.
 This gives the desired results.
\end{proof}

\section{Experiments} \label{Sec: Experiments}
We conducted experiments to see  the practical performance of our algorithms.
Gillis and Luce \cite{Gil14a} observed in their experiments that 
the postprocessing of Gillis \cite{Gil13} does not always  enhance the robustness of their refinement of Hottopixx.
For that reason, they proposed to incorporate a hybrid postprocessing into their refinement.
A detailed description was given in Algorithm 6 of \cite{Gil14a}.
They implemented it and showed its superiority to other algorithms.
We incorporated the algorithmic framework of hybrid postprocessing into Algorithm \ref{Algo: Refined algo with postprocessing},
as described in Algorithm \ref{Algo: Hybrid}, and implemented it on MATLAB.
The purpose of our experiments was to demonstrate its performance.

We compared four algorithms as follows: RHHP (Algorithm \ref{Algo: Hybrid}), 
LP-rho1 (Algorithm 6 of \cite{Gil14a}), Hottopixx (Algorithm 1 of \cite{Gil14a}) 
and SPA (Algorithm 1 with $f(\x) = \|\x\|_2^2 $ of \cite{Gil14b}).
SPA was originally proposed in \cite{Ara01} in the context of chemometrics,
and is now considered a popular algorithm for solving separable NMF problems.
For the implementation of LP-rho1, Hottopixx and SPA, we used the MATLAB functions  
{\tt LPsepNMF\_cplex}, {\tt hottopixx\_cplex} and {\tt FastSepNMF} 
whose code is available at the website of the first author of \cite{Gil14a}.
For solving LP problems, the functions {\tt hottopixx\_cplex} and {\tt LPsepNMF\_cplex} employed CPLEX.
Following them, we  employed it in the implementation of RHHP.

\begin{algorithm}[t]
 \caption{Refinement of Hottopixx with hybrid postprocessing}  \label{Algo: Hybrid}
 \smallskip
 Input: $\A \in \Real^{d \times n}$ and a positive integer $r$. \\
 Output: Set $J$ of $r$ elements from $N$.
 \begin{enumerate}[1.]
  \item Perform step 1 of Algorithm \ref{Algo: Refined algo with postprocessing}.
	Let $J_1$ be the index set corresponding to the $r$ largest elements of $\diag(\OptSol)$.

  \item Perform step 2 of Algorithm \ref{Algo: Refined algo with postprocessing} where
	step 2-2 chooses 
	\begin{align*}
	 u = \arg \max_{u \in S_\ell} \p_{\ell}(u).
	\end{align*}
	Let $J_2 = J$ for the index set $J$ obtained at the termination of step 2. 

  \item Compute 
	\begin{align*}
	 J = \arg \min_{J \in \{ J_1, J_2 \}} \error(J) 
	 \quad \mbox{where} \quad  \error(J) = \min_{\X \ge \zero} \| \A - \A(:, J) \X \|_F^2
	\end{align*}
	and return $J$.
 \end{enumerate}
\end{algorithm}

We tested the algorithms on four synthetic datasets whose construction is the same as in \cite{Gil14a, Gil14c}.
Each dataset contained noisy separable matrices  $\A = \W\H + \N \in \Real^{30 \times 200}$ 
where the factorization rank is $10$ and the set of basis indices is $\{1, \ldots, 10\}$.
The components $\W \in \Real^{30 \times 10}_+, \H \in \Real^{10 \times 200}_{+}$ and $\N \in \Real^{30 \times 200}$ 
were generated as follows.
\begin{itemize}
\item $\W$:
      Using the following procedures (A) and (B),  two types of matrices were generated.
      \begin{enumerate}[(A)]
       \item 
	     Normal:
	     First, generate $\W \in \Real^{30 \times 10}$ whose elements are drawn 
	     from a uniform distribution on the interval $[0,1]$.
	     Then, normalize the columns to have unit $L_1$ norm.
       \item 
	     Ill-conditioned:
	     First, generate $\W \in \Real^{30 \times 10}$ as in the first step of procedure above.
	     Second, compute the reduced SVD $\W = \F \Sigmab \G^\trans$ 
	     where $\Sigmab$ is a diagonal matrix of size $10$, $\F \in \Real^{30 \times 10}$, and $\G \in \Real^{10 \times 10}$.
	     Third, choose a positive integer $c$ 
	     and replace $\W$ by $\F \S \G^\trans$ using a diagonal matrix $\S$ of size $10$ whose 
	     $i$th diagonal element is $\alpha^{(i-1)}$ for $\alpha \in \Real$ satisfying $\alpha^9 = 10^{-c}$. 
	     Finally, replace all negative elements by $0$ and then 
	     normalize the columns to have unit $L_1$ norm.
      \end{enumerate}

\item $\H$:
      It is formed as $\H = [\I, \bar{\H}]$
      where the submatrix composed of $10$ columns from the first one is an identity matrix of size $10$, 
      and the columns of the remaining submatrix of size \by{10}{190} are 
      from a Dirichlet distribution whose $10$ parameters are uniformly from the interval $[0, 1]$.
      Hence, $\H$ is nonnegative and every column has unit $L_1$ norm.
      Moreover, if one constructs $\V=\W\H$, 
      our parameter choice of a Dirichlet distribution encourages
      the columns of $\V$ to lie around the boundary of the convex hull of the columns of $\W$.

 \item $\N$:
       First, choose a positive real number $\delta$ serving as a noise intensity, 
       and generate $\N \in \Real^{30 \times 200}$
       whose elements are from a standard normal distribution.
       Then, normalize it such that the $L_1$ norm is equal to $\delta$.
       Hence, the resulting matrix $\N$ satisfies $\|\N\|_1 = \delta$.
\end{itemize}
To generate noisy separable matrices in dataset 1, 
we chose 20 equally spaced points $\delta$ in log space between $10^{-2}$ and $1$, 
and then constructed $\N$ satisfying $\|\N\|_1 = \delta$ for each $\delta$. 
We used the matrices $\W$ generated by procedure (A).
For those in datasets 2-4, we chose 20 equally spaced points $\delta$ in log space between $10^{-2}$ and $0.5$, 
and then constructed $\N$ satisfying $\|\N\|_1 = \delta$.
We used ill-conditioned matrices $\W$ generated by procedure (B) with the choice of $c$ as follows:
$c=3$ for dataset 2, $c=4$ for dataset 3, and $c=5$ for dataset 4.
In the construction of datasets 1-4, we generated 50 separable matrices $\V = \W\H$, 
and then formed noisy separable matrices $\A = \V + \N$
by adding 20 matrices $\N$ to each $\V$; hence, each dataset contained 1,000 matrices in total.
Table \ref{Tab: parameter values} displays
the average values of $\kappa, \omega, \sigma_{\mmax} / \sigma_{\mmin}$ and $\beta$
over 50 matrices $\W$ and $\H$ in datasets 1-4. Here, $\sigma_{\mmax} / \sigma_{\mmin}$ is 
the ratio of the largest singular value of $\W$ divided by the smallest one.
Recall that $\kappa$ and $\omega$ are defined in terms of $\W$ and $\beta$ in terms of the submatrix $\bar{\H}$ of $\H$.

\begin{table}[t]
 \centering
 \caption{Average values of $\kappa, \omega, \sigma_{\mmax} / \sigma_{\mmin}$ and $\beta$ 
 over 50 matrices $\W$ and $\H$ in datasets 1-4.}
 \label{Tab: parameter values}
\begin{tabular}{lcccc}
 \toprule
                & Dataset 1 & Dataset 2 & Dataset 3 & Dataset 4 \\
 \midrule
 Type of $\W$   & Normal    & Ill-conditioned & Ill-conditioned & Ill-conditioned \\
                      &           &      with $c=3$ &      with $c=4$ &      with $c=5$ \\
 \midrule
 $\kappa$       & $3.27 \times 10^{-1}$  & $3.67 \times 10^{-2}$  & $1.16 \times 10^{-2}$  & $3.43 \times 10^{-3}$ \\
 $\omega$       & $4.70 \times 10^{-1}$  & $1.47 \times 10^{-1}$  & $8.62 \times 10^{-2}$  & $5.15 \times 10^{-2}$ \\
 $\sigma_{\max} / \sigma_{\mmin}$        
                & $1.09 \times 10^{1}$   &  $3.07 \times 10^{2}$  & $3.38 \times 10^{3}$   & $5.36 \times 10^{4}$ \\
 \midrule
 $\beta$        & $8.03 \times 10^{-1}$  &  $8.03 \times 10^{-1}$ & $8.03 \times 10^{-1}$  & $8.03 \times 10^{-1}$ \\
 \bottomrule
\end{tabular}
\end{table}

\begin{figure}[p]
 \centering
 \begin{minipage}[b]{0.45\linewidth}
 \includegraphics[width=1.0\linewidth]{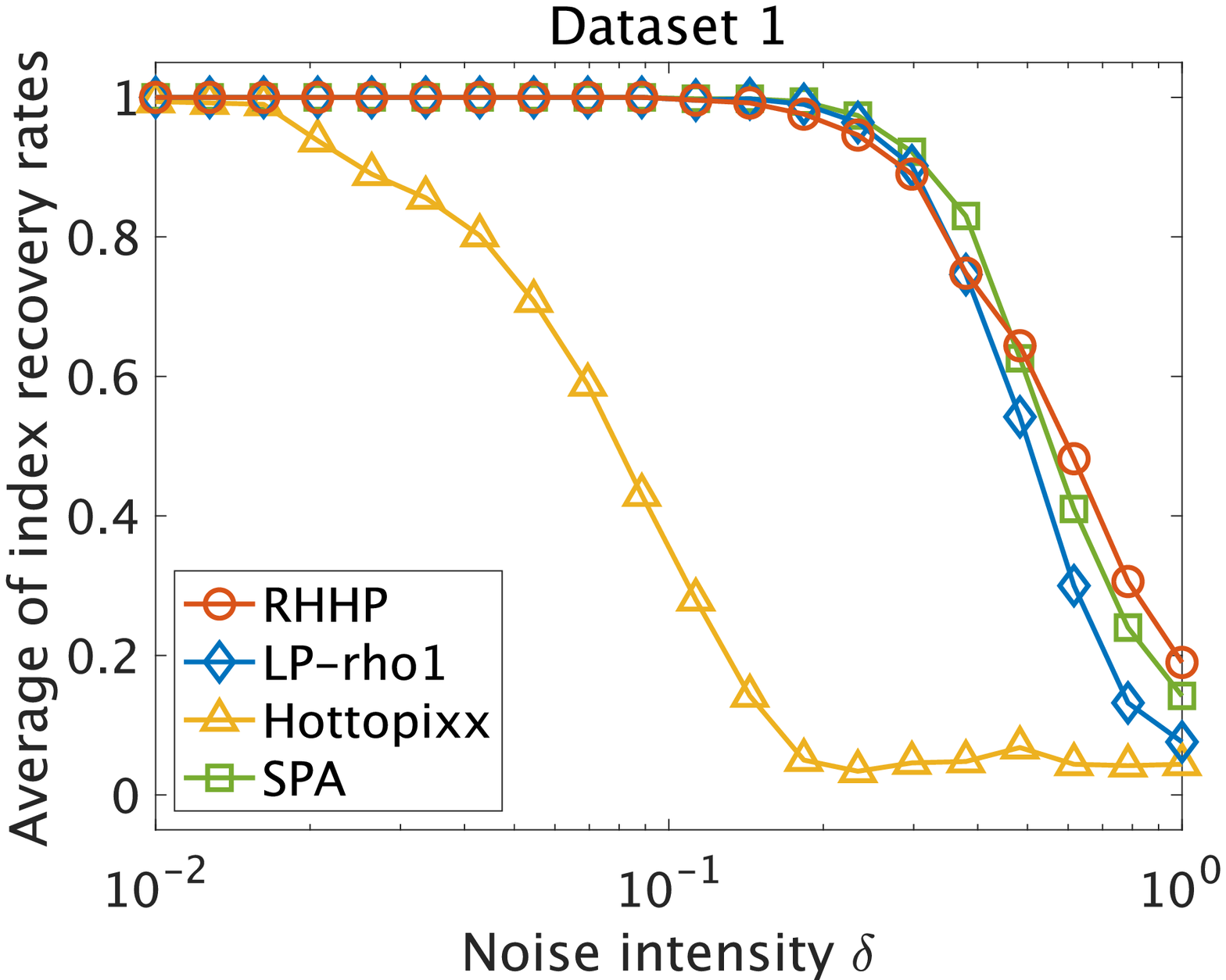}
 \end{minipage}
\begin{minipage}[b]{0.45\linewidth}
 \includegraphics[width=1.0\linewidth]{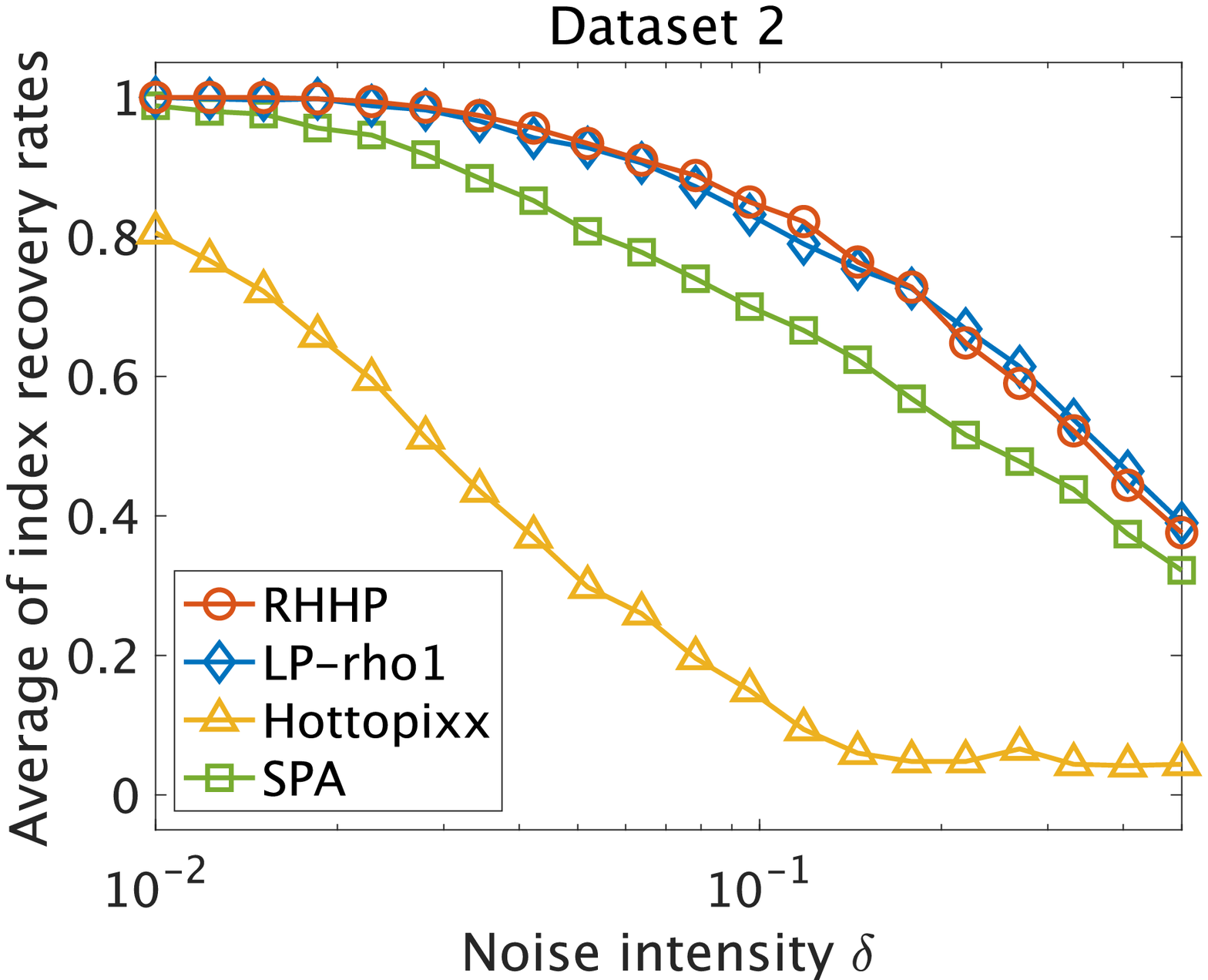}
\end{minipage}
\begin{minipage}[b]{0.45\linewidth}
 \includegraphics[width=1.0\linewidth]{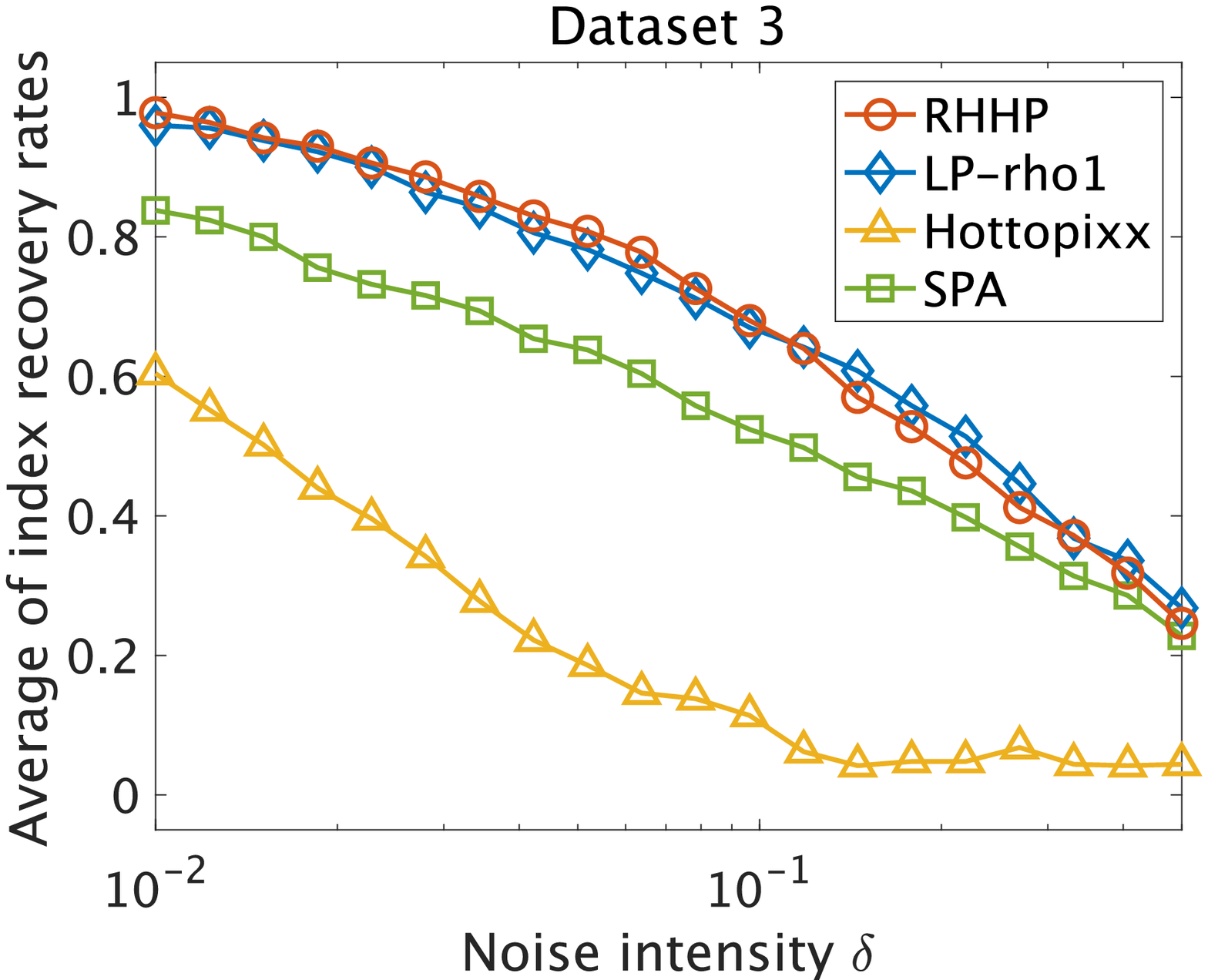}
\end{minipage}
\begin{minipage}[b]{0.45\linewidth}
 \includegraphics[width=1.0\linewidth]{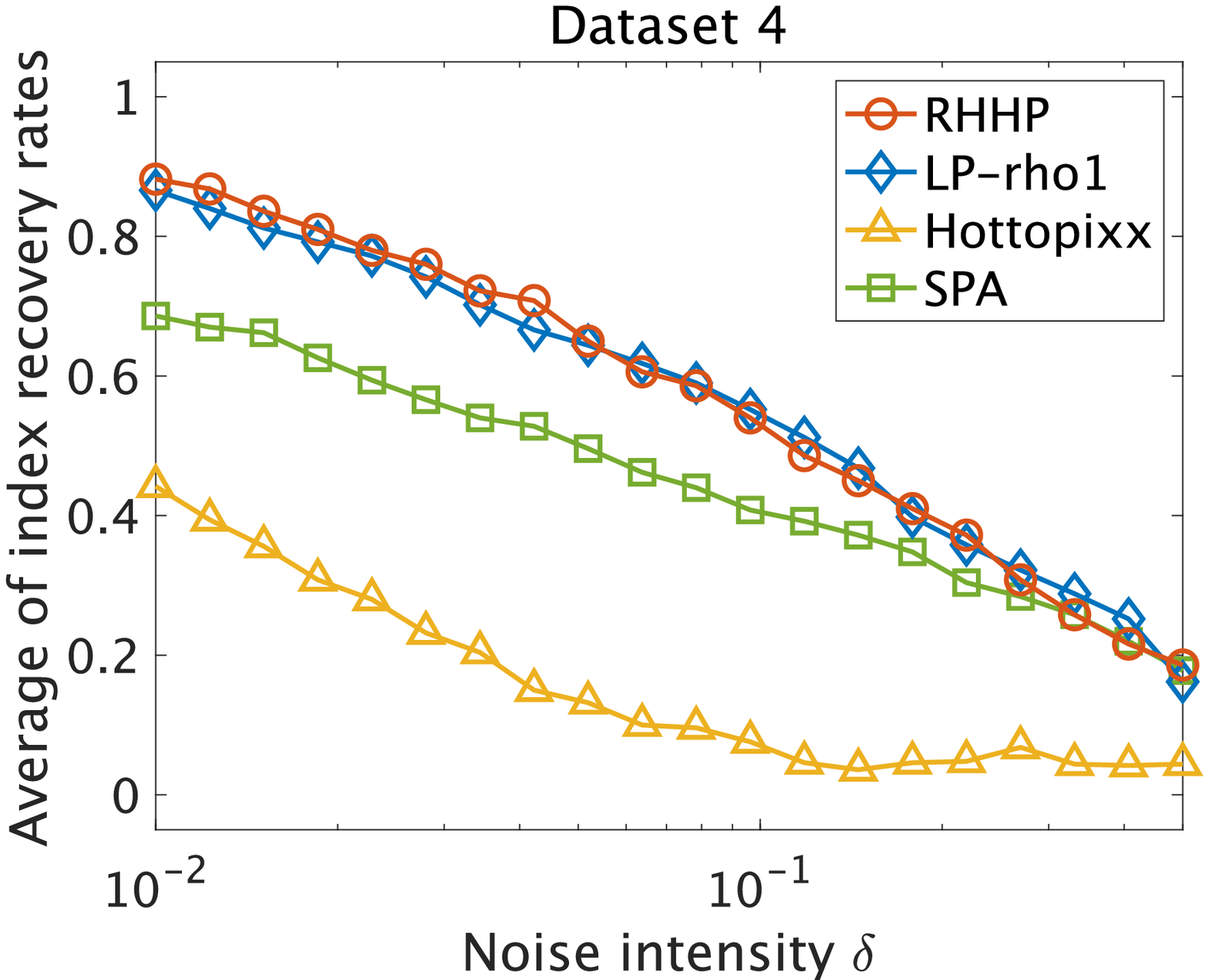}
\end{minipage}
 \caption{Average of index recovery rates by four algorithms for datasets 1-4.}
 \label{Fig: index recovery}

\bigskip \bigskip \bigskip
 \captionof{table}{Maximum values of $\delta$ for 100\% and 80\% recovery of basis indices.
 The symbol ``-'' in 100\% recovery (resp.\ 80\% recovery) 
 means that the average of the index recovery rates at $\delta = 0.01$ is less than 1 (resp.\ 0.8).
 The bold-faced values indicate the maximum value in each column.}
 \label{Tab: maximum delta}
\begin{tabular}{lcccccccc}
 \toprule
           & \multicolumn{2}{c}{Dataset 1} & \multicolumn{2}{c}{Dataset 2} & \multicolumn{2}{c}{Dataset 3} & \multicolumn{2}{c}{Dataset 4} \\
           \cmidrule(rl){2-3}  \cmidrule(rl){4-5} \cmidrule(rl){6-7} \cmidrule(rl){8-9}
           & 100\% & 80\% & 100\% & 80\%  & 100\% & 80\% & 100\% & 80\% \\
 \midrule
 RHHP      & {\bf 0.089} & 0.298       & {\bf 0.015} & {\bf 0.118} & -  & {\bf 0.052} & -  & {\bf 0.019} \\
 LP-rho1   & {\bf 0.089} & 0.298       & 0.010       & 0.096       & -  & 0.042       & -  & 0.015 \\
 Hottopixx &  -          & 0.043       & -           & 0.010       & -  & -           & -  &  -  \\
 SPA       & {\bf 0.089} & {\bf 0.379} & -           & 0.052       & -  & 0.015       & -  &  -  \\
 \bottomrule
\end{tabular}
\end{figure}

The performance of the algorithm was evaluated by using the index recovery rate, defined by 
 $|J \cup \{1, \ldots, 10\}| / 10$ for an index set $J$ output by it.
LP-rho1 and Hottopixx required us to designate a noise level $\epsilon$ as input.
For a matrix $\A = \W\H + \N$ in the datasets,
we set $\epsilon = \|\N\|_1$, which is equal to $\delta$, and then ran the algorithms.
The experiments were conducted on Intel Xeon CPU E5-1620 with 64 GB memory running MATLAB.

Figure \ref{Fig: index recovery} and Table \ref{Tab: maximum delta} summarize the experimental results:
the figure displays the average of index recovery rates determined by the four algorithms;
and the table lists the maximum values of $\delta$ for 100\% and 80\% recovery of basis indices by them.
Regarding the index recovery rates of the algorithms, we can see the following:
\begin{itemize}
 \item RHHP, LP-rho1, and SPA are better than Hottopixx for every dataset.
 \item For dataset 1, SPA is slightly better than RHHP and LP-rho1, since the maximum value of $\delta$ for 80\% recovery 
       determined by SPA exceeds those determined by RHHP and LP-rho1. RHHP is almost the same as LP-rho1.
 \item For datasets 2-4, RHHP and LP-rho1 are better than SPA. RHHP is slightly better than LP-rho1, 
       since the maximum values of $\delta$ for 80\% recovery determined by RHHP exceed those determined by LP-rho1.
\end{itemize}
The experimental results imply that, 
without taking a noise level as input, RHHP is as robust to noise as LP-rho1.

\section{Concluding Remarks}

We refined Hottopixx of Bittorf et al.\ \cite{Bit12} and the postprocessing of Gillis \cite{Gil13}
and showed that our refinement has almost the same robustness to noise as the original one.
To enable Hottopixx to run without  prior knowledge of the noise level,
we replaced the problem $\BRRT$ with $\OurModel$.
This is a simple idea, and it is easy to see that Lemma \ref{Lem: Relation of epsilon and optimal value} holds.
From the lemma,
we can immediately see that the refinement is similar in robustness to Hottopixx.
However, it is not obvious how the postprocessing of Gillis can be refined 
so that the algorithm runs without prior knowledge of the noise level.
We constructed a collection $\FC$ of clusters containing anchors $T_1, \ldots, T_r$ and examined the structure of $\FC$.
On the basis of this examination,
we developed a refinement of the postprocessing and analyzed its robustness to noise.

We close this paper with remarks on directions for future research.
There is a computational issue in Algorithms \ref{Algo: Refined algo} and \ref{Algo: Refined algo with postprocessing}.
The bottleneck is in solving problem $\OurModel$.
As shown in Section \ref{Subsec: Algorithm of refined Hottopixx},
this can be transformed into an equivalent LP problem $\OurModel'$ with $O(n^2)$ variables and $O(n^2)$ constraints
where $n$ is the number of columns of the input matrix and we assume that it is greater than the number $d$ of rows.
Since the size of $\OurModel'$ grows quadratically with $n$, 
solving $\OurModel'$ is computationally challenging when $n$ is large.
We thus need to develop efficient algorithms.
Bittorf et al.\ \cite{Bit12} and Gillis and Luce \cite{Gil18} used first-order methods 
and developed algorithms for solving their optimization models $\BRRT$ and $\GL$.
The use of first-order methods would be promising for solving $\OurModel'$ efficiently.

Regarding the bounds given 
in Theorems \ref{Thm: Robustness of refined algo} and \ref{Thm: Robustness of refined algo with postprocessing},
it remains to investigate the tightness of them.
Recently, Gillis \cite{Gil19} studied an ideal algorithm for solving separable NMF problems.
Since the computational cost grows exponentially with the problem size,
it is not realistic to apply the algorithm to large problems.
They showed that it achieves the best possible bound on the error relative to the basis.
There is a gap between our error bound 
shown for Algorithm \ref{Algo: Refined algo with postprocessing} in Theorem \ref{Thm: Robustness of refined algo with postprocessing}
and the optimal one.
It would be interesting to see whether we can reduce the gap.

\appendix
\section*{Appendix A \quad Proof of Lemma \ref{Lem: Properties of optimal solution}}

\begin{proof}[(Proof of Lemma \ref{Lem: Properties of optimal solution})]
 We prove the first inequality.
 By Lemma \ref{Lem: Relation of epsilon and optimal value}, 
\begin{align*}
 2 \epsilon \ge \OptVal =  \| \A \OptSol - \A\|_1 
 &\ge \|\A \OptSol(:,i) - \a_i \|_1 \\
 &\ge \|\A \OptSol(:,i) \|_1 - \| \a_i \|_1 \\
 &\ge \|\V \OptSol(:,i) + \N \OptSol(:,i)\|_1 - \| \a_i \|_1 \\
 &\ge \underbrace{\|\V \OptSol(:,i)\|_1}_{\mathrm{(A)}} 
 - \underbrace{\| \N \OptSol(:,i)\|_1}_{\mathrm{(B)}}
 - \underbrace{\| \a_i \|_1}_{\mathrm{(C)}}.
\end{align*}
 The term $\mathrm{(A)}$ can be rewritten as
\begin{align*}
 \mathrm{(A)} = \one^\trans \V\OptSol(:,i) = \one^\trans \OptSol(:,i) = \| \OptSol(:,i) \|_1
\end{align*}
 since $\one^\trans \V = \one$ by Assumption~\ref{Asm: Noisy separable matrix A}(a) and $\V, \OptSol \ge \zero$.
 By using Assumptions \ref{Asm: Noisy separable matrix A}(a) and \ref{Asm: Noisy separable matrix A}(b),
 we bound the terms $\mathrm{(B)}$ and $\mathrm{(C)}$ as follows:
\begin{align*}
 \mathrm{(B)} &\le \|\N\|_1 \|\OptSol(:,i)\|_1 \le \epsilon \|\OptSol(:,i)\|_1, \\
 \mathrm{(C)} &=  \| \v_i + \n_i \|_1 \le \| \v_i \|_1 + \|\n_i\|_1 \le 1 + \epsilon. 
\end{align*}
 Hence, we obtain $1 + 3\epsilon  \ge (1 - \epsilon) \|\OptSol(:,i)\|_1$, 
 which gives the first inequality of this lemma,
 since $0 \le \epsilon < 1$ by Assumption~\ref{Asm: Noisy separable matrix A}(b).
 
 Next, we prove the second inequality. 
 By Lemma \ref{Lem: Relation of epsilon and optimal value}, 
\begin{align*}
 2 \epsilon \ge \OptVal = \|\A - \A\OptSol\|_1 
 &\ge \|\a_i - \A\OptSol(:,i)\|_1  \\
 &= \|\v_i + \n_i - \V\OptSol(:,i) - \N\OptSol(:,i)\|_1  \\
 &=   \|\v_i - \V\OptSol(:,i)  + \n_i - \N\OptSol(:,i)\|_1  \\
 &\ge \|\v_i - \V\OptSol(:,i)\|_1  - \| \n_i - \N\OptSol(:,i) \|_1  \\
 &\ge \|\v_i - \V\OptSol(:,i)\|_1  - \underbrace{(\| \n_i \|_1 + \|\N\OptSol(:,i) \|_1)}_{\mathrm{(A)}}.
\end{align*}
 By using Assumption~\ref{Asm: Noisy separable matrix A}(b) and the first inequality of this lemma, 
 we bound the term $\mathrm{(A)}$ as follows:
 \begin{align*}
 \mathrm{(A)} 
 \le \| \n_i \|_1 + \|\N\|_1 \| \OptSol(:,i) \|_1
 \le \frac{2 \epsilon (1 + \epsilon)}{1 - \epsilon}.
 \end{align*}
 We then obtain the second inequality of this lemma.

\end{proof}

\section*{Appendix B \quad Proof of Lemmas \ref{Lem: Score of basis} and \ref{Lem: Score of anchors}}

We use the following lemma to prove Lemmas \ref{Lem: Score of basis} and \ref{Lem: Score of anchors}.

\begin{lem} \label{Lem: Key lemma for evaluating scores}
 Let $\A$ satisfy Assumption \ref{Asm: Noisy separable matrix A}.
 Let $\phi: R \rightarrow N$ be a map such that  $\w_j = \v_{\phi(j)}$ for each $j \in R$.
 Then, for $j \in R$ and $i = \phi(j) \in N$, we have
 \begin{align*}
  \| (1 - \eta + \tilde{\epsilon}) \w_j - \W(:, R \setminus \{j\}) \z \|_1 \le 2 \tilde{\epsilon} \quad \mbox{and} \quad
   1 - \eta + \tilde{\epsilon} \ge 0
 \end{align*}
 by letting 
 \begin{align*}
  \eta             = \H(j,:) \OptSol(:,i),  \quad 
  \z               = \H(R \setminus \{j\}, :) \OptSol(:, i) \quad \mbox{and} \quad
  \tilde{\epsilon} = \frac{4\epsilon}{1 - \epsilon}
 \end{align*}
 where $\OptSol$ is the optimal solution of problem $\OurModel(\A,r)$.
\end{lem}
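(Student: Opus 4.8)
The plan is to lean entirely on Lemma \ref{Lem: Properties of optimal solution}, which already controls both $\|\OptSol(:,i)\|_1$ and $\|\v_i - \V\OptSol(:,i)\|_1$ by $\tilde{\epsilon}$. The first move is to rewrite $\V\OptSol(:,i)$ in a form that exposes $\eta$ and $\z$. Since $\V = \W\H$, I would compute $\V\OptSol(:,i) = \W\big(\H\OptSol(:,i)\big)$ and split this matrix--vector product into the contribution of column $j$ and the rest. By definition the $j$th entry of $\H\OptSol(:,i)$ is $\eta = \H(j,:)\OptSol(:,i)$ and the remaining entries form $\z = \H(R\setminus\{j\},:)\OptSol(:,i)$, so
\begin{align*}
 \V\OptSol(:,i) = \eta\,\w_j + \W(:, R\setminus\{j\})\z.
\end{align*}
This single identity drives both claims.

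For the first inequality I would use that $i = \phi(j)$, so $\v_i = \v_{\phi(j)} = \w_j$, and invoke the second bound of Lemma \ref{Lem: Properties of optimal solution} to obtain $\|\w_j - \V\OptSol(:,i)\|_1 \le \tilde{\epsilon}$. Substituting the decomposition gives $\|(1-\eta)\w_j - \W(:, R\setminus\{j\})\z\|_1 \le \tilde{\epsilon}$. The target expression differs only by the extra term $\tilde{\epsilon}\,\w_j$, so the triangle inequality together with $\|\w_j\|_1 = 1$ from Assumption \ref{Asm: Noisy separable matrix A}(a) finishes this part:
\begin{align*}
 \|(1-\eta+\tilde{\epsilon})\w_j - \W(:, R\setminus\{j\})\z\|_1 \le \tilde{\epsilon} + \tilde{\epsilon}\,\|\w_j\|_1 = 2\tilde{\epsilon}.
\end{align*}

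For the nonnegativity claim $1 - \eta + \tilde{\epsilon} \ge 0$, equivalently $\eta \le 1 + \tilde{\epsilon}$, I would apply $\one^\trans$ to the same decomposition. Because every column of $\V$ and of $\W$ has unit $L_1$ norm while $\V, \W, \H, \OptSol$ are all nonnegative, we have $\one^\trans\V = \one^\trans$ and $\one^\trans\W = \one^\trans$, which turns the vector identity into the scalar relation $\|\OptSol(:,i)\|_1 = \eta + \|\z\|_1$. Since $\z \ge \zero$ gives $\|\z\|_1 \ge 0$, and the first bound of Lemma \ref{Lem: Properties of optimal solution} gives $\|\OptSol(:,i)\|_1 \le 1 + \tilde{\epsilon}$, I conclude $\eta \le 1 + \tilde{\epsilon}$.

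The computation is short and I do not expect a serious obstacle once the decomposition is in place. The only point needing care is the bookkeeping of the $L_1$ normalizations: it is precisely $\one^\trans\V = \one^\trans\W = \one^\trans$ that converts the vector identity into $\|\OptSol(:,i)\|_1 = \eta + \|\z\|_1$, and precisely $\|\w_j\|_1 = 1$ that absorbs the extra $\tilde{\epsilon}\,\w_j$ term into the bound $2\tilde{\epsilon}$. Both facts come only from Assumption \ref{Asm: Noisy separable matrix A}(a) and nonnegativity, so the argument stays self-contained.
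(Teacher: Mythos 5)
Your proposal is correct and follows essentially the same route as the paper: the same decomposition $\V\OptSol(:,i)=\eta\,\w_j+\W(:,R\setminus\{j\})\z$, the same appeal to Lemma \ref{Lem: Properties of optimal solution}, and the same triangle-inequality step absorbing $\tilde{\epsilon}\,\w_j$ via $\|\w_j\|_1=1$. The only cosmetic difference is in the nonnegativity claim, where you derive $\eta\le\|\OptSol(:,i)\|_1$ from $\one^\trans\H=\one^\trans$ and $\|\z\|_1\ge 0$ while the paper uses that every entry of $\H$ is at most $1$; both are the same consequence of Assumption \ref{Asm: Noisy separable matrix A}(a) and nonnegativity.
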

\begin{proof}
 Let $j \in R$ and $i = \phi(j) \in N$.
 Lemma \ref{Lem: Properties of optimal solution} tells us that 
 \begin{align*}
  \tilde{\epsilon} \ge \|\v_i - \V \OptSol(:, i) \|_1  
  &= \|\v_i - \W\H \OptSol(:, i) \|_1 \\
  &= \|\v_{\phi(j)} - \W\H \OptSol(:, i) \|_1 \\
  &= \|\w_j - \underbrace{\W\H \OptSol(:, i)}_{\mathrm{(A)}} \|_1.
 \end{align*}
 Since 
 \begin{align*}
  \W\H  = \w_1 \H(1, :) + \cdots + \w_r \H(r, :)  = \w_j \H(j,:) + \W(:, R \setminus \{j\})  \H(R \setminus \{j\}, :)  
 \end{align*}
 the term $\mathrm{(A)}$ is rewritten as 
 \begin{align*}
  \mathrm{(A)} = \eta \cdot \w_j + \W(:, R \setminus \{j\}) \z.
 \end{align*}
 by letting 
 \begin{align*}
  \eta = \H(j, :)\OptSol(:, i) \in \Real 
  \quad \mbox{and} \quad 
  \z = \H(R \setminus \{j\}, :)\OptSol(:, i) \in \Real^{r-1}.
 \end{align*}
 Accordingly,
 \begin{align*}
  \tilde{\epsilon}  
  &\ge \|(1-\eta)\w_j - \W(:, R \setminus \{j\})\z \|_1 
  & \\
  &= \|(1-\eta+\tilde{\epsilon})\w_j - \W(:, R \setminus \{j\})\z - \tilde{\epsilon}\w_j\|_1 
  & \\
  &\ge \|(1-\eta+\tilde{\epsilon})\w_j - \W(:, R \setminus \{j\})\z \|_1 - \tilde{\epsilon} \|\w_j\|_1  
  & \\
  &= \|(1-\eta+\tilde{\epsilon})\w_j - \W(:, R \setminus \{j\})\z \|_1 - \tilde{\epsilon}    
  & \text{(by Assumption~\ref{Asm: Noisy separable matrix A}(a)).}
 \end{align*}
 Note that $\|\tilde{\epsilon} \w_j\|_1 = \tilde{\epsilon}\|\w_j\|_1$ holds
 since $\tilde{\epsilon} = \frac{4\epsilon}{1 - \epsilon} \ge 0$ by Assumption~\ref{Asm: Noisy separable matrix A}(b).
 We thus obtain 
 $\| (1 - \eta + \tilde{\epsilon}) \w_j - \W(:, R \setminus \{j\}) \z \|_1 \le 2 \tilde{\epsilon}$
 for $\eta$ and $\z$ defined above. 
 Moreover, considering that all elements of $\H$ are less than or equal to $1$ 
 since Assumption~\ref{Asm: Noisy separable matrix A}(a) holds and
 $\H \ge \zero$, we have 
 \begin{align*}
  \eta = \H(j, :)\OptSol(:, i)    
  &\le \one^{\trans} \OptSol(:,i)  & \\
  &= \|\OptSol(:,i)\|_1            & \text{(by $\OptSol \ge \zero$)} \\
  &\le 1 + \tilde{\epsilon}        & \text{(by Lemma \ref{Lem: Properties of optimal solution}).}
 \end{align*}
 This gives $1 - \eta + \tilde{\epsilon} \ge 0$.
\end{proof}

We are now able to prove Lemma \ref{Lem: Score of basis}.
\begin{proof}[(Proof of Lemma \ref{Lem: Score of basis})]
 Since we put Assumption \ref{Asm: Noisy separable matrix A} on $\A$,
 it can be written as $\A = \V + \N \in \Real^{d \times n}$ for $\V \in \Real_+^{d \times n}$ and $\N \in \Real^{d \times n}$.
 Since $\V$ is $r$-separable of the form $\V = \W\H = \W [\I, \bar{\H}]\Pib$ shown in (\ref{Exp: SepNMF}),
 there is a map $\phi: R \rightarrow N$ such that $\w_j = \v_{\phi(j)}$ for each $j \in R$.
 Hence, the basis index $I$ of $\V$ is given as $I = \{  \phi(1), \ldots, \phi(j)\}$.
 Let $i = \phi(j)$ for $j \in R$.
 Lemma \ref{Lem: Key lemma for evaluating scores} tells us that 
 \begin{align*}
  \| (1 - \eta + \tilde{\epsilon}) \w_j - \W(:, R \setminus \{j\}) \z \|_1 \le 2 \tilde{\epsilon} \quad \mbox{and} \quad
  1 - \eta + \tilde{\epsilon} \ge 0
 \end{align*}
 hold for 
 \begin{align*}
  \eta             = \H(j,:) \OptSol(:,i),  \quad 
  \z               = \H(R \setminus \{j\}, :) \OptSol(:, i) \quad \mbox{and} \quad
  \tilde{\epsilon} = \frac{4\epsilon}{1 - \epsilon}.
 \end{align*}

 First,  consider the case where $1 - \eta + \tilde{\epsilon} > 0$. We find that
 \begin{align*}
  2\tilde{\epsilon} 
  &\ge \| (1 - \eta + \tilde{\epsilon}) \w_j - \W(:, R \setminus \{j\}) \z \|_1  \\
  &= (1 - \eta + \tilde{\epsilon}) \| \w_j - \W(:, R \setminus \{j\}) \z' \|_1   
  & \text{(by letting $\z' = \z / (1 - \eta + \tilde{\epsilon})$)} \\
  &\ge (1 - \eta + \tilde{\epsilon})\kappa & \text{(by the definition of $\kappa$).}
 \end{align*} 
 Note that $\z' \ge 0$ since $\z = \H(R \setminus \{j\}, :) \OptSol(:, i) \ge 0$ and $1 - \eta + \tilde{\epsilon} > 0$.
 Accordingly, we obtain a lower bound on $\eta$,
 \begin{align} \label{Exp: Lower bound on eta}
  \eta \ge 1 + \frac{(\kappa - 2) \tilde{\epsilon}}{\kappa}.
 \end{align}
 We can upper bound $\eta$ using $\p(i)$.
 Since $\w_j = \v_{\phi(j)}$ and $i = \phi(j)$, we have $\H(:,i) = \e_j$, and thus $\H(j,i) = 1$.
 In light of this, we rewrite $\eta$ as
 \begin{align*}
  \eta = \H(j,:) \OptSol(:,i) 
  &= \OptSol(i,i) + \H(j, N \setminus \{i\}) \OptSol(N \setminus \{i\}, i) \\
  &= \p(i) + \underbrace{\H(j, N \setminus \{i\}) \OptSol(N \setminus \{i\}, i)}_{\mathrm{(A)}}
 \end{align*}
 and bound the term $(\mathrm{A})$ as follows:
 \begin{align*}
 \mathrm{(A)} 
  &\le \beta \cdot \one^{\trans} \OptSol(N \setminus \{i\}, i) & \text{(by the definition of $\beta$)} \\
  &= \beta (\| \OptSol(:,i) \|_1 - \OptSol(i,i) )              & \text{(by $\OptSol \ge \zero$)} \\
  &\le \beta (1 + \tilde{\epsilon} - \p(i) )                   & \text{(by Lemma \ref{Lem: Properties of optimal solution}).}
 \end{align*}
 We thus obtain an upper bound on $\eta$,
 \begin{align} \label{Exp: Upper bound on eta using p}
  \eta \le (1 - \beta) \p(i) + \beta(1+\tilde{\epsilon}).
 \end{align}
 The bounds (\ref{Exp: Lower bound on eta})  and (\ref{Exp: Upper bound on eta using p}) yield
 \begin{align*}
  1 + \frac{(\kappa - 2) \tilde{\epsilon}}{\kappa} \le (1 - \beta) \p(i) + \beta(1+\tilde{\epsilon})
  \equivSym 
  \p(i) \ge 1 + \tilde{\epsilon} - \frac{2 \tilde{\epsilon}}{\kappa(1-\beta)}.
 \end{align*}
 Assumption \ref{Asm: Noisy separable matrix A}(b) implies $\tilde{\epsilon} = 4 \epsilon / (1 - \epsilon) \ge 0$.
 Recall that $i = \phi(j)$ for $j \in R$ and $I = \{\phi(1), \ldots, \phi(r)\}$.
 Hence, from the inequality above, we obtain 
 $\p(i) \ge 1 - \frac{8 \epsilon}{\kappa(1-\beta)(1-\epsilon)}$ for every $i \in I$.

 Next, consider the case where $1-\eta + \tilde{\epsilon} = 0$.
 By inequality (\ref{Exp: Upper bound on eta using p}), we have
 \begin{align*}
  1 + \tilde{\epsilon} = \eta \le (1 - \beta) \p(i) + \beta ( 1 + \tilde{\epsilon}),
 \end{align*}
 which gives $\p(i) \ge 1 + \tilde{\epsilon}$.
 Here, $\tilde{\epsilon} \ge 0$ and $\frac{8 \epsilon}{\kappa(1-\beta)(1-\epsilon)} \ge 0$
 by Assumption \ref{Asm: Noisy separable matrix A}(b), $\kappa > 0$ and $\beta < 1$.
 We thus obtain $\p(i) \ge 1 - \frac{8 \epsilon}{\kappa(1-\beta)(1-\epsilon)}$ for every $i \in I$.
\end{proof}

\begin{remark} \label{Remark: Observation}
 In the proof above, to find a lower bound on $\eta$, 
 we have used the observation that $1 - \eta + \tilde{\epsilon}$ is positive or zero,
 which is not taken into account in the proof of Lemma 2.2 of \cite{Gil13}.
\end{remark}

Let us move on to prove Lemma \ref{Lem: Score of anchors}.
To do so, we prove the following lemma.
\begin{lem} \label{Lem: Upper bound on H(j,T_j^c)}
 Let $\A$ satisfy Assumption~\ref{Asm: Noisy separable matrix A}.
 Let $T_j$ be an anchor with parameter $\mu$ satisfying $\epsilon \le \mu$.
 Then, for $j \in R$, we have
 \begin{align*}
  \max_{u \in T_j^c} \H(j,u) < 1 - \frac{\mu}{2}.
 \end{align*}
\end{lem}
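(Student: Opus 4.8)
The plan is to prove a single quantitative bound valid for every column and then specialize it to $T_j^c$. Concretely, I would show that for \emph{every} $u \in N$ one has $\|\a_u - \w_j\|_1 \le 2\bigl(1 - \H(j,u)\bigr) + \epsilon$; the lemma then falls out by combining this with the defining inequality of $T_j^c$ and the hypothesis $\epsilon \le \mu$.

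First I would expose the separable structure of the $u$th column. Writing $\a_u = \v_u + \n_u = \W\H(:,u) + \n_u$ and setting $h = \H(j,u)$ together with $\z = \H(R \setminus \{j\}, u)$, the exact factorization gives $\v_u = h\,\w_j + \W(:, R \setminus \{j\})\z$. Here Assumption~\ref{Asm: Noisy separable matrix A}(a) enters twice: since $\H(:,u)$ has unit $L_1$ norm and $\H \ge \zero$, the entries of column $u$ sum to $1$, so $\|\z\|_1 = 1 - h$; and each basis column satisfies $\|\w_k\|_1 = 1$.

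Second, I would estimate the noiseless deviation. Writing $\v_u - \w_j = -(1-h)\w_j + \W(:, R \setminus \{j\})\z$ and applying the triangle inequality,
\begin{align*}
 \|\v_u - \w_j\|_1 \le (1-h)\|\w_j\|_1 + \|\W(:, R \setminus \{j\})\z\|_1 \le (1-h) + \|\z\|_1 = 2(1-h),
\end{align*}
where the middle step uses $\|\w_k\|_1 = 1$ for all $k$ and $\z \ge \zero$. Adding the noise term via $\|\n_u\|_1 \le \|\N\|_1 \le \epsilon$ then yields $\|\a_u - \w_j\|_1 \le 2(1-h) + \epsilon$, which is the promised bound.

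Finally I would specialize to $u \in T_j^c$, where the anchor definition~(\ref{Exp: anchor}) gives $\|\a_u - \w_j\|_1 > 2\mu$. Combining this with the bound above yields $2\mu < 2\bigl(1 - \H(j,u)\bigr) + \epsilon$, and invoking $\epsilon \le \mu$ reduces it to $\mu < 2\bigl(1 - \H(j,u)\bigr)$, i.e.\ $\H(j,u) < 1 - \frac{\mu}{2}$. Since $T_j^c \subset N$ is finite, the maximum is attained at some index and hence inherits the strict inequality, giving the claim. The only genuine step is recognizing the clean estimate $\|\v_u - \w_j\|_1 \le 2(1-h)$ made possible by the unit-$L_1$-norm normalization; everything else is routine triangle-inequality bookkeeping, so I do not expect a real obstacle here.
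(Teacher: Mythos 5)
Your proposal is correct and follows essentially the same route as the paper's proof: both isolate the noise term via $\|\n_u\|_1 \le \epsilon \le \mu$, and both bound the noiseless deviation by $\|\v_u - \w_j\|_1 \le 2\bigl(1 - \H(j,u)\bigr)$ using the unit $L_1$ normalization of the columns of $\W$ and $\H$. The only difference is cosmetic ordering of the two estimates, so there is nothing further to add.
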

\begin{proof}
 For any $u \in T_j^c$, 
 \begin{align*}
  2\mu < \|\w_j - \a_u \|_1 
  &= \|\w_j - \W\H(:,u) - \n_u \|_1  \\
  &\le \|\w_j - \W\H(:,u) \|_1 + \|\n_u\|_1 \\
  &\le \|\w_j - \W\H(:,u) \|_1 + \epsilon \\
  &\le \|\w_j - \W\H(:,u) \|_1 + \mu.
 \end{align*}
 Hence, $\|\w_j - \W\H(:,u) \|_1 > \mu$ holds. 
 Furthermore, 
 \begin{align*}
  \mu 
  &< \|\w_j - \W\H(:,u) \|_1 \\
  &= \|\w_j - \w_j\H(j,u) - \W(:, R \setminus \{j\}) \H(R \setminus \{j\},u)  \|_1 \\
  &\le (1 - \H(j,u)) \|\w_j \|_1 + \| \W(:, R \setminus \{j\} \|_1 \| \H(R \setminus \{j\},u)  \|_1 \\
  &= 1 - \H(j,u) + \| \H(R \setminus \{j\},u)  \|_1  & \text{(by Assumption~\ref{Asm: Noisy separable matrix A}(a))} \\
  &= 1 - 2\H(j,u) + \| \H(:, u)  \|_1 & \text{(by $\H \ge \zero$)} \\ 
  &= 2 - 2\H(j,u) & \text{(by Assumption~\ref{Asm: Noisy separable matrix A}(a))}.
 \end{align*}
 Note that $\|(1 - \H(j,u))\w_j\|_1 = (1 - \H(j,u))\|\w_j\|_1$ holds since $1 - \H(j,u) \ge 0$ 
 by Assumption~\ref{Asm: Noisy separable matrix A}(a) and $\H \ge \zero$.
 It follows from the inequality above that $\max_{u \in T_j^c} \H(j,u) < 1 - \frac{\mu}{2}$ holds.
\end{proof}

In light of this, 
we can prove Lemma \ref{Lem: Score of anchors} in a similar way as Lemma \ref{Lem: Score of basis}.
The proof is almost the same, except the evaluation of the upper bound on $\eta$.

\begin{proof}[(Proof of Lemma \ref{Lem: Score of anchors})]
 We use Lemma \ref{Lem: Key lemma for evaluating scores}.
 Let $\phi: R \rightarrow N$ be a map such that $\w_j = \v_{\phi(j)}$ for each $j \in R$.
 The lemma tells us that, for $j \in R$ and $i = \phi(j) \in N$,
 we have
 \begin{align*}
  \| (1 - \eta + \tilde{\epsilon}) \w_j - \W(:, R \setminus \{j\}) \z \|_1 \le 2 \tilde{\epsilon} \quad \mbox{and} \quad
  1 - \eta + \tilde{\epsilon} \ge 0
 \end{align*}
 where $\eta$, $\z$ and $\tilde{\epsilon}$ are as shown in the lemma.

 First, consider the case where $1 - \eta + \tilde{\epsilon} > 0$. 
 As in the proof of Lemma \ref{Lem: Score of basis},
 we have $2 \tilde{\epsilon} \ge \| (1 - \eta + \tilde{\epsilon}) \w_j - \W(:, R \setminus \{j\}) \z \|_1 \ge (1 - \eta + \tilde{\epsilon}) \kappa$,
 which gives  a lower bound on $\eta$, as shown in (\ref{Exp: Lower bound on eta}).
 We can upper bound $\eta$  using $\SCORE(T_j,\p)$.  
 Write $\eta$ as 
 \begin{align*}
  \eta = \H(j,:) \OptSol(:,i) = \underbrace{\H(j,T_j) \OptSol(T_j,i)}_{\mathrm{(A)}} 
  + \underbrace{\H(j,T_j^c) \OptSol(T_j^c,i)}_{\mathrm{(B)}}.
 \end{align*}
 The term $\mathrm{(A)}$ is bounded as follows: 
 \begin{align*}
  \mathrm{(A)} 
  &\le \one^{\trans} \OptSol(T_j, i)  
  & \text{(by Assumption~\ref{Asm: Noisy separable matrix A}(a) and $\H \ge \zero$)} \\
  &= \| \OptSol(T_j,i) \|_1           
  & \text{(by $\OptSol \ge \zero$)}.
 \end{align*}
 The term $\mathrm{(B)}$ is bounded as follows:
 \begin{align*}
  \mathrm{(B)} 
  &< \left( 1 - \frac{\mu}{2} \right) \one^{\trans} \OptSol(T_j^c,i) 
  & \text{(by Lemma \ref{Lem: Upper bound on H(j,T_j^c)})} \\
  &= \left( 1 - \frac{\mu}{2} \right) \| \OptSol(T_j^c ,i) \|_1 
  & \text{(by $\OptSol \ge \zero$)} \\
  &= \left( 1 - \frac{\mu}{2} \right) ( \| \OptSol(: ,i) \|_1 - \| \OptSol(T_j ,i) \|_1 ) 
  & \\
  &\le  \left( 1 - \frac{\mu}{2} \right) ( 1 + \tilde{\epsilon} - \| \OptSol(T_j ,i) \|_1 ) 
  & \text{(by Lemma \ref{Lem: Properties of optimal solution})}.
 \end{align*}
 We then find that
 \begin{align*}
  \eta 
  < \| \OptSol(T_j,i) \|_1 + \left( 1 - \frac{\mu}{2} \right) ( 1 + \tilde{\epsilon} - \| \OptSol(T_j ,i) \|_1 ) 
  =  \frac{\mu}{2}\| \OptSol(T_j,i) \|_1 + \left( 1 - \frac{\mu}{2} \right) ( 1 + \tilde{\epsilon}).
 \end{align*}
 Here, 
 \begin{align*}
  \| \OptSol(T_j ,i) \|_1 = \sum_{u \in T_j} \OptSol(u,i) \le \sum_{u \in T_j} \OptSol(u,u) = \SCORE(T_j, \p).
 \end{align*}
 since  $\OptSol(u,i) \le \OptSol(u,u)$ and $\OptSol \ge \zero$ by the third and fourth constraints of problem $\OurModel$.
 Accordingly, we obtain
 \begin{align} \label{Exp: Upper bound on eta using score}
  \eta < \frac{\mu}{2} \SCORE(T_j, \p) + \left( 1 - \frac{\mu}{2} \right) ( 1 + \tilde{\epsilon}). 
 \end{align}
 The bounds (\ref{Exp: Lower bound on eta})  and (\ref{Exp: Upper bound on eta using score}) yield
 \begin{align*}
  1 + \frac{(\kappa - 2) \tilde{\epsilon}}{\kappa} 
  \le \frac{\mu}{2} \SCORE(T_j, \p) + \left( 1 - \frac{\mu}{2} \right) ( 1 + \tilde{\epsilon})
  \equivSym 
  \SCORE(T_j,\p) \ge 1 + \tilde{\epsilon} - \frac{4 \tilde{\epsilon}}{\kappa \mu}. 
 \end{align*}
 Here, $\tilde{\epsilon} \ge 0$.
 We thus obtain $\SCORE(T_j,\p) \ge 1 - \frac{16 \epsilon}{\kappa \mu (1-\epsilon)}$ for every $j \in R$.

 Next, consider the case where $1-\eta + \tilde{\epsilon} = 0$.
 By inequality (\ref{Exp: Upper bound on eta using score}), we have  
 \begin{align*}
  1 + \tilde{\epsilon} = \eta < \frac{\mu}{2} \SCORE(T_j, \p) + \left( 1 - \frac{\mu}{2} \right) ( 1 + \tilde{\epsilon}),
 \end{align*}
 which gives $\SCORE(T_j, \p) > 1 + \tilde{\epsilon}$.
 Here, $\tilde{\epsilon} \ge 0$ and $\frac{16 \epsilon}{\kappa \mu (1-\epsilon)} \ge 0$.
 We thus obtain 
 $\SCORE(T_j,\p) \ge 1 - \frac{16 \epsilon}{\kappa \mu (1-\epsilon)}$ for every $j \in R$.

\end{proof}

\section*{Appendix C \quad Proof of Corollary \ref{Cor: Score of anchors if amount of noise is small}}
\begin{proof}[(Proof of Corollary \ref{Cor: Score of anchors if amount of noise is small})]
 Since Assumption \ref{Asm: Noisy separable matrix A}(a) holds,
 we have the bounds  $0 \le \kappa \le 1$ and $0 \le \omega \le 2$ shown in 
 (\ref{Exp: Bounds on kappa}) and (\ref{Exp: Bounds on omega}).
 Also, since Assumption \ref{Asm: Noisy separable matrix A}(b) holds,
 we have $\epsilon \ge 0$. Hence, the bounds imposed on $\epsilon$ in the two cases imply $\kappa > 0$.
 Accordingly, $\kappa$ and $\omega$ satisfy $0 < \kappa \le 1$ and $0 \le \omega \le 2$.
 In addition,  they satisfy the relation $\kappa \le \omega$ shown in (\ref{Exp: kappa is less than omega}).

 \fbox{Former case} \
 (a) We only have to prove $\epsilon < 1$ since $\epsilon \ge 0$ by Assumption 1(b).
 The bounds $\kappa \le 1$ and $\omega \le 2$ imply
 \begin{align} \label{Exp: Bound on epsilon in case 1}
  \epsilon < \frac{\omega \kappa}{578(r+1)} \le \frac{1}{578}.
\end{align}
 Hence, $\epsilon$ satisfies $\epsilon < 1$. 
 (b) Since $r, \kappa, \xi > 0$ and $\epsilon \ge 0$,  we have
 \begin{align*}
  \mu = \frac{17(r+1)\epsilon}{\kappa} + \xi  > 0.
 \end{align*}
 By using the bound on $\epsilon$, we can put a bound on $\mu$:
 \begin{align*}
  \mu = \frac{17(r+1)\epsilon}{\kappa} + \xi  < \frac{\omega}{34} + \xi.
 \end{align*}
 Since $\xi < \kappa / 35$ and $\kappa \le \omega$, we have
 \begin{align*}
  \mu < \frac{\omega}{34} + \frac{\kappa}{35} \le \frac{\omega}{17}.
 \end{align*}
 Hence, $0 < \mu  < \omega / 17$ holds.
 (c) Since $\xi > 0$ and $\kappa \le 1$, we have 
 \begin{align*}
  \mu = \frac{17(r+1)\epsilon}{\kappa} + \xi  > \frac{17(r+1)\epsilon}{\kappa} \ge 34 \epsilon.
 \end{align*}
 Thus, $\epsilon \le \mu$ holds.
 (d) The corollary satisfies the hypotheses of Lemma \ref{Lem: Score of anchors}.
 This is because Assumption 1(b) is not violated by the bound on $\epsilon$ that we put in part (a);
 $\kappa > 0$ holds, as explained at the beginning of the proof;
 and $\mu \neq 0$ and $\epsilon \le \mu$ hold, as shown in parts (b) and (c).
 Accordingly,
 \begin{align*}
  \SCORE(T_j, \p) \ge 1 - \frac{16 \epsilon}{\kappa \mu (1 - \epsilon)}
 \end{align*}
 holds for every $j \in R$.
 If $\epsilon = 0$, then, $\SCORE(T_j, \p) \ge 1 > r / (r+1)$.
 We thus assume $\epsilon > 0$.
 The bound on $\epsilon$ in (\ref{Exp: Bound on epsilon in case 1}) implies $\epsilon < 1 / 17$
 and we have 
 \begin{align*}
  \epsilon < \frac{1}{17} \equivSym \frac{16}{1-\epsilon} < 17.
 \end{align*}
 Write the value of $\mu$ as $\mu = \lambda + \xi$ by letting $\lambda = 17(r+1)\epsilon / \kappa$.
 We find that 
\begin{align*}
  \SCORE(T_j, \p) 
 \ge 1 - \frac{16 \epsilon}{\kappa \mu (1 - \epsilon)}  
 >  1 - \frac{17 \epsilon}{\kappa \mu}    
 > 1 - \frac{17 \epsilon}{\kappa \lambda} 
 =   \frac{r}{r+1}
 \end{align*}
 where the third inequality uses $\mu = \lambda + \xi$ and $\lambda, \xi > 0$, and 
 the equality uses  $\lambda = 17(r+1)\epsilon / \kappa$.

 \fbox{Latter case} \
 (a) As mentioned in the former case, we only have to prove $\epsilon < 1$.
 From the bound $\kappa \le 1$ and $289 = 17^2$, we obtain a bound on $\epsilon$, 
 \begin{align} \label{Exp: Bound on epsilon in case 2}
 \epsilon <  \frac{\kappa^2}{17^2(r+1)^2}  \le \left( \frac{1}{34} \right)^2.
 \end{align}
 Hence, $\epsilon$ satisfies $\epsilon < 1$.
 (b) Since $\xi > 0$ and $\epsilon \ge 0$, we have $\mu = \sqrt{\epsilon} + \xi > 0$.
 Here, $\xi$ satisfies $\xi < \kappa / 35 < \kappa / 34$,
 and we have $\kappa \le \omega$.
 Hence, using the bound on $\epsilon$, we obtain a bound on $\mu$,
\begin{align*}
 \mu = \sqrt{\epsilon} + \xi < \frac{\kappa}{17(r+1)} + \xi < \frac{\kappa}{17} \le \frac{\omega}{17}.
\end{align*}
 Hence, $0 < \mu < \omega/17$ holds.
 (c) Two functions $f_1(x) = x$ and $f_2(x) = \sqrt{x}$ satisfy $f_1(x) \le f_2(x)$ for $0 \le x \le 1$.
 Since $\xi$ satisfies $\xi > 0$ and $\epsilon$ satisfies $0 \le \epsilon < 1$ as shown in part (a),
 we have $\epsilon \le \mu = \sqrt{\epsilon} + \xi$. 
 (d) The bound on $\epsilon$ in (\ref{Exp: Bound on epsilon in case 2}) implies $\epsilon < 1/17$.
 We can thus prove this part in the same way as part (d) of the former case.

\end{proof}

\section*{Acknowledgments}
The author would like thank Nicolas Gillis of 
University of Mons who provided feedback on this manuscript, 
and thank the anonymous referees  for careful reading and helpful comments 
that enhanced the quality of this paper significantly.
This research was supported by the Japan Society for 
the Promotion of Science (JSPS KAKENHI Grant Number 20K11951).

\bibliographystyle{abbrv}
\bibliography{reference}

\end{document}